\newtheorem{theorem}{Theorem}[section]
\newtheorem{proposition}[theorem]{Proposition}
\newtheorem{lemma}[theorem]{Lemma}
\newtheorem{definition}[theorem]{Definition}
\newtheorem{assumption}[theorem]{Assumption}
\newtheorem{remark}[theorem]{Remark}
\newtheorem{example}[theorem]{Example}
\newcommand{\Alcomment}[1]{{\color{cyan} #1}}
\title{Learning Discrete Latent Variable Structures with Tensor Rank Conditions}
\author{%
  Zhengming Chen$^{1,2}$, Ruichu Cai$^{1,*}$, Feng Xie$^{3}$, Jie Qiao$^{1}$,  \\
  \textbf{Anpeng Wu$^{2,4}$, Zijian Li$^{2}$, Zhifeng Hao$^{1}$, Kun Zhang$^{2,5,}$}\thanks{Corresponding author} \\
  1. School of Computer Science, Guangdong University of Technology\\
  2. Machine Learning Department, Mohamed bin Zayed University of Artificial Intelligence \\
  3. Department of Applied Statistics, Beijing Technology and Business University \\
  4. Department of Computer Science and Technology, Zhejiang University\\
  5. Department of Philosophy, Carnegie Mellon University\\
}
\begin{document}

\maketitle

\begin{abstract}

Unobserved discrete data are ubiquitous in many scientific disciplines, and how to learn the causal structure of these latent variables is crucial for uncovering data patterns. Most studies focus on the linear latent variable model or impose strict constraints on latent structures, which fail to address cases in discrete data involving non-linear relationships or complex latent structures. To achieve this, we explore a tensor rank condition on contingency tables for an observed variable set $\mathbf{X}_p$, showing that the rank is determined by the minimum support of a specific conditional set (not necessary in $\mathbf{X}_p$) that d-separates all variables in $\mathbf{X}_p$. By this, one can locate the latent variable through probing the rank on different observed variables set, and further identify the latent causal structure under some structure assumptions. We present the corresponding identification algorithm and conduct simulated experiments to verify the effectiveness of our method. In general, our results elegantly extend the identification boundary for causal discovery with discrete latent variables and expand the application scope of causal discovery with latent variables.

\end{abstract}

\section{Introduction}
Social scientists, psychologists, and researchers from various disciplines are often interested in understanding causal relationships between the latent variables that cannot be measured directly, such as depression, coping, and stress \citep{Silva-linearlvModel}. A common approach to grasp these latent concepts is to construct a measurement model. For instance, experts design a set of measurable items or survey questions that serve as indicators of the latent variable and then use them to infer causal relationships among latent variables \citep{bollen2002latent,bartholomew2011latent,cui2018learning}. 

Numerous approaches exist for addressing structure learning among latent variables. In particular, if the data generation process is assumed to be a linear relationship, known as \textit{linear latent variable models}, several approaches have been developed. These include the second-order statistic-based approaches \citep{Silva-linearlvModel,Kummerfeld2016,chen2024testing,Sullivant-T-separation}, high-order moments-based ones \citep{xie2020GIN,chen2022identification,cai2019triad, Adam21_identifiability}, matrix decomposition-based methods \citep{anandkumar2013learning,anandkumar2014tensor,anandkumar2015learning}, and copula model-based approaches \citep{cui2018learning}. Moreover, the hierarchical latent variable structure has been well-studied within the linear setting \citep{huang2022latent,xie2022identification,chen2023some,jin2023structural}. However, the linear assumption is rather restrictive and the discrete data in the real world could be more frequently encountered (e.g., responses from psychological and educational assessments or social science surveys \citep{eysenck2021junior,skinner2019analysis}), which does not satisfy the linear assumption.

\begin{figure*}[ht]
    \centering
    \begin{subfigure}[t]{0.5\textwidth}
        \centering
        \includegraphics[height=2.0in]{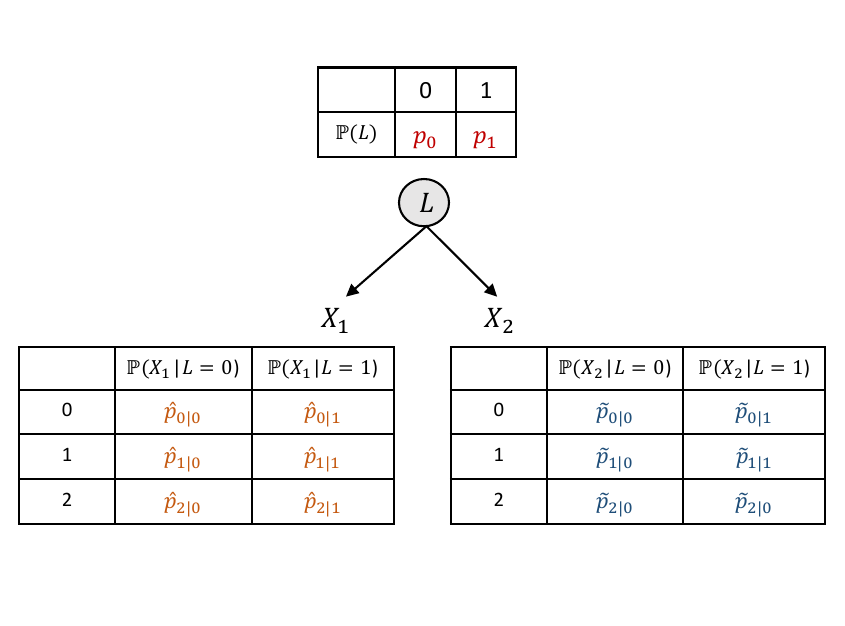}
        \caption{A latent structure with conditional probability tables.}
    \end{subfigure}%
    ~ 
    \begin{subfigure}[t]{0.5\textwidth}
        \centering
        \includegraphics[height=2in]{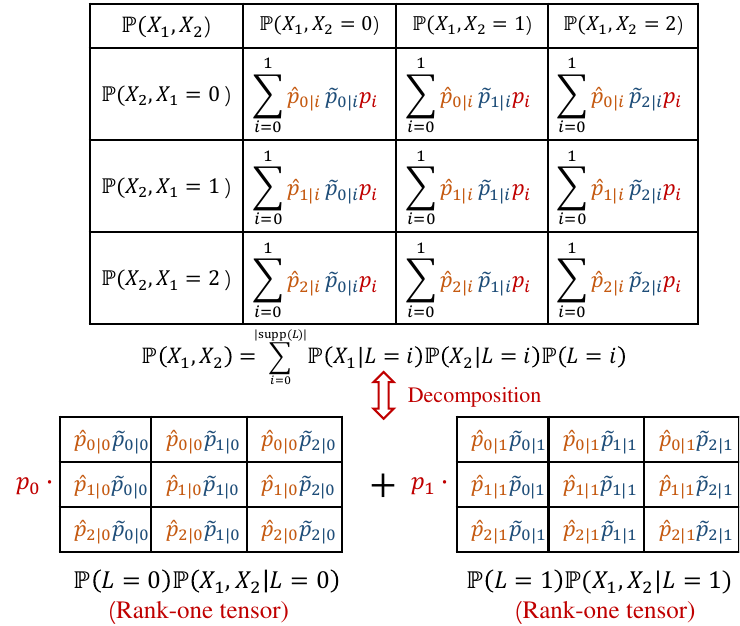}
        \caption{The decomposition of the joint distribution.}
    \end{subfigure}
    \caption{Illustrating for the graphical criteria of tensor rank condition such that a rank of the joint distribution is determined by the support of a specific conditional set that d-separates all observed variables, i.e., $\mathrm{Rank}(\mathbb{P}(X_1, X_2)) = |\mathrm{supp}(L)| = 2$.}
    \label{fig:exp}
\end{figure*}


When the data generation process is discrete, however, due to the challenging nonlinear transition relationship in discrete data, few identifiability results exist and are mostly only applicable in strict cases. In particular, under some prespecified structure, the identifiability of parameters is established, such as in the hidden Markov model(HMM) \citep{anandkumar2012method} model, topic models \citep{anandkumar2014tensor}, and multiview mixtures model \citep{anandkumar2015learning}. By further specifying the latent variable structure as a tree, \cite{wang2017spectral,song2013hierarchical} show that the structural model is identifiable. Recently, \cite{gu2022blessing,gu2023bayesian} further considered the identifiability of pyramid structure under the condition that each latent variable has at least three observed children. However, challenges persist in extending identifiability to more general structures among discrete latent variables. Existing approaches, unfortunately, cannot identify the causal structure of latent variables as shown in Fig. \ref{fig:illust_simple_example}(a).


In this paper, we seek to find out a general identification criteria to identify the discrete latent structure in the case where the structure is not limited to a tree-structured graph or pyramid structure. To achieve this, 
we explore a tensor rank condition on the contingency tables for an observed variable set $\mathbf{X}_p$, to probe the latent causal structure from observed data. Interestingly, as shown in Fig. \ref{fig:exp}, we found that the rank of the contingency tables of the joint distribution $\mathbb{P}(X_1,X_2)$ is deeply connected to the support of a variable $L$ (not necessary among $X_1, X_2$) that d-separate $X_1$ and $X_2$. By this observation, we first develop a general tensor rank condition for the discrete causal model and show that such a rank is determined by the minimal support of a specific conditional set (not necessary in $\mathbf{X}_p$) that d-separates all variables in $\mathbf{X}_p$.
Such findings intrigue the possibility to identify the discrete latent variables structure. We further propose a discrete latent structure model that accommodates more general latent structures and shows that the discrete latent variable structure can be identified locally and iteratively through tensor rank conditions. Subsequently, we present an identification algorithm to complete the identifiability of discrete latent structure models, including the measurement model and the structure model. We theoretically show that under proper causal assumptions, such as faithfulness and the Markov assumption, the measurement model is fully identifiable and the structure model can be identified up to a Markov equivalence class.

The contributions of this work are three-fold. (1) We first establish a connection between the
tensor rank condition and the graphical patterns in a general discrete causal model, including specific d-separation relations. (2) We then exploit the tensor rank condition to learn the discrete latent variable model, allowing flexible relations between latent variables. (3) We present a structure learning algorithm using tensor rank conditions and demonstrate the effectiveness of the proposed algorithm through simulation studies.

\begin{figure}[t]
	\begin{center}
		\begin{tikzpicture}[scale=.8, line width=0.5pt, inner sep=0.2mm, shorten >=.1pt, shorten <=.1pt]
		\draw (1.75, 0.8) node(L1) [circle, fill=gray!50,minimum size=0.3cm, draw] {{\footnotesize\,$L_{1}$\,}};

		\draw (0.5, 0) node(L2) [circle, fill=gray!50,minimum size=0.3cm, draw] {{\footnotesize\,$L_2$\,}};
		
		\draw (3, 0) node(L3) [circle, fill=gray!50,minimum size=0.3cm, draw] {{\footnotesize\,$L_3$\,}};

		\draw (1.75, -0.8) node(L4) [circle, fill=gray!50,minimum size=0.3cm, draw] {{\footnotesize\,$L_{4}$\,}};


		\draw (0.25, -0.8) node(X3) [] {{\footnotesize\,$X_4$\,}};
		\draw (1.0, -0.8) node(X4) [] {{\footnotesize\,$X_5$\,}};
        \draw (-.3, -0.8) node(X12) [] {{\footnotesize\,$X_6$\,}};

		\draw (1.25, 1.6) node(X1) [] {{\footnotesize\,$X_1$\,}};

		\draw (2.25, 1.6) node(X2) [] {{\footnotesize\,$X_2$\,}};
		
		\draw (0.45, 1.5) node(X9) [] {{\footnotesize\,$X_3$\,}};

		\draw (2.75, -0.8) node(X5) [] {{\footnotesize\,${X}_{7}$\,}};
		
		\draw (3.3, -0.8) node(X6) [] {{\footnotesize\,${X}_{8}$\,}};

        \draw (3.8, -0.8) node(X11) [] {{\footnotesize\,$X_{9}$\,}};

		\draw (1.0, -1.6) node(X7) [] {{\footnotesize\,$X_{10}$\,}};

		\draw (1.75, -1.6) node(X8) [] {{\footnotesize\,$X_{11}$\,}};
		
		\draw (2.45, -1.6) node(X10) [] {{\footnotesize\,$X_{12}$\,}};


		%
		\draw[color=blue!80,->] (L1) -- (L2) node[pos=0.5,sloped,above] {};
		\draw[color=blue!80,->] (L1) -- (L3) node[pos=0.5,sloped,above] {};
		\draw[color=blue!80,->] (L2) -- (L4) node[pos=0.5,sloped,above] {}; 
		\draw[color=blue!80,->] (L3) -- (L4) node[pos=0.5,sloped,above] {};

		\draw[color=red!80,->] (L1) -- (X1) node[pos=0.5,sloped,above] {};
		\draw[color=red!80,->] (L1) -- (X2) node[pos=0.5,sloped,above] {};
		
		\draw[color=red!80,->] (L1) -- (X9) node[pos=0.5,sloped,above] {};

		\draw[color=red!80,->] (L2) -- (X12) node[pos=0.5,sloped,above] {};
		
		\draw[color=red!80,->] (L2) -- (X3) node[pos=0.5,sloped,above] {};
		\draw[color=red!80,->] (L2) -- (X4) node[pos=0.5,sloped,above] {};

		\draw[color=red!80,->] (L3) -- (X5) node[pos=0.5,sloped,above] {};
		\draw[color=red!80,->] (L3) -- (X6) node[pos=0.5,sloped,above] {};
        \draw[color=red!80,->] (L3) -- (X11) node[pos=0.5,sloped,above] {};

		\draw[color=red!80,->] (L4) -- (X7) node[pos=0.5,sloped,above]{};
		\draw[color=red!80,->] (L4) -- (X8) node[pos=0.5,sloped,above] {};
		
		\draw[color=red!80,->] (L4) -- (X10) node[pos=0.5,sloped,above] {};

  \draw (1.5, -2.5) node(ii) [] {{\footnotesize\,(a) ground truth\,}};
		\end{tikzpicture}
        \begin{tikzpicture}[scale=.8,sibling distance=11em, level distance=1.3cm,
  every node/.style = {shape=rectangle, rounded corners,
    draw, align=center, top color=white, bottom color=white!20}]]
  \node  {\footnotesize Identifiability of Discrete LSM}
    child { node  {\footnotesize Identifiability  of \\measurement model}
        child{node {\footnotesize Completely identifiable}} 
    }
    child { 
    node {\footnotesize Identifiability of \\ structure model} 
    child{node {\footnotesize Markov equivalence}} 
    };
\node [color = white ,minimum width=0.5cm,minimum height=0.5cm,text=black](anchor)at(0,-3.8){\footnotesize\ (b) Identification of discrete LSM};

    \end{tikzpicture}
  		\begin{tikzpicture}[scale=.8, line width=0.5pt, inner sep=0.2mm, shorten >=.1pt, shorten <=.1pt]
		\draw (1.75, 0.8) node(L1) [circle, fill=gray!50,minimum size=0.3cm, draw] {{\footnotesize\,$L_{1}$\,}};

		\draw (0.5, 0) node(L2) [circle, fill=gray!50,minimum size=0.3cm, draw] {{\footnotesize\,$L_2$\,}};
		
		\draw (3, 0) node(L3) [circle, fill=gray!50,minimum size=0.3cm, draw] {{\footnotesize\,$L_3$\,}};

		\draw (1.75, -0.8) node(L4) [circle,fill=gray!50, minimum size=0.3cm, draw] {{\footnotesize\,$L_{4}$\,}};


		\draw (0.25, -0.8) node(X3) [] {{\footnotesize\,$X_4$\,}};
		\draw (1.0, -0.8) node(X4) [] {{\footnotesize\,$X_5$\,}};
        \draw (-.3, -0.8) node(X12) [] {{\footnotesize\,$X_6$\,}};

		\draw (1.25, 1.6) node(X1) [] {{\footnotesize\,$X_1$\,}};

		\draw (2.25, 1.6) node(X2) [] {{\footnotesize\,$X_2$\,}};
		
		\draw (0.45, 1.5) node(X9) [] {{\footnotesize\,$X_3$\,}};

		\draw (2.75, -0.8) node(X5) [] {{\footnotesize\,${X}_{7}$\,}};
		
		\draw (3.3, -0.8) node(X6) [] {{\footnotesize\,${X}_{8}$\,}};

        \draw (3.8, -0.8) node(X11) [] {{\footnotesize\,$X_{9}$\,}};

		\draw (1.0, -1.6) node(X7) [] {{\footnotesize\,$X_{10}$\,}};

		\draw (1.75, -1.6) node(X8) [] {{\footnotesize\,$X_{11}$\,}};
		
		\draw (2.45, -1.6) node(X10) [] {{\footnotesize\,$X_{12}$\,}};


		%
		\draw[-] (L1) -- (L2) node[pos=0.5,sloped,above] {};
		\draw[-] (L1) -- (L3) node[pos=0.5,sloped,above] {};
		\draw[->] (L2) -- (L4) node[pos=0.5,sloped,above] {}; 
		\draw[->] (L3) -- (L4) node[pos=0.5,sloped,above] {};

		\draw[->] (L1) -- (X1) node[pos=0.5,sloped,above] {};
		\draw[->] (L1) -- (X2) node[pos=0.5,sloped,above] {};
		
		\draw[->] (L1) -- (X9) node[pos=0.5,sloped,above] {};

		\draw[->] (L2) -- (X12) node[pos=0.5,sloped,above] {};
		
		\draw[->] (L2) -- (X3) node[pos=0.5,sloped,above] {};
		\draw[->] (L2) -- (X4) node[pos=0.5,sloped,above] {};

		\draw[->] (L3) -- (X5) node[pos=0.5,sloped,above] {};
		\draw[->] (L3) -- (X6) node[pos=0.5,sloped,above] {};
        \draw[->] (L3) -- (X11) node[pos=0.5,sloped,above] {};

		\draw[->] (L4) -- (X7) node[pos=0.5,sloped,above]{};
		\draw[->] (L4) -- (X8) node[pos=0.5,sloped,above] {};
		
		\draw[->] (L4) -- (X10) node[pos=0.5,sloped,above] {};
  \draw (1.5, -2.5) node(ii) [] {{\footnotesize\,(c) Identification result\,}};
		\end{tikzpicture}

		\caption{An example of discrete latent structure model involving 4 latent variables and 12 observed variables (sub-fig (a)). Here, the red edges form a measurement model, while the blue edges form a structural model. The theoretical result of this paper is shown in sub-fig (c).}
		\label{fig:illust_simple_example} 
	\end{center}
\end{figure}
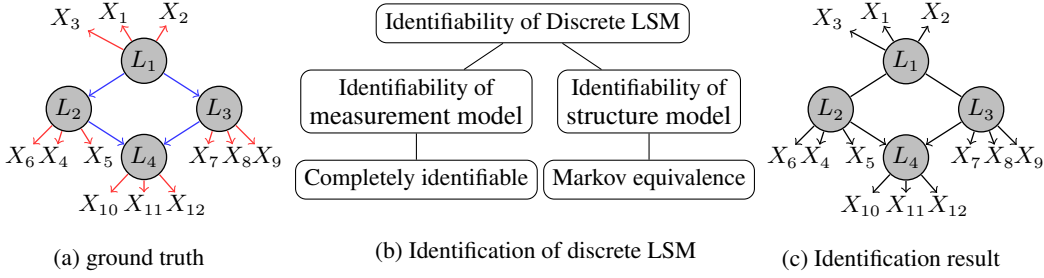

\section{Discrete Latent Structure Model}

For an integer $m$, denote $[m] = \{1,2, \cdots, m\}$. Consider a discrete statistic model with $k$ latent variable set $\mathbf{L} = \{L_1, \cdots, L_k\}, L_i \in [r_i]$ and $m$ discrete observed variable set $\mathbf{X} = \{X_1, \cdots, X_m\}$ with $X_i \in [d_i]$ ($r_i, d_i \geq 2$), in which any marginal probabilities are non-zero. We say a discrete statistic model is a \textit{discrete causal model} if and only if $\mathbf{V} = \mathbf{L} \cup \mathbf{X}$ can be represented by a directed acyclic graph (DAG), denoted by $\mathcal{G}$. We use $\mathrm{supp}(V_i)=\{v\in \mathbb{Z^+}:\mathbb{P}(V_i=v)>0\}$ to denote the set of possible values of the random variable $V_i$. Our work is in the framework of causal graphical models. Concepts used here without explicit definition, such as d-separation, which can refer to standard literature \citep{spirtes2000causation}. 

In this paper, we focus on learning causal structure among latent variables in one class of discrete causal models. The model is defined as follows.

\begin{definition}[Discrete Latent Structure Model]
    A discrete causal model is the Discrete \textbf{L}atent \textbf{S}tructure \textbf{M}odel (Discrete LSM) if it further satisfies the following three assumptions:
    \begin{itemize}[leftmargin=15pt,itemsep=2pt,topsep=0pt,parsep=1pt]
        \item [1)] [Purity Assumption] there is no direct edges between the observed variables;
        \item [2)] [Three-Pure Child Variable Assumption] each latent variable has at least three pure variables as children;
        \item [3)] [Sufficient Observation Assumption] 
        The dimension of observed variables support is larger than the dimension of any latent variables support. 
    \end{itemize}
\end{definition}

These structural constraints inherent in the discrete LSM are also widely used in linear latent variable models, e.g., \cite{Silva-linearlvModel,Kummerfeld2016,cai2019triad,xie2020GIN}. In the binary latent variable case, recently, a similar definition is also employed in \cite{gu2023bayesian,gu2022blessing}. The key difference is that there are no constraints on the latent structure in our work. An example of a discrete LSM model is shown in Fig. \ref{fig:illust_simple_example}(a), where $L_1, \cdots, L_4$ represent discrete latent variables, and $X_1, \cdots, X_{12}$ are discrete observed (measured) variables.

In general, the discrete LSM model can be divided into two sub-models \citep{spirtes2000causation}, i.e., the measurement model and the structure model, e.g., red edge and blue edge in Fig. \ref{fig:illust_simple_example} (a). By this, one can first identify the measurement model to determine the latent variables and then use the measured variable to infer the causal structure of latent variables. As shown in Fig. \ref{fig:illust_simple_example} (b), we will respectively discuss the identification of two sub-models and show that the measurement model is fully identifiable and the structure model is identified up to a Markov equivalence class. The symbols used in our work is summarised in Table \ref{table:notation1}.

To ensure the identification of causal structure and the asymptotic correctness of identification algorithms, some common causal assumptions are required.


\begin{assumption}[Causal Markov Assumption]\label{ass_1}
    Let $\mathcal{G}$ be a causal graph with vertex set $\mathbf{V}$ and $\mathbb{P}_{\mathbf{V}}$ be probability distribution over the vertices in $\mathbf{V}$ generated by $\mathcal{G}$. We say $\mathcal{G}$ and $\mathbb{P}_{\mathbf{V}}$ satisfy the Causal Markov Assumption if and only if for every $V_i \in \mathbf{V}$, $\mathbb{P}(V_i, \mathbf{V}\setminus \mathrm{Des}_{V_i}|\mathrm{Pa}_{V_i} = i) = \mathbb{P}(W |\mathrm{Pa}_{V_i} = i) \mathbb{P}(\mathbf{V}\setminus \mathrm{Des}_{V_i} |\mathrm{Pa}_{V_i} = i)$.
    
\end{assumption}

\begin{assumption}[Faithfulness Assumption]
    Let $\mathcal{G}$ be a causal graph with vertex set $\mathbf{V}$ and $\mathbb{\mathbb{P}_{\mathbf{V}}}$ be probability distribution over the vertices in $\mathbf{V}$ generated by $\mathcal{G}$. We say $<\mathcal{G}, \mathbb{P}_{\mathbf{V}}>$ satisfies the Faithfulness Assumption if and only if (i). every conditional independence relation true in $\mathbb{P}_{\mathbf{V}}$ is entailed by the Causal Markov Assumption applied to $\mathcal{G}$, and (ii). for any joint distribution $\mathbb{P}(\mathbf{L}_p)$, there does not exist $\mathbb{P}(\mathbf{L}_q)$ with $|\mathrm{supp}(\mathbf{L}_q)|<|\mathrm{supp}(\mathbf{L}_p)|$ such that $\mathbb{P}(\mathbf{L}_p) = \mathbb{P}(\mathbf{L}_q)$.

\end{assumption}

\begin{assumption}[Full Rank Assumption]\label{ass_3}
For any conditional probability $\mathbb{P}(X|\mathrm{Pa}_X)$, the corresponding contingency table is full rank, i.e., each column of $\mathbb{P}(X|\mathrm{Pa}_X)$ is linearly independent with the other column vectors in the parameter space.


\end{assumption}

The Causal Markov Assumption and Faithfulness Assumption are widely used in the constraint-based causal discovery methods, e.g., PC algorithm and FCI algorithm \citep{spirtes2000causation, spirtes1991PC}. One can see that we further constraint the parameter space of joint distribution cannot be reduced to a low-dimension space, for maintaining the diversity of parameter space. This is also the reason for the full rank assumption, which has also been used in related studies \citep{gu2022blessing}.

\textbf{Our goal:} The target of our work is to answer the identification of the discrete latent structure model, including the measurement model and the structure model.

\begin{table}[htp]
\begin{adjustbox}{width=0.9\textwidth,center}
\renewcommand\arraystretch{1.0}
\tiny
  \begin{tabular}{|r|r|}
  \hline
  $\mathbf{V}$: The set of variables $\mathbf{V} = \mathbf{X} \cup \mathbf{L}$               & $\mathbf{X}$: The set of observed variables            \\
  \hline
  $\mathbf{L}$: The set of latent variables             & $V_i \Vbar V_j | \mathbf{V}_p$ : Conditional independence    \\
  \hline
  $|\mathbf{X}_p|$ : Dimension of $\mathbf{X}_p$             & $\mathcal{T}_{(\mathbf{X}_p)}$ : the tensor form of $\mathbb{P}(\mathbf{X}_p)$     \\
  \hline
  $\mathbb{P}(\mathbf{X}_p)$: the joint distribution of $\mathbf{X}_p$             & $\mathrm{Rank}(\mathcal{T}_{(\mathbf{X}_p)})$ : The rank of tensor $\mathcal{T}_{(\mathbf{X}_p)}$     \\
  \hline
  $\mathrm{Pa}_{X}$: The parent set of $X$ & $\mathrm{Des}_{X}$: The descendant set of $X$    \\
  \hline
   $\mathrm{Diag}(\mathrm{M})$: The diagonal matrix of $\mathrm{M}$ & $\mathbf{u}_i \otimes \mathbf{u}_j$ : The outer product of two vectors    \\
  \hline
  \end{tabular}
\end{adjustbox}
\caption{Mathematical notations used in this paper.}
\label{table:notation1}
\end{table}

\section{Tensor Rank Condition with Graphical Criteria}

To address the identification problem in the discrete LSM, this section introduces the building block--the tensor rank condition of the discrete causal model. Then, we establish the connection between tensor rank and d-separation relations under a general discrete causal model.

Before formalizing the tensor rank condition, we first give the explicit definition of tensor rank.

\begin{definition}[Rank-one Tensor]
An n-way tensor $\mathcal{T} \in \mathbb{R}^{I_1 \times \cdots \times I_n}$ is a rank-one tensor if it can be written as the outer product of $n$ vectors, i.e., 
\[
\mathcal{T} = \mathbf{u}_1 \otimes \mathbf{u}_2 \otimes \cdots \otimes \mathbf{u}_n,
\]
where \( \mathbf{u}_i \) are vectors that each represent a dimension of the tensor, $\otimes$ represents the outer product.

\end{definition}

\begin{definition}[Tensor Rank \cite{kolda2009tensor}]
For an n-way tensor $\mathcal{T} \in \mathbb{R}^{I_1 \times \cdots \times I_n}$, the rank of a tensor $\mathcal{T}$ is defined as the \textbf{smallest} number of rank-one tensors that sum to exactly represent $\mathcal{T}$. Formally, the rank of tensor $\mathcal{T}$, denoted \( \operatorname{rank}(\mathcal{T}) \), is the smallest integer \( r \) such that:
\[
\mathcal{T} = \sum_{i=1}^r \mathbf{u}_1^{(i)} \otimes \mathbf{u}_2^{(i)} \otimes \cdots \otimes \mathbf{u}_n^{(i)},
\]
where each \( \mathbf{u}_k^{(i)} \) is a vector in the corresponding vector space associated with the \( k \)-th mode of \( \mathcal{T} \).
    
\end{definition}

In other words, the tensor rank denotes the minimal number of rank-one decompositions. In the discrete causal model, the joint distribution can be represented as a tensor, e.g., the joint distribution of two random variables is a two-way contingency tensor. Interestingly, by carefully analyzing the rank-one decomposition of the joint distribution, we find that the tensor rank essentially reveals structural information within the causal graph. The result is shown below.


\begin{theorem}
    [Graphical implication of tensor rank condition] \label{the:graph}
    
    In the discrete causal model, suppose Assumption \ref{ass_1} $\sim$ Assumption \ref{ass_3} holds. Consider an observed variable set $\mathbf{X}_p = \{X_1, \cdots, X_n\}$ ($\mathbf{X}_p \subseteq \mathbf{X}$ and $ n\geq 2$) and the corresponding n-way probability tensor $\mathcal{T}_{(\mathbf{X}_p)}$ that is the tabular representation of the joint probability mass function $\mathbb{P}(X_1, \cdots, X_n)$, then $\mathrm{Rank}(\mathcal{T}_{(\mathbf{X}_p)}) = r$ ($r > 1 $) if and only if (i) there exists a variable set $\mathbf{S} \subset \mathbf{V}$ with $|\mathrm{supp}(\mathbf{S})| = r$ that d-separates any pair of variables in $\{X_1, \cdots ,X_n\}$, and (ii) does no exist conditional set $\tilde{\mathbf{S}}$ that satisfies $|\mathrm{supp}(\tilde{\mathbf{S}})| < r$.

\end{theorem}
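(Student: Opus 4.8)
The plan is to prove the equivalent statement that $\mathrm{Rank}(\mathcal{T}_{(\mathbf{X}_p)})$ equals the minimum support size over all sets that d-separate the variables in $\mathbf{X}_p$; conditions (i) and (ii) merely assert that this minimum is achieved at $r$, so both directions of the ``iff'' follow at once. First I would record the crucial factorization. If $\mathbf{S}$ d-separates every pair in $\mathbf{X}_p$, then because d-separation of $X_i$ from a \emph{set} is equivalent to blocking every path from $X_i$ to each member of that set, each $X_i$ is in fact d-separated from $\mathbf{X}_p\setminus\{X_i\}$ given $\mathbf{S}$. By the Causal Markov Assumption together with the decomposition and chain-rule graphoid axioms, this upgrades pairwise to mutual conditional independence, giving
\[
\mathbb{P}(X_1,\dots,X_n\mid \mathbf{S}=s)=\prod_{i=1}^n \mathbb{P}(X_i\mid \mathbf{S}=s),
\]
and hence the mixture representation $\mathcal{T}_{(\mathbf{X}_p)}=\sum_{s\in\mathrm{supp}(\mathbf{S})}\mathbb{P}(\mathbf{S}=s)\,\mathbf{p}_1^{(s)}\otimes\cdots\otimes\mathbf{p}_n^{(s)}$ with $\mathbf{p}_i^{(s)}=\mathbb{P}(X_i\mid \mathbf{S}=s)$. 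This representation immediately yields the upper bound: it exhibits $\mathcal{T}_{(\mathbf{X}_p)}$ as a sum of $|\mathrm{supp}(\mathbf{S})|$ rank-one tensors, so $\mathrm{Rank}(\mathcal{T}_{(\mathbf{X}_p)})\le |\mathrm{supp}(\mathbf{S})|$ for \emph{every} d-separating $\mathbf{S}$, and in particular for the minimal one.

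For the matching lower bound I would fix the minimal d-separating set $\mathbf{S}^{\ast}$ with $|\mathrm{supp}(\mathbf{S}^{\ast})|=r$ and collect the factor matrices $A_i=[\mathbf{p}_i^{(1)},\dots,\mathbf{p}_i^{(r)}]$. The heart of the argument is to show each $A_i$ has full column rank $r$: writing $\mathbb{P}(X_i\mid\mathbf{S}^{\ast})$ as the product and marginalization of the conditional tables along the paths from $\mathbf{S}^{\ast}$ to $X_i$, the Full Rank Assumption prevents any single edge from dropping rank, while part (ii) of the Faithfulness Assumption forbids the joint marginalization from collapsing $\mathrm{supp}(\mathbf{S}^{\ast})$ to a lower-dimensional representation; together these force $\mathrm{rank}(A_i)=r$. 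Given full-column-rank factors, for $n=2$ the tensor is a matrix equal to $A_1\,\mathrm{Diag}(\mathbb{P}(\mathbf{S}^{\ast}))\,A_2^{\top}$, whose rank is exactly $r$; for $n\ge 3$ I would either invoke Kruskal's uniqueness theorem (the full-column-rank factors satisfy the Kruskal condition $\sum_i k_{A_i}\ge 2r+n-1$ whenever $r\ge 2$) or, equivalently, flatten the tensor by grouping the modes into two blocks and use that the Khatri--Rao product of full-column-rank factors is again full column rank, so some matricization already has rank $r$. Either route gives $\mathrm{Rank}(\mathcal{T}_{(\mathbf{X}_p)})\ge r$, and combined with the upper bound, $\mathrm{Rank}(\mathcal{T}_{(\mathbf{X}_p)})=r$. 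Since $r$ is the minimal d-separating support, this equality is exactly the conjunction of (i) and (ii), and reading it in the reverse direction supplies the converse.

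I expect the main obstacle to be the full-column-rank claim for the factor matrices $A_i$ under the general (non-LSM) discrete causal model, i.e.\ ruling out rank collapse when $\mathbb{P}(X_i\mid\mathbf{S}^{\ast})$ is obtained by marginalizing intermediate variables along several paths. The Full Rank Assumption controls each edge locally, but propagating it to the composite conditional, and certifying that no cancellation lowers the rank, is precisely where part (ii) of Faithfulness must be used carefully. A secondary subtlety is making the higher-order ($n\ge 3$) lower bound rigorous, since tensor rank is not in general controlled by matrix flattenings; one must verify that the Kruskal and Khatri--Rao hypotheses genuinely hold under Assumptions \ref{ass_1}--\ref{ass_3} rather than merely generically.
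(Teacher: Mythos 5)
Your architecture is genuinely different from the paper's. The paper proves the two directions separately by contradiction: for the ``if'' direction it splits into $\mathrm{Rank}>r$ (impossible because the $r$-term mixture over $\mathbf{S}$ already exists) and $\mathrm{Rank}<r$ (claimed to force two proportional columns in some conditional table, contradicting Assumption \ref{ass_3}); for the ``only if'' direction it analyses what a putative $r$-term rank-one decomposition could be, distinguishing decompositions induced by actual graph sets (ruled out through Markov/faithfulness arguments about descendants and colliders, supported by its Lemmas on full-rank propagation) from decompositions ``constructed from parameter space,'' which are excluded only by part (ii) of the Faithfulness Assumption. Your single identity $\mathrm{Rank}(\mathcal{T}_{(\mathbf{X}_p)})=\min_{\mathbf{S}}|\mathrm{supp}(\mathbf{S})|$, proved by the mixture upper bound plus a Kruskal/flattening lower bound, is cleaner and buys something real: Kruskal's theorem and matricization bounds control \emph{all} decompositions simultaneously, so you never need the paper's separate case against conditioning ``variables'' invented in parameter space, and you get essential uniqueness of the decomposition for free. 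Your upper bound coincides with the paper's, and your use of the composition property of d-separation to pass from pairwise to mutual conditional independence makes explicit a step the paper merely asserts.

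There is, however, a genuine gap, and it is worse than the ``main obstacle'' you describe: the full-column-rank claim for $A_i=\mathbb{P}(X_i\mid\mathbf{S}^{\ast})$ is not just unproven but can be \emph{false} under the theorem's stated hypotheses. The theorem assumes only Assumptions \ref{ass_1}--\ref{ass_3} for a general discrete causal model; the Sufficient Observation condition belongs to the LSM definition and is not available here. Assumption \ref{ass_3} forces $d_i\ge|\mathrm{supp}(\mathrm{Pa}_{X_i})|$ variable by variable, but the minimal d-separating set for all pairs can be a \emph{union} of latents whose joint support exceeds every $d_i$: take $X_1,X_2,X_3$ with three distinct binary latent common causes $L_1\to\{X_1,X_2\}$, $L_2\to\{X_1,X_3\}$, $L_3\to\{X_2,X_3\}$, so that $\mathbf{S}^{\ast}=\{L_1,L_2,L_3\}$ with $r=8$ while Assumption \ref{ass_3} only requires $d_i\ge4$. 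Then each $A_i$ is $4\times 8$, full column rank is impossible, every flattening has rank at most $4$, and even the $k$-rank form of Kruskal's condition fails ($\sum_i k_{A_i}\le 12 < 2r+2=18$), so neither of your lower-bound routes can certify $\mathrm{Rank}=r$. To be fair, the paper's own treatment of this regime (the proportionality argument in Case 2 of its ``if'' part) is equally invalid, and the paper implicitly leans on conditioning sets that overlap $\mathbf{X}_p$ --- degenerate d-separation, as in its testability remark --- which your reformulation also needs just for the minimum to be well defined when variables in $\mathbf{X}_p$ are adjacent. So your route is sound and arguably preferable, but it requires (a) an explicit argument propagating Assumption \ref{ass_3} from edge tables to the composite conditionals $\mathbb{P}(X_i\mid\mathbf{S}^{\ast})$, and (b) an added support restriction such as $r\le\min_i d_i$, without which the key claim you rely on does not hold.
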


We further provide an example to illustrate Theorem \ref{the:graph}, and a more comprehensive case is provided in Appendix \ref{sec_exp}.

\begin{example}[Illustrating for the graphical criteria]

Consider a single latent variable structure as shown in Fig. \ref{fig:exp} (a) where $L$ is a latent variable with $\mathrm{supp}(L) = \{0,1\}$ and $X_1, X_2$ are observed variables with $\mathrm{supp}(X_i) = \{0,1,2\}, i \in \{1, 2\}$. For convenience, we denote $p_{i} = \mathbb{P}(L = i)$, $\hat{p}_{i|j} = \mathbb{P}(X_1 =i| L = j)$, and $\tilde{p}_{i|j} = \mathbb{P}(X_2 =i| L_1 = j)$. For the joint distribution of $\mathbb{P}(X_1, X_2)$, it can be represented by the product of conditional probability, as shown in Fig. \ref{fig:exp}(b). By applying the tensor decomposition, one can see that $\mathbb{P}(X_1, X_2)$ can be decomposed as the sum of two rank-one tensors: $\mathbb{P}(X_1, X_2| L=0)$ and $\mathbb{P}(X_1, X_2| L=1)$, i.e., the rank of the tensor $\mathbb{P}(X_1, X_2)$ is two that related to the dimension of the latent support. The reason that $\mathbb{P}(X_1, X_2| L=i)$ is a rank-one tensor is that $L$ d-separates $X_1$ and $X_2$, i.e., $\mathbb{P}(X_1, X_2| L=i) = \mathbb{P}(X_1| L=i)\otimes \mathbb{P}(X_2| L=i)$. This illustrates a connection between tensor rank and the d-separation relations.

\end{example}

Intuitively, the graphical criteria theorem suggests that, in the discrete causal model, the tensor rank condition implies the minimal conditional probability decomposition within the probability parameter space, which hopefully induces the structural identifiability of the discrete LSM model.

\section{Structure Learning of Discrete Latent Structure Model}

In this section, we address the identification problem of the discrete LSM model using a carefully designed algorithm that leverages the tensor rank condition. Specifically, we first show that latent variables can be identified by finding causal clusters among observed variables (Sec. \ref{sec41}). Then, we use these causal clusters to conduct conditional independence tests among latent variables based on the tensor rank condition, identifying the structure model (Sec. \ref{sec42}). Finally, we discuss the practical implementation of testing tensor rank (Sec. \ref{sec_pratest}). For simplicity, we focus on the case where all latent variables have the same number of categories, with extensions provided in Appendix \ref{sec_dif}.

\subsection{Identification of the measurement model}\label{sec41}

To answer the identification of the measurement model, one common strategy is to find the causal cluster that shares the common latent parent, which has been well-studied within the linear model, such as \cite{Silva-linearlvModel,cai2019triad,xie2020GIN}. We follow this strategy and show that, in the discrete LSM, the causal cluster can be found by testing the tensor rank conditions iteratively. The definition of a causal cluster is as follows.

\begin{definition}[Causal cluster]
    In the discrete LSM model, the observed $\{X_1, \cdots, X_n\}$ is a causal cluster, termed $C_i$, if and only if all variables in $\{X_1,\cdots, X_n\}$ share the common latent parent.
\end{definition}

It is not hard to see that, the measurement model can be identified if all causal cluster is found. In order to find these causal clusters by making use of the tensor rank condition, the key issue is to determine the support of latent variables in advance. This issue can be addressed by identifying the rank of the two-way tensor formed by the joint distribution of two observed variables

\begin{proposition}[Identification of support of latent variables]\label{Pro_rank}
    In the discrete LSM model suppose Assumption \ref{ass_1} $\sim$ Assumption \ref{ass_3} holds. The support of the latent variable is the rank of a two-way probability contingency table of any $X_i$ and $X_j$, i.e., $|\mathrm{supp}(L)| = \mathrm{Rank}(\mathcal{T}_{(X_i, X_j)})$, $\forall X_i, X_j \in \mathbf{X}$.    
\end{proposition}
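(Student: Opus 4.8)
The plan is to derive Proposition~\ref{Pro_rank} as a direct corollary of Theorem~\ref{the:graph}, specializing the general graphical criterion to the two-variable case $\mathbf{X}_p = \{X_i, X_j\}$ under the structural assumptions of the discrete LSM. By Theorem~\ref{the:graph}, the rank $\mathrm{Rank}(\mathcal{T}_{(X_i,X_j)})$ equals $r$ precisely when there is a variable set $\mathbf{S} \subset \mathbf{V}$ with $|\mathrm{supp}(\mathbf{S})| = r$ that d-separates $X_i$ and $X_j$, and no such set has strictly smaller support. So the task reduces to identifying the minimum-support d-separating set for a generic pair of observed variables, and showing this support equals $|\mathrm{supp}(L)|$ for a single latent variable $L$.

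First I would invoke the Purity Assumption: since there are no direct edges among observed variables, every path between $X_i$ and $X_j$ must pass through the latent layer $\mathbf{L}$. Consequently, $X_i$ and $X_j$ can never be d-separated by any subset of the observed variables alone, and any d-separating set must contain latent variables. Next I would argue that some latent variable (or latent set) always d-separates the pair, so a d-separating set always exists — the latent structure is a DAG, and conditioning on all of $\mathbf{L}$ blocks every path under Purity. The core quantitative step is then to show that the \emph{minimum} support over all d-separating sets $\mathbf{S}$ equals the common value $|\mathrm{supp}(L)|$, where here I read the proposition's use of the same symbol $L$ across all pairs $X_i, X_j$ together with the simplifying assumption (stated at the start of Section~4) that all latent variables share the same number of categories.

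The key step is the lower and upper bound on this minimum support. For the upper bound, I would exhibit a concrete d-separating latent set: by the Three-Pure-Child assumption each observed $X_i$ has a unique latent parent $L_{(i)}$, and the set $\{L_{(i)}, L_{(j)}\}$ (or, when $X_i,X_j$ share a parent, the single parent) d-separates the pair; its support is then bounded by the latent category count, and under the equal-cardinality convention this gives the value $|\mathrm{supp}(L)|$. The more delicate direction is the lower bound: I must rule out any d-separating set of strictly smaller support. Here I would lean on the Faithfulness Assumption, specifically clause (ii), which forbids representing a joint distribution $\mathbb{P}(\mathbf{L}_p)$ by another $\mathbb{P}(\mathbf{L}_q)$ of strictly smaller support, together with the Full Rank Assumption, which guarantees that the conditional tables $\mathbb{P}(X \mid \mathrm{Pa}_X)$ are full rank so that no collapse of the latent support can occur spuriously. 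Combining these two bounds with the ``no strictly smaller support'' condition (ii) of Theorem~\ref{the:graph} pins the rank down to exactly $|\mathrm{supp}(L)|$.

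The main obstacle I anticipate is the lower-bound argument: showing rigorously that no latent or mixed set of smaller support can d-separate $X_i$ and $X_j$. The subtlety is that a d-separating set need not be a single latent parent — it could in principle be a higher ancestor or a combination of latent variables whose joint support is smaller than that of the direct parents. Ruling this out requires carefully coupling the faithfulness-based non-reducibility of latent joint distributions with the full-rank property of the measurement tables, to certify that any valid d-separator must carry at least $|\mathrm{supp}(L)|$ worth of support. I would handle the genuinely shared-parent case and the distinct-parent case separately, and in the latter case appeal to the equal-cardinality simplification so that the minimum support is still governed by a single latent variable's category count.
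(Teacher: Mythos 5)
Your overall strategy---specializing Theorem \ref{the:graph} to $\mathbf{X}_p = \{X_i, X_j\}$ and identifying the minimum-support d-separating set---is exactly the paper's route. But there is a genuine gap in your upper-bound step, precisely in the distinct-parent case. You propose $\{L_{(i)}, L_{(j)}\}$ as the d-separating set and claim ``its support is then bounded by the latent category count.'' That is false: the support of a \emph{pair} of latent variables, each with $r$ categories, is $r^2$ (faithfulness clause (ii) prevents it from collapsing to anything smaller), not $r$. Exhibiting a d-separating set of support $r^2$ does not let you invoke Theorem \ref{the:graph} to conclude $\mathrm{Rank}(\mathcal{T}_{(X_i,X_j)}) = r$; condition (i) of that theorem requires a d-separating set of support \emph{exactly} $r$. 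Your closing remark that you would ``appeal to the equal-cardinality simplification so that the minimum support is still governed by a single latent variable's category count'' is the conclusion you need, not an argument for it.

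The missing idea---and the entire content of the paper's one-line proof---is that a \emph{single} latent parent suffices even when $X_i$ and $X_j$ have different parents. Under Purity and the pure-child structure of the LSM, each observed variable is a sink whose only neighbor is its unique latent parent, so every path out of $X_i$ passes through $L_{(i)}$ with an outgoing edge $L_{(i)} \to X_i$; hence $L_{(i)}$ is a non-collider on every such path, and conditioning on $\{L_{(i)}\}$ alone blocks all paths from $X_i$ to $X_j$. This gives a d-separating set of support exactly $r$ in \emph{both} cases. Incidentally, your worry about the lower bound is overblown: under the equal-cardinality convention every latent variable has support exactly $r$ and every observed variable has support $> r$ (Sufficient Observation), so any nonempty conditioning set has support at least $r$, and condition (ii) of Theorem \ref{the:graph} holds essentially vacuously---no delicate coupling of faithfulness and full-rank is needed there. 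The delicacy you placed in the lower bound actually belongs in the upper bound, where your proposal currently fails.
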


This result holds because any pair of variables in the discrete LSM model is d-separated by any one of their latent parent variables, and all latent variables have the same support. Next, we formalize the property of clusters and give the criterion for finding clusters.

\begin{proposition}[Identification of causal cluster]\label{prop_cluster}
        In the discrete LSM mode, suppose Assumption \ref{ass_1} $\sim$ Assumption \ref{ass_3} holds. Let $r = |\mathrm{supp}(L_i)|$ be the dimension of latent support, for three disjoint observed variables $X_i, X_j, X_k \in \mathbf{X}$, 
\begin{itemize}[leftmargin=15pt,itemsep=2pt,topsep=0pt,parsep=1pt]
            \item $\mathcal{R}ule 1$: if the rank of tensor $\mathcal{T}_{(X_i, X_j, X_k)}$ is not equal to $r$, i.e., $\mathrm{Rank}(\mathcal{T}_{(X_i, X_j, X_k)}) \neq r$, then $X_i$, $X_j$ and $X_k$ belong to the different latent parent.
            \item $\mathcal{R}ule 2$: for any $X_s$, $X_s \in \mathbf{X} \setminus \{X_i, X_j, X_k\}$, the rank of tensor $\mathcal{T}_{(X_i, X_j, X_k, X_s)}$ is $r$, i.e., $\mathrm{Rank}(\mathcal{T}_{(X_i, X_j, X_k, X_s)}) = r$, then $\{X_i, X_j, X_k\}$ share the same latent parent.
        \end{itemize}

\end{proposition}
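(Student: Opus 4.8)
The plan is to prove both rules by reducing them to applications of Theorem~\ref{the:graph}, which tells us that $\mathrm{Rank}(\mathcal{T}_{(\mathbf{X}_p)})$ equals the minimum support size over all sets $\mathbf{S}\subset\mathbf{V}$ that d-separate every pair of variables in $\mathbf{X}_p$. Throughout I would use three structural facts guaranteed by the Discrete LSM assumptions: by Purity there are no observed-observed edges, so every observed variable $X$ is a pure child with exactly one latent parent $L_X$; by the Three-Pure-Child assumption each latent variable carries at least three such children; and by Proposition~\ref{Pro_rank} together with the equal-support convention of this section, every single latent variable has $|\mathrm{supp}(L)|=r$. The key quantity to track is therefore, for a given observed set $\mathbf{X}_p$, the smallest-support latent (or latent-tuple) separating set, and how its support compares with $r$.

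For $\mathcal{R}ule\,1$ I would argue the contrapositive. Suppose $X_i,X_j,X_k$ do share a common latent parent $L$. Since $L$ d-separates each pair among the three (each path between two pure children of $L$ must pass through $L$, and no observed-observed shortcuts exist by Purity), the singleton $\{L\}$ is a valid separating set with $|\mathrm{supp}(L)|=r$. To conclude $\mathrm{Rank}(\mathcal{T}_{(X_i,X_j,X_k)})=r$ via Theorem~\ref{the:graph}, I must also verify minimality, i.e.\ that no separating set $\tilde{\mathbf{S}}$ has support strictly below $r$: this follows from the Faithfulness Assumption part (ii), which forbids collapsing the joint distribution onto a lower-dimensional support, combined with the Full Rank Assumption ensuring the conditional tables do not degenerate. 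Taking the contrapositive then gives $\mathcal{R}ule\,1$: if the rank is not $r$, the three cannot share a common parent.

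For $\mathcal{R}ule\,2$, the hypothesis is that for \emph{every} outside variable $X_s$ the four-way rank stays at $r$, and I want to conclude $X_i,X_j,X_k$ share one latent parent. The natural route is again by contradiction: suppose they do not all share a parent. Then their latent parents are spread across at least two distinct latent variables, say $L_a\neq L_b$. I would then exhibit a witness $X_s$ — available because each latent has at least three pure children, so I can pick a fresh child of one of the relevant latents — for which the minimal separating set of $\{X_i,X_j,X_k,X_s\}$ must use more than one latent (it has to block the paths running through both $L_a$ and $L_b$), forcing its support up to $r^2$ (or more generally a product exceeding $r$) and hence $\mathrm{Rank}(\mathcal{T}_{(X_i,X_j,X_k,X_s)})\neq r$, contradicting the hypothesis. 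Selecting $X_s$ so that adding it genuinely forces a two-latent separator is the delicate case analysis: one must enumerate how the three variables distribute over latents (e.g.\ all three under distinct parents, or two under one and one under another) and check in each configuration that a rank-inflating witness exists.

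I expect the main obstacle to be this witness-construction and the accompanying support arithmetic in $\mathcal{R}ule\,2$. The clean statement ``support jumps to $r^2$'' relies on the separating latents being (conditionally) independent-enough that their joint support multiplies rather than collapses; justifying that the minimal separator cannot secretly be a single high-support variable requires leaning carefully on Proposition~\ref{Pro_rank} (each latent has support exactly $r$) and on Faithfulness part (ii) to rule out a lower-dimensional reparametrization that would spuriously keep the rank at $r$. Making the d-separation bookkeeping precise for an \emph{arbitrary} latent structure — since this paper deliberately allows edges among latents — is where the argument needs the most care, and I would organize it by reducing every path-blocking question to the structure of the latent sub-DAG restricted to the ancestors of $\{X_i,X_j,X_k,X_s\}$.
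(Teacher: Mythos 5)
Your overall strategy---reducing both rules to the graphical criterion of Theorem~\ref{the:graph}, with Purity and the Three-Pure-Child assumption controlling which sets can d-separate---is the paper's approach, and your $\mathcal{R}ule\,2$ argument is essentially the paper's proof: assume the three do not share a parent, take a fresh sibling child $X_s$ of the second latent (available by the Three-Pure-Child assumption), and observe that any set separating all pairs of $\{X_i,X_j,X_k,X_s\}$ must contain both latents (e.g.\ $L_1$ fails to block $X_k \leftarrow L_2 \to X_s$ and $L_2$ fails to block $X_i \leftarrow L_1 \to X_j$), forcing support at least $r^2 > r$ and hence rank $\neq r$. The paper, like you, only works through the representative split $L_1 \to \{X_i,X_j\}$, $L_2 \to \{X_k\}$ and leaves the other configurations implicit, so deferring that case analysis is not a deviation from the paper's own level of rigor.

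However, your $\mathcal{R}ule\,1$ argument has a genuine logical gap: you formed the wrong contrapositive. $\mathcal{R}ule\,1$ asserts that $\mathrm{Rank}(\mathcal{T}_{(X_i,X_j,X_k)}) \neq r$ implies the three variables have \emph{pairwise distinct} latent parents; its contrapositive is therefore ``if \emph{at least two} of them share a latent parent, then the rank equals $r$.'' You instead proved ``if \emph{all three} share a common parent, then the rank is $r$,'' and you even state your conclusion in that weaker form (``if the rank is not $r$, the three cannot share a common parent''). This leaves open the mixed configuration $L_1 \to \{X_i,X_j\}$, $L_2 \to \{X_k\}$, which your argument never touches, so as written you have not established $\mathcal{R}ule\,1$ but only the weaker filter needed for Algorithm~1 not to discard true clusters. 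The missing case follows by exactly your technique and should be added: by Purity, every path from $X_k$ into $X_i$ or $X_j$ must terminate in the edge $L_1 \to X_i$ or $L_1 \to X_j$, so the singleton $\{L_1\}$ blocks all three pairs; and since any separating set must contain $L_1$ to block the fork $X_i \leftarrow L_1 \to X_j$, no separator with support below $r$ exists, whence Theorem~\ref{the:graph} yields $\mathrm{Rank}(\mathcal{T}_{(X_i,X_j,X_k)}) = r$ in this configuration as well.
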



    

\begin{example}[Finding causal clusters]
    Let's take Fig. \ref{fig:illust_simple_example}(a) as an example. One can find that for $\{X_1, X_2, X_3, X_k\}$, where $X_k \in \mathbf{X} \setminus \{X_1, X_2, X_3\}$, the rank of tensor $\mathcal{T}_{(X_1, X_2, X_3, X_k)}$ is $r$. Thus, $\{X_1, X_2, X_3\}$ is identified as a causal cluster.
\end{example}

Next, we consider the practical issues involved in determining the number of latent variables by causal clusters. That is, there are some causal clusters that should be merged because they share one latent parent. We find that the overlapping clusters can be directly merged into one cluster. This is because the overlapping clusters have the same latent variable as the parent under the discrete LSM model. The validity of the merge step is guaranteed by Proposition \ref{pro_merger}.

\begin{proposition}[Merging Rule]\label{pro_merger}
In the discrete LSM model, for two causal clusters $C_1$ and $C_2$, if $C_1 \cap C_2 \neq \emptyset$, then $C_1$ and $C_2$ share the same latent parent.
\end{proposition}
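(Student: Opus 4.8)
The plan is to read off the result from the definition of a causal cluster together with the purity structure of the discrete LSM. By definition, all members of $C_1$ share a common latent parent, say $L_1$, and all members of $C_2$ share a common latent parent, say $L_2$. The entire claim then reduces to the single identity $L_1 = L_2$: once this is known, every variable in $C_1 \cup C_2$ is a child of the common latent $L_1 = L_2$, which is precisely what ``sharing the same latent parent'' means. So I would fix an arbitrary $X$ in the (nonempty) intersection $C_1 \cap C_2$ and note that both $L_1$ and $L_2$ are latent parents of $X$.

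The crux is to show that a variable sitting in an identified causal cluster is a \emph{pure} child, i.e. has a unique latent parent, so that the two parents of $X$ are forced to coincide. I would argue this from the rank certificate underlying Proposition \ref{prop_cluster}: a set is accepted as a cluster only when $\mathrm{Rank}(\mathcal{T}_{(X_i,X_j,X_k,X_s)}) = r$ holds for \emph{every} remaining observed variable $X_s$, where $r = |\mathrm{supp}(L)|$. Suppose toward a contradiction that $X$ had two distinct latent parents $L_1 \neq L_2$. Invoking the Three-Pure Child Variable Assumption, choose $X_s$ to be a pure child of $L_2$, hence not a child of $L_1$. Then separating $X$ from the remaining members of its cluster (pure children of $L_1$) forces $L_1$ into any d-separating set via the fork $X \leftarrow L_1 \rightarrow X_j$, while separating $X$ from $X_s$ forces $L_2$ via the fork $X \leftarrow L_2 \rightarrow X_s$; no single latent can block both. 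Thus every d-separating set must condition on the two distinct latents $L_1$ and $L_2$, so by Theorem \ref{the:graph} the corresponding rank strictly exceeds $r$ (under the Faithfulness and Full Rank assumptions the two distinct latents do not collapse to joint support $r$), contradicting the certificate. Hence every clustered variable is pure.

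It then follows immediately that the unique latent parent of $X$ equals both $L_1$ and $L_2$, giving $L_1 = L_2$, and therefore $C_1$ and $C_2$ share this common latent parent, completing the argument. I expect the genuine obstacle to lie entirely in the middle step---certifying that a rank-identified cluster contains only pure children---since this is the one place where the purity and three-pure-child structure, rather than the bare set-theoretic definition of a cluster, is actually used; the remaining steps are immediate.
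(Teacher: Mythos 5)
Your proof is correct in substance, but it takes a genuinely different route from the paper's. The paper's own proof is essentially two lines: take a shared element $X_i \in C_1 \cap C_2$, observe that it would then have both $L_1$ and $L_2$ as latent parents, and conclude that this contradicts the pure-child structure of the discrete LSM --- that is, purity of clustered variables is read directly off the model, not derived. You instead decline to assume the shared element is pure and recover its purity from the rank certificate of Proposition \ref{prop_cluster}, in effect re-running the d-separation argument the paper itself uses to prove $\mathcal{R}$ule~2 of that proposition: pick a pure child $X_s$ of $L_2$ outside the cluster, note that the forks $X \leftarrow L_1 \to X_j$ and $X \leftarrow L_2 \to X_s$ can only be blocked by conditioning on $L_1$ and $L_2$ respectively, so every d-separating set contains both distinct latents and hence has support exceeding $r$, contradicting $\mathrm{Rank}(\mathcal{T}_{(X_i,X_j,X_k,X_s)}) = r$ via Theorem \ref{the:graph}. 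What your route buys is a weaker premise (no a priori purity of the overlapping variable); what it costs is scope: your argument applies only to clusters that carry the rank certificate, i.e.\ those output by Algorithm~1, whereas the proposition as literally stated concerns causal clusters in the sense of the paper's definition (any set sharing a common latent parent). Since the merging rule is only ever invoked on certified clusters inside Algorithm~1, this mismatch is harmless in context, but it is worth flagging. Two details would tighten your argument: first, you should justify that a pure child $X_s$ of $L_2$ exists \emph{outside} $C_1$ --- this is immediate, since pure children of $L_2$ have no parent other than $L_2$ and so cannot be among the children of $L_1$ that make up $C_1$, and the Three-Pure Child assumption supplies at least three of them; second, your parenthetical claim that the two distinct latents ``do not collapse to joint support $r$'' really rests on faithfulness condition (ii) alone --- the Full Rank assumption does not exclude a deterministic bijection between $L_1$ and $L_2$, because a permutation matrix is full rank.
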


Based on the above results, one can iteratively identify causal clusters and apply the merger rule to detect all latent variables. The identification procedure is summarized in Algorithm 1.

\begin{algorithm}[h]
    \caption{Finding the causal cluster}
    \label{alg:algorithm_1}
    \textbf{Input}: Data from a set of measured variables $\mathbf{X}_{\mathcal{G}}$, and the dimension of latent support $r$\\
    \textbf{Output}: Causal cluster $\mathcal{C}$
    \begin{algorithmic}[1] 
    	\STATE Initialize the causal cluster set $\mathcal{C} \coloneqq \emptyset$, and $\mathcal{G}^\prime= \emptyset$;
        \STATE \Alcomment{\textit{// Identify Causal Skeleton}}
        \STATE \textbf{Begin} the recursive procedure 
        \REPEAT{
        \FOR{each $X_i, X_j$ and $X_k$ $\in \mathbf{X}$}
        \IF{$\mathrm{Rank}(\mathcal{T}_{\{X_i, X_j, X_k\}}) \neq r$}
        \STATE \textbf{Continue};  ~~\Alcomment{\textit{// $\mathcal{R}ule 1$ of Prop. \ref{prop_cluster}}}
        \ENDIF
        \IF{$\mathrm{Rank}(\mathcal{T}_{\{X_i, X_j, X_k, X_s\}})=r$, for all $X_s \in \mathbf{X} \setminus \{X_i, X_j, X_k\}$}
        \STATE $\mathbf{C} = \mathbf{C} \cup \{\{X_i, X_j, X_k\}\}$;
        \ENDIF
        \ENDFOR
        }
        \UNTIL{no causal cluster is found.}
        \STATE \Alcomment{\textit{// Merging cluster and introducing latent variables}}
        \STATE Merge all the overlapping sets in $\mathbf{C}$ by Prop. \ref{pro_merger}.
        \FOR{each $C_i \in \mathbf{C}$}
        \STATE Introduce a latent variable$L_i$ for $C_i$;
        \STATE $\mathcal{G} = \mathcal{G} \cup \{L_i \to X_j|X_j \in C_i \}$.
        \ENDFOR
        \STATE \textbf{return} Graph $\mathcal{G}$ and causal cluster $\mathcal{C}$.
    \end{algorithmic}
\end{algorithm}

\begin{theorem}[Identification of the measurement model]\label{the:mm}
    In the discrete LSM model, suppose Assumption \ref{ass_1} $\sim$ Assumption \ref{ass_3} hold. The measurement model is fully identifiable by Algorithm 1.
\end{theorem}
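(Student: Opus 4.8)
The plan is to prove Theorem \ref{the:mm} as a soundness-and-completeness argument for Algorithm 1, showing that its output bipartite graph $\mathcal{G}$ coincides with the true latent-to-observed edge set. The target object here is the measurement model alone, i.e., the assignment of each observed variable to its latent parent together with the number of latent variables; the relations \emph{among} the latents are deferred to the structure-model result. I would organize the proof around the three ingredients the algorithm uses in turn: the determination of the common latent support $r$, the triple-based cluster test, and the merging step, invoking Proposition \ref{Pro_rank}, Proposition \ref{prop_cluster}, and Proposition \ref{pro_merger} respectively. First I would fix $r$: by Proposition \ref{Pro_rank}, for any pair $X_i,X_j$ the rank of $\mathcal{T}_{(X_i,X_j)}$ equals $|\mathrm{supp}(L)|$, so the threshold $r$ fed to the algorithm is exactly the common support size of every latent (using the same-support convention of this section), against which all later rank tests are compared.

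Next I would establish soundness and completeness of the cluster-detection loop. Soundness is immediate from Proposition \ref{prop_cluster}: any triple $\{X_i,X_j,X_k\}$ that survives $\mathcal{R}ule 1$ (rank equal to $r$) and passes $\mathcal{R}ule 2$ (four-way rank still $r$ for every external $X_s$) shares a common latent parent, so every set placed in $\mathbf{C}$ is a genuine subset of the pure children of a single latent. For completeness I would argue the converse on pure children: if $X_i,X_j,X_k$ are pure children of the same latent $L$, then $L$ (of support $r$) d-separates every pair among them, and by the Purity Assumption it also d-separates each of them from any external $X_s$, since every path leaving the triple must begin with an edge $L \to X_i$ and hence meet $L$ as a non-collider, which conditioning on $L$ blocks. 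Thus $L$ d-separates all pairs in $\{X_i,X_j,X_k,X_s\}$, and by Theorem \ref{the:graph} together with the Faithfulness Assumption the four-way rank equals $r$ for every $X_s$, so the triple is detected. Since the Three-Pure-Child Variable Assumption guarantees at least three pure children per latent, every latent contributes at least one, and in fact all, of its pure-children triples to $\mathbf{C}$.

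Finally I would show the merging step recovers exactly one cluster per latent. Pure children have a unique latent parent, so detected triples belonging to distinct latents are disjoint and never merge. Within a single latent, the detected triples form a co-occurrence pattern in which every pair of its pure children appears together in some triple (there are at least three such children), so the transitive closure of the overlap relation connects them all; by Proposition \ref{pro_merger} they merge into one cluster equal to the full set of that latent's pure children. Introducing one latent variable per merged cluster together with the corresponding edges then reproduces the true measurement bipartite graph, and termination follows because $\mathbf{X}$ is finite and the loop only enumerates triples.

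The main obstacle is the completeness-plus-separation step: one must verify that the universally quantified $\mathcal{R}ule 2$ is simultaneously passed by \emph{every} same-parent triple, so that no latent is missed, while ensuring that cross-latent triples either fail the test or, being built from disjoint pure children, cannot be merged into a common cluster. This is precisely where the Purity, Three-Pure-Child, and Sufficient Observation assumptions, rather than the rank machinery alone, carry the argument, and I would expect the careful d-separation bookkeeping for the external variable $X_s$ to be the most delicate part of the write-up.
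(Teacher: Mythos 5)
Your proposal is correct and follows essentially the same route as the paper's own proof, which simply invokes Proposition \ref{prop_cluster} for the cluster-detection loop and Proposition \ref{pro_merger} for the merging step. Your write-up is in fact more complete than the paper's brief argument, since you also spell out the completeness direction (that every genuine pure-children triple passes $\mathcal{R}ule 2$, via the d-separation bookkeeping for the external $X_s$ together with Theorem \ref{the:graph}), which the paper leaves implicit.
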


\subsection{Identification of the structure model}\label{sec42}

Once the measurement model is identified, the observed children can serve as proxies for the latent variables, enabling the identification of the causal structure among them. Here, we employ constraint-based framework to learn the causal structure of latent variables.

Constraint-based structure learning algorithms find the Markov equivalence class over a set of variables by making decisions about independence and conditional independence among them. Given a pure and accurate measurement model with at least two measures per latent variable, we can test for independence and conditional independence (CI) among the latent variables. Specifically, to test statistical independence between discrete variables, one can examine whether the rank of their joint distribution contingency table is one \citep{sullivant2018algebraic}. For testing conditional independence (CI) relations among latent variables, further leveraging the algebraic properties of the tensor rank condition is required (see Theorem \ref{the:d-separation}).


\begin{theorem}[d-separation among latent varaible]\label{the:d-separation}
     In the discrete LSM model, suppose Assumption \ref{ass_1} $\sim$  Assumption \ref{ass_3} hold.  Let $r$ be the dimension of latent support, then $L_i \bot L_j | \mathbf{L}_p$ if and only if $\mathrm{Rank}(\mathcal{T}_{(X_i, X_j, \mathbf{X}_{p1}, \mathbf{X}_{p2})}) = r^{|\mathbf{L}_p|}$, where $X_i$ and $X_j$ are the pure children of $L_i$ and $L_j$, $\mathbf{X}_{p1}$ and $\mathbf{X}_{p2}$ are two disjoint child set of $\mathbf{L}_p$ that satisfy $\forall L_i \in \mathbf{L}_p, \mathrm{Ch}_{L_i} \cap \mathbf{X}_{p1} \neq \emptyset, \mathrm{Ch}_{L_i} \cap \mathbf{X}_{p2} \neq \emptyset$.

\end{theorem}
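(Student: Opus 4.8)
The plan is to reduce the statement entirely to the graphical characterization of tensor rank in Theorem \ref{the:graph}. Writing $\mathbf{W} = \{X_i, X_j\} \cup \mathbf{X}_{p1} \cup \mathbf{X}_{p2}$, that theorem tells us $\mathrm{Rank}(\mathcal{T}_{(\mathbf{W})})$ equals the smallest value of $|\mathrm{supp}(\mathbf{S})|$ over all sets $\mathbf{S} \subset \mathbf{V}$ that d-separate every pair of variables in $\mathbf{W}$. Since all latent variables share support size $r$, we have $|\mathrm{supp}(\mathbf{L}_p)| = r^{|\mathbf{L}_p|}$, so the theorem becomes the claim that this minimal separating support equals $r^{|\mathbf{L}_p|}$ exactly when $L_i \perp L_j \mid \mathbf{L}_p$. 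I would prove a two-sided bound on the minimal support and then read off both directions.

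First I would establish the lower bound, which holds unconditionally and is where the two disjoint child sets are used. For each $L_l \in \mathbf{L}_p$ the hypothesis on $\mathbf{X}_{p1}, \mathbf{X}_{p2}$ guarantees a child $X_{l,1} \in \mathbf{X}_{p1} \cap \mathrm{Ch}_{L_l}$ and a distinct child $X_{l,2} \in \mathbf{X}_{p2} \cap \mathrm{Ch}_{L_l}$. Any admissible separator $\mathbf{S}$ must block the two-edge fork path $X_{l,1} \leftarrow L_l \to X_{l,2}$, and the only node that can block such a fork path is its apex, forcing $L_l \in \mathbf{S}$. Hence $\mathbf{L}_p \subseteq \mathbf{S}$ for every admissible $\mathbf{S}$, and since enlarging a variable set cannot shrink its support, $|\mathrm{supp}(\mathbf{S})| \geq |\mathrm{supp}(\mathbf{L}_p)| = r^{|\mathbf{L}_p|}$.

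Next, for the forward direction I would show $\mathbf{L}_p$ itself is a valid separator precisely when $L_i \perp L_j \mid \mathbf{L}_p$, taking $X_i, X_j$ and the members of $\mathbf{X}_{p1}, \mathbf{X}_{p2}$ to be pure children (available by the three-pure-child assumption). Purity means every path leaving such a child passes through its unique latent parent, so conditioning on $\mathbf{L}_p$ blocks at that parent every pair drawn from $\mathbf{X}_{p1} \cup \mathbf{X}_{p2}$, as well as every pair $(X_i, \cdot)$ or $(X_j, \cdot)$ with the second entry in $\mathbf{X}_{p1}\cup \mathbf{X}_{p2}$. The single remaining pair $(X_i, X_j)$ is blocked by $\mathbf{L}_p$ iff $L_i \perp L_j \mid \mathbf{L}_p$, since $X_i, X_j$ are pure children of $L_i, L_j$. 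Thus when $L_i \perp L_j \mid \mathbf{L}_p$ the set $\mathbf{L}_p$ attains the lower bound, giving $\mathrm{Rank}(\mathcal{T}_{(\mathbf{W})}) = r^{|\mathbf{L}_p|}$.

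For the converse I would argue the contrapositive: if $L_i \not\perp L_j \mid \mathbf{L}_p$, the pair $(X_i, X_j)$ is not blocked by $\mathbf{L}_p$, so no admissible separator equals $\mathbf{L}_p$; every separator strictly contains it (for instance $\mathbf{L}_p \cup \{L_i\}$ works, with support $r^{|\mathbf{L}_p|+1}$). It remains to exclude a superset $\mathbf{S} \supsetneq \mathbf{L}_p$ whose support stays at $r^{|\mathbf{L}_p|}$: such an $\mathbf{S}$ would make $\mathbf{S}\setminus\mathbf{L}_p$ deterministic given $\mathbf{L}_p$, so conditioning on $\mathbf{S}$ and on $\mathbf{L}_p$ induce the same partition, and $X_i \perp X_j \mid \mathbf{S}$ would force $X_i \perp X_j \mid \mathbf{L}_p$ in the distribution, contradicting the Faithfulness Assumption. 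Hence the minimal support is strictly larger than $r^{|\mathbf{L}_p|}$, so $\mathrm{Rank}(\mathcal{T}_{(\mathbf{W})}) \neq r^{|\mathbf{L}_p|}$, which completes the equivalence. I expect the main obstacle to lie in making these two edge cases fully watertight: the ``every separator contains $\mathbf{L}_p$'' step and the multiplicativity $|\mathrm{supp}(\mathbf{L}_p)| = r^{|\mathbf{L}_p|}$ when the conditioning children are potentially impure, and ruling out a support-preserving superset in the converse, which is precisely where the Faithfulness (minimal-support) Assumption must be invoked rather than pure graphical reasoning.
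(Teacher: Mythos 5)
Your proposal is correct and follows the same core route as the paper: both reduce the statement to the graphical criterion of Theorem \ref{the:graph} and argue that $\mathbf{L}_p$ is the minimal-support separating set for $\{X_i, X_j\} \cup \mathbf{X}_{p1} \cup \mathbf{X}_{p2}$. Where you differ is in rigor at exactly the two places the paper is loose. For the ``if'' direction, the paper simply asserts that ``the minimal conditional set that d-separates all variables in $\mathbf{X}_{p1}$ from those in $\mathbf{X}_{p2}$ is their common latent parent set $\mathbf{L}_p$''; your fork-path argument (the path $X_{l,1} \leftarrow L_l \to X_{l,2}$ can only be blocked at its apex, so every admissible separator contains $\mathbf{L}_p$) turns that assertion into an unconditional lower bound on the support of any separator. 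For the ``only if'' direction, the paper only observes that $\mathbf{L}_p$ itself fails to separate $X_i$ from $X_j$ and immediately concludes $\mathrm{Rank} \neq r^{|\mathbf{L}_p|}$ --- which, read literally against Theorem \ref{the:graph}, leaves open the possibility that some \emph{other} set of support exactly $r^{|\mathbf{L}_p|}$ does the separating. Your contrapositive closes this hole: by the lower bound every separator strictly contains $\mathbf{L}_p$, and a proper superset whose support does not grow would make the extra variables deterministic given $\mathbf{L}_p$, collapsing conditioning on $\mathbf{S}$ to conditioning on $\mathbf{L}_p$ and contradicting faithfulness. So your argument is strictly tighter than the paper's at this step. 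The one assumption you share with the paper without proof is the multiplicativity $|\mathrm{supp}(\mathbf{L}_p)| = r^{|\mathbf{L}_p|}$ (i.e., that no joint configuration of $\mathbf{L}_p$ has probability zero); the paper also merely asserts this, so flagging it as an obstacle, as you do, is fair rather than a defect of your proof relative to the paper's.
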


Intuitively, based on the graphical criteria of tensor rank condition, $\mathbf{L}_p$ is a minimal conditional set in the causal graph that d-separates $\mathbf{X}_{p1}$ and $\mathbf{X}_{p2}$ and hence the rank of tensor $\mathcal{T}_{(X_i, X_j, \mathbf{X}_{p1}, \mathbf{X}_{p2})}$ is the dimension of support of $\mathbf{L}_p$, if $X_i$ and $X_j$ also be d-separated by $\mathbf{L}_p$. An illustrative example is given below.

\begin{example}[CI test among latent variables]
    Consider the structure in Fig. \ref{fig:illust_simple_example}(a). Suppose $r = 2$. By selecting $\mathbf{X}_{p1} = \{ X_4, X_7\}$ and $\mathbf{X}_{p2} =\{X_5, X_8\}$ to be two disjoint child set of $\{L_2, L_3\}$ respectively, let $\mathbf{X}_p = \{X_1, X_{10}, X_4, X_5, X_7, X_8\}$, one can see that the rank of tensor $\mathcal{T}_{(\mathbf{X}_p)}$ is four (due to $\{L_2, L_3\}$ is minimal conditional set that d-separates any pair variable in $\mathbf{X}_p$), which imply that $L_1 \Vbar L_4 |\{L_2, L_3\}$.
\end{example}

Based on Theorem \ref{the:d-separation}, we introduce the PC-TENSOR-RANK algorithm. This method accepts a measurement model learned by the previous procedure, and outputs the Markov equivalence class of the structural model associated with the latent variables within the measurement model, in accordance with the PC algorithm. The implementation is summarised as Algorithm 2. Consequently, we establish the identification of the structure model as shown in Theorem \ref{the_stru}.

\begin{algorithm}[h]
    \caption{PC-TENSOR-RANK}
    \label{alg:algorithm_3}
    \textbf{Input}: Data set $\mathbf{X} = \{ {X_1},\dots,{X_m}\} $ and causal cluster $\mathcal{C}$\\
    \textbf{Output}: A partial DAG $\mathcal{G}$.
    \begin{algorithmic}[1] 
    \STATE Initialize the maximal conditions set dimension $k = 2$;
    \STATE Let $L_i$ denote as $C_i$, $C_i \in \mathcal{C}$;
    \STATE Form the complete undirected graph $\mathcal{G}$ on the latent variable set ${\mathbf{L}}$;
    \FOR{$\forall L_i, L_j\in {\mathbf{L}}$ and adjacent in $\mathcal{G}$}
    \STATE \Alcomment{\textit{//Test the CI relations among latent variables by Theorem \ref{the:d-separation}}}
    \IF{$\exists \mathbf{L}_p\subseteq {\mathbf{L}} \setminus \{L_i,L_j\}$ and ($|\mathbf{L}_p|< k$) such that $L_i \Vbar L_j |  \mathbf{L}_p$ hold}
    \STATE delete edge $L_i-L_j$ from $G$;
    \ENDIF
    \ENDFOR
    \STATE Search V structures and apply meek rules \cite{meek1995causal}.
    \STATE \textbf{return} a partial DAG $\mathcal{G}$ of latent variables.
    \end{algorithmic}
\end{algorithm}


\begin{theorem}[Identification of structure model]\label{the_stru}
In the discrete LSM model, suppose Assumption \ref{ass_1} $\sim$ Assumption \ref{ass_3} holds. Given the measurement model, the causal structure over the latent variable is identified up to a Markov equivalent class by the PC-TESNOR-RANK algorithm.
    
\end{theorem}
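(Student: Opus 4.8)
The plan is to reduce the statement to the classical soundness and completeness of the PC algorithm, by exhibiting a correct conditional-independence oracle on the latent variables and verifying that the marginal law of $\mathbf{L}$ is Markov and faithful to the induced latent DAG. Concretely, I would proceed in three stages: (i) show that marginalizing out the observed variables leaves the d-separation structure among latents intact, so that $\mathbb{P}(\mathbf{L})$ is faithful to the subgraph $\mathcal{G}_{\mathbf{L}}$ of $\mathcal{G}$ restricted to $\mathbf{L}$; (ii) show that the tensor-rank test used in line 6 of Algorithm 2 is an exact oracle for $L_i \perp L_j \mid \mathbf{L}_p$; and (iii) invoke the standard PC correctness argument to conclude that the returned partial DAG is precisely the CPDAG of $\mathcal{G}_{\mathbf{L}}$.

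First I would establish (i). By the Purity Assumption there are no edges among observed variables, and since every observed variable has only latent parents, each $X \in \mathbf{X}$ is a sink (leaf) in $\mathcal{G}$. Marginalizing leaves from a DAG neither creates nor destroys any d-separation relation among the remaining vertices, so the d-separations among $\mathbf{L}$ in $\mathcal{G}$ coincide with those read off $\mathcal{G}_{\mathbf{L}}$. Combined with the Causal Markov Assumption (Assumption \ref{ass_1}) and the Faithfulness Assumption, this yields that $\mathbb{P}(\mathbf{L})$ is Markov and faithful to $\mathcal{G}_{\mathbf{L}}$; hence the conditional-independence relations among the latents are exactly the d-separations of a single DAG, which is the hypothesis the PC algorithm requires.

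Next I would prove (ii) by reducing every CI query $L_i \perp L_j \mid \mathbf{L}_p$ to a rank computation. The Three-Pure-Child Variable Assumption guarantees that each latent $L \in \mathbf{L}_p$ has at least three pure children, so I can always select two disjoint nonempty child sets $\mathbf{X}_{p1}, \mathbf{X}_{p2}$ meeting the hypothesis of Theorem \ref{the:d-separation} together with pure proxies $X_i, X_j$ for $L_i, L_j$. Theorem \ref{the:d-separation} then gives $L_i \perp L_j \mid \mathbf{L}_p \iff \mathrm{Rank}(\mathcal{T}_{(X_i, X_j, \mathbf{X}_{p1}, \mathbf{X}_{p2})}) = r^{|\mathbf{L}_p|}$, where $r = |\mathrm{supp}(L)|$ is common to all latents, and the exactness of the exponent count relies on the Full Rank Assumption (Assumption \ref{ass_3}) together with the dimension-minimality clause of Faithfulness. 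Since the measurement model, and thus each cluster's pure children, is already correctly recovered by Algorithm 1 (Theorem \ref{the:mm}), these proxies are available and the oracle is well defined and correct for every query issued during the adjacency search.

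Finally, with a correct CI oracle and a faithful single-DAG target, the skeleton phase of PC removes exactly the non-adjacent latent pairs, the v-structure phase orients all unshielded colliders, and the Meek rules complete the orientation, so the output equals the CPDAG of $\mathcal{G}_{\mathbf{L}}$, i.e.\ its Markov equivalence class. I expect the main obstacle to be stage (ii): one must verify that the rank equals $r^{|\mathbf{L}_p|}$ if and only if $\mathbf{L}_p$ d-separates the chosen proxies, ruling out both rank collapse not caused by d-separation (handled by the Full Rank Assumption) and accidental low-rank coincidences (excluded by Faithfulness), and that a valid pair $(\mathbf{X}_{p1},\mathbf{X}_{p2})$ of disjoint child sets exists for every conditioning set the search encounters, which is precisely what the three-pure-child count secures.
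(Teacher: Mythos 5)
Your proposal is correct and follows essentially the same route as the paper's proof: reduce the claim to the classical correctness of the PC algorithm, using Theorem~\ref{the:d-separation} (with pure children from the measurement model of Theorem~\ref{the:mm} as proxies) as an exact conditional-independence oracle over the latents. The paper's own argument is far terser --- it simply cites Theorem~\ref{the:d-separation}, the testability condition $r < d_i$, and PC correctness --- so your explicit stage (i), showing that marginalizing the leaf-node observed variables preserves d-separation and hence Markov/faithfulness of $\mathbb{P}(\mathbf{L})$ with respect to $\mathcal{G}_{\mathbf{L}}$, is a welcome filling-in of a step the paper leaves implicit.
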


\subsection{Practical Test for Tensor Rank}\label{sec_pratest}

In our theoretical results, the key issue is to test the rank of a tensor, which involves estimating the dimension of latent support and the rank of a tensor. Here, we aim to explore methods to (i) estimate the rank of the contingency matrix for determining the dimension of latent support, and (ii) apply the goodness-of-fit test to assess the tensor rank.

\paragraph{Estimate the rank of contingency matrix.}
We start with the estimation of the dimension of latent variables support based on Prop. \ref{Pro_rank}. There are many practical approaches used to estimate the rank of a general matrix $\mathrm{M}$, such as \cite{camba2009statistical}. In our implementation, we use the characteristic root statistic, abbreviated as CR statistic \cite{robin2000tests}, to test the rank of the probability contingency matrix of two observed variables. Specifically, Let $\tilde{\mathrm{M}}$ be an asymptotically normal estimator of $\mathrm{M}$, then the CR statistic is the sum of $d-r$ smallest singular values of $\tilde{\mathrm{M}}$, multiplied by the sample size. Under the null hypothesis, the above statistic converges in distribution to a weighted (given by the eigenvalues) sum of independent $\mathcal{X}^{2}_{1}$ random variables \cite{robin2000tests}.


\paragraph{Goodness-of-fit test for tensor rank.}

Once the dimension of the support of latent variables is identified, in the structure learning procedure, we perform the following hypotheses test: $\mathcal H_0$: $\mathrm{Rank}(\mathcal{T})=r$ \textit{v.s.} $\mathcal H_1$: $\mathrm{Rank}(\mathcal{T})\neq r$. To achieve this, we first apply the CP decomposition technology to the target tensor $\mathcal{T}$ as a sum of $r$ rank-one tensors given specified $r$, then we evaluate how well the reconstructed tensor from this decomposition approximates the original tensor to conduct the hypotheses test.

To perform the rank-decomposition with specified $r$ on the probability contingency tensor, one can use the non-negative CP decomposition to decompose the tensor into the sum of $r$ rank-one tensor \cite{shashua2005non}. Given the decomposition, one can obtain a reconstructed tensor, denoted by $\tilde{\mathcal{T}}$, from the outer product of decomposed vectors.

With the reconstructed tensor, we constructed square-chi goodness of fit test \cite{cochran1952chi2} for testing $\mathrm{Rank}(\mathcal{T})=r$. Such a test is frequently used to summarize the discrepancy between observed values and the expected values, which measure the sum of differences between observed and expected outcome frequencies. Let $\textbf{vec}(\mathcal{T})$ be the vectorization of tensor $\mathcal{T}$, suppose $\textbf{vec}(\tilde{\mathcal{T}})$ be the asymptotic normality estimator of $\textbf{vec}(\mathcal{T})$, we have the chi-square statistic as 
    $\mathcal{X}^2 = \sum_{i \in \textbf{vec}(\tilde{\mathcal{T}})} \frac{(\textbf{vec}(\mathcal{T})_i -\textbf{vec}(\tilde{\mathcal{T}})_i)^2}{\textbf{vec}(\tilde{\mathcal{T}})_i}$, which follows the $\mathcal{X}^2$ distribution with freedom degrees $ \prod_{i \in [n]} d_i - (\sum_{i,j \in [n]} d_id_j)$.

\section{Simulation Studies}

In this section, we conducted simulation studies to assess the correctness of the proposed methods in causal structure learning tasks. The baseline approaches include Building Pure Cluster (BPC) \cite{Silva-linearlvModel}, Latent Tree Model (LTM) \cite{choi2011latenttree}, and Bayesian Pyramid Model (BayPy) \cite{gu2023bayesian}.


In the following simulation studies, we consider the different combinations of various types of structure models(SM) and measurement models(MM). Specifically, for the structure model, we consider the following five typical cases: [SM1]: $L_1 \to L_2$; [SM2]: ${L_1 \to L_2 \to L_3}$; [SM3]: the structure of latent variables is shown in Fig. \ref{fig:illust_simple_example}(a); [Collider]: $L_1 \to L_2 \leftarrow L_3$; [Star]: $L_1 \to L_2, L_1 \to L_3, L_1 \to L_4$. For the measurement model, we consider the following two cases: [MM1]: each latent variable has three pure observed variables, i.e., $L_i \to \{X_1, X_2, X_3\}$; [MM2]: each latent variable has four pure observed variables, i.e., $L_i \to \{X_1, X_2, X_3, X_4\}$. 

In all cases, the data generation process follows the discrete LSM model: (i) we generate the probability contingency table of latent variables in advance, according to different latent structures (e.g., SM1), then (ii) we generate the conditional contingency table of observed variables (condition on their latent parent), and finally (iii) we sample the observed data according to the probability contingency table, where the dimension of latent support $r$ is set to 3 and the dimension of all observed variables support is set to 4, sample size ranged from $\{5k, 10k, 50k\}$.

For each simulation study, we randomly generate the dataset and apply the proposed algorithm and baselines to these data. We use the following scores for evaluating the performance of causal clusters from each algorithm: \textbf{latent omission}, \textbf{latent commission}, and \textbf{mismeasurement}. Moreover, to assess the ability of these algorithms to correctly discover the causal structure among latent variables, we use the metric like \textbf{edge omission (EO)}, \textbf{edge commission (EC)}, and \textbf{orientation omission (OO)}. These metrics can be referred to \cite{Silva-linearlvModel}, in which the tasks are aligned with our work. Each experiment
was repeated ten times with randomly generated data, and the results were averaged.



\begin{center}
\begin{table*}[htp!]
	\small
	\center \caption{Results on learning pure measurement models, where the data is generated by the discrete LSM. Lower value means higher accuracy.}
	\label{tab:mixed distribution}
	\resizebox{1.0\textwidth}{!}{
	\begin{tabular}{cccccccccccccc}
		\hline  \multicolumn{2}{c}{} &\multicolumn{4}{c}{\textbf{Latent omission}} & \multicolumn{4}{c}{\textbf{Latent commission}} & \multicolumn{4}{c}{\textbf{Mismeasurements}}\\
		\multicolumn{2}{c}{Algorithm} & \textbf{Our} & BayPy  & LTM & BPC & \textbf{Our} & BayPy & LTM & BPC & \textbf{Our} & BayPy & LTM & BPC \\
		\hline 
		 & 5k & 0.15(3) & 0.10(2) & 0.15(3) & 0.96(10) 
        		 & 0.00(0) & 0.10(2)& 0.00(0) & 0.00(0) 
        		 & 0.05(1) & 0.00(0) & 0.00(0) & 0.00(0) \\
		{\emph{$SM_1 + MM_1$}} &10k &  0.05(1) &0.05(1) & 0.10(2) & 0.90(10) 
                		& 0.00(0)&0.05(1) & 0.00(0) & 0.00(0) 
                		& 0.00(0) & 0.00(0)& 0.00(0) & 0.00(0) \\
		&50k & 0.00(0) & 0.00(0) & 0.00(0) & 0.90(10) 
        		& 0.00(0) & 0.00(0)& 0.00(0) & 0.00(0) 
        		& 0.00(0) & 0.00(0) & 0.00(0) & 0.00(0) \\
		\hline 
		 & 5k 
		 & 0.23(5) & 0.19(6) & 0.26(6) & 0.90(10) 
		 & 0.00(0) & 0.19(6) & 0.03(1) & 0.00(0)
		 & 0.05(2) & 0.19(6) & 0.23(6) & 0.00(0)  \\
		{\emph{$SM_2 + MM_1$}} &10k 
		& 0.13(4) & 0.13(4) & 0.13(4) & 0.86(10) 
		& 0.00(0) & 0.03(4) & 0.00(0) & 0.00(0) 
		& 0.00(0) & 0.13(4) & 0.13(4) & 0.00(0)\\
            &50k
		& 0.06(2) & 0.10(3) & 0.10(3) & 0.86(10)
		& 0.00(0) & 0.13(4) & 0.00(0) & 0.00(0)
		& 0.00(0) & 0.13(4) & 0.10(3)  & 0.00(0)\\
		\hline 
		 & 5k 
		 & 0.12(2) & 0.19(6) & 0.21(5) & 0.90(10) 
		 & 0.00(0) & 0.19(6) & 0.00(0) & 0.00(0) 
		 & 0.03(1) & 0.16(6) & 0.21(5) & 0.00(0)\\
		{\emph{$SM_2 + MM_2$}} &10k 
		& 0.03(1) & 0.13(4) & 0.10(3) & 0.86(10) 
		& 0.00(0) & 0.13(4) & 0.00(0) & 0.00(0) 
		& 0.00(0) & 0.11(4) &0.10(3)  & 0.00(0)\\
            &50k 
		& 0.00(0) & 0.07(2) & 0.07(2) & 0.83(10)
		& 0.00(0) & 0.07(2) & 0.00(0) & 0.00(0)
		& 0.00(0) & 0.07(2) & 0.06(2) & 0.00(0)\\
		\hline 
		 & 5k
		 &0.25(6) &0.30(6) & 0.55(10) & 0.86(10)
		 &0.00(0) &0.30(6) & 0.00(0) & 0.00(0)
		 &0.12(5) &0.20(6) & 0.55(10) & 0.00(0)\\
		{\emph{$SM_3 + MM_1$}} &10k
	     &0.17(5) &0.25(5) & 0.50(10) & 0.83(10)
		 &0.00(0) &0.25(5) & 0.00(0) & 0.00(0)
		 &0.05(3) &0.16(5) & 0.50(10) & 0.00(0)\\
		&50k
		 &0.08(3) &0.20(4) & 0.50(10) & 0.83(10)
		 &0.00(0) &0.20(4) & 0.00(0) & 0.00(0)
		 &0.03(2) &0.13(4) & 0.50(10) & 0.00(0)\\    
		\hline 
	\end{tabular}}
\label{Tab_1}
\end{table*}
\end{center}


\begin{center}
\begin{table*}[htp!]
	\small
	\center \caption{Results on learning the structure model. The symbol '-' indicates that the current method does not output this information. Lower value means higher accuracy.}
	\label{tab:mixed distribution}
	\resizebox{1.0\textwidth}{!}{
	\begin{tabular}{cccccccccccccc}
		\hline  \multicolumn{2}{c}{} &\multicolumn{4}{c}{\textbf{Edge omission}} & \multicolumn{4}{c}{\textbf{Edge commission}} & \multicolumn{4}{c}{\textbf{Orientation omission}}\\
		\multicolumn{2}{c}{Algorithm} & \textbf{Our} & BayPy  & LTM & BPC & \textbf{Our} & BayPy & LTM & BPC & \textbf{Our} & BayPy & LTM & BPC \\
		\hline 
		 & 5k & 0.00(0) & 1.00(10) & 0.26(8) & 1.00(10) 
        		 & 0.10(1) & 0.00(0)& 0.00(0) & 0.00(0) 
        		 & 0.10(1) & 1.00(10) & -- & 1.00(0) \\
		{\emph{Collider+$MM_1$}} &10k &  0.00(0) &1.00(10) & 0.23(6) & 1.00(10) 
                		& 0.00(0) &0.02(1) & 0.0(0) & 0.00(0) 
                		& 0.00(0) & 1.00(10) & -- & 1.00(0) \\
		&50k & 0.00(0) & 1.00(10) & 0.10(3) & 1.00(10) 
        		& 0.00(0) & 0.00(0)& 0.00(0) & 0.00(0)
        		& 0.00(0) & 1.00(10) & -- & 1.00(0) \\
		\hline 
		 & 5k 
		 & 0.15(3) & 1.00(10) & 0.16(6) & 1.00(10) 
		 & 0.10(1) & 0.00(0)& 0.00(0) & 0.00(0)
		 & 0.00(0) & 0.00(0) & -- & 0.00(0)  \\
		{\emph{$SM_2 + MM_1$}} &10k 
		& 0.05(1) & 1.00(10) & 0.13(4) & 1.00(10) 
		& 0.01(1) & 0.00(0)& 0.00(0) & 0.00(0) 
		& 0.00(0) & 0.00(0) & -- & 0.00(0) \\
            &50k
		& 0.00(0) & 1.00(10) & 0.10(3) & 1.00(10)
		& 0.00(0) & 0.00(0)& 0.00(0) & 0.00(0)
		& 0.00(0) & 0.00(0) & -- & 0.00(0) \\
		\hline 
		 & 5k 
		 & 0.10(3) & 1.00(10) & 0.25(5) & 1.00(10) 
		 & 0.20(5) & 0.00(0)& 0.00(0) & 0.00(0)
		 & 0.00(0) & 0.00(0) & -- & 0.00(0) \\
		{\emph{$Star + MM_1$}} &10k 
		& 0.06(2) & 1.00(10) & 0.15(3) & 1.00(10) 
		& 0.08(3) & 0.00(0)& 0.00(0) & 0.00(0) 
		& 0.00(0) & 0.00(0) & -- & 0.00(0) \\
            &50k 
		& 0.03(1) & 1.00(10) & 0.15(3) & 1.00(10)
		& 0.05(2) & 0.00(0)& 0.00(0) & 0.00(0)
		& 0.00(0) & 0.00(0) & -- & 0.00(0) \\
		\hline 
		 & 5k
		 &0.22(7) &1.00(10)& 0.50(10) & 1.00(10)
		 &0.40(6) & 0.00(0)& 0.02(1) & 0.00(0)
		 &0.20(2) &1.00(10) & -- & 1.00(10)\\
		{\emph{$SM_3 + MM_1$}} &10k
	     &0.15(5) &1.00(10) & 0.50(10) & 1.00(10)
		 &0.10(2) & 0.00(0)& 0.00(0) & 0.00(0)
		 &0.10(1) &1.00(10) & -- & 1.00(10)\\
		&50k
		 &0.05(2) &1.00(10) & 0.50(10) & 1.00(10)
		 &0.05(1) & 0.00(0)& 0.00(0) & 0.00(0)
		 &0.00(0) &1.00(10) & -- & 1.00(10)\\    
		\hline 
	\end{tabular}}
\label{tab_2}
\end{table*}
\end{center}

The results are reported in Table \ref{Tab_1} and Table \ref{tab_2}. Our method consistently delivers the best outcomes across most scenarios, demonstrating its capability to identify both the causal clusters and the causal structures of latent variables. In contrast, the BPC approach performs poorly, as it is specifically designed for linear models. Additionally, the LTM and BayPy algorithms show suboptimal performance in structure learning of latent variables due to their limitations to specific structural models, such as tree structures, or assumptions that latent variables are binary. More experimental results and discussions are provided in the appendix.

\section{Real Data Applications}

We now briefly present the results from two real datasets. The first is the political efficacy dataset, collected by \cite{aish1990panel} through a cross-national survey designed to capture information on both conventional and unconventional forms of political participation in industrial societies. This dataset includes 1719 cases obtained in a USA sample. The second dataset, referred to as the depress dataset, is detailed by \cite{joreskog1996lisrel} and comprises twelve observed variables grouped into three latent factors: self-esteem, depression, and impulsiveness, with a total of 204 samples. Our algorithm learns the correct causal structure (including the measurement model and the structure model) for both datasets by first identifying the dimension of latent support as two in the political efficacy dataset and four in the depress dataset. See the appendix for more details.

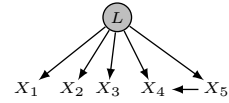
\begin{wrapfigure}{r}{0.3\textwidth} 
\vspace{-16pt}
  \begin{center}
  \begin{tikzpicture}[scale=1.2, line width=0.5pt, inner sep=0.2mm, shorten >=.1pt, shorten <=.1pt]
		\draw (2.2, 2.4) node(L1) [circle, fill=gray!50,minimum size=0.38cm,,draw] {{\tiny\,$L$\,}};
		\draw (1.2,  1.6) node(X1) [] {{\tiny\,$X_1$\,}};
        \draw (1.7,  1.6) node(X2) [] {{\tiny\,$X_2$\,}};
		\draw (2.1,  1.6) node(X3) [] {{\tiny\,$X_3$\,}};
    \draw (2.6,  1.6) node(X4) [] {{\tiny\,$X_4$\,}};
    \draw (3.3,  1.6) node(X5) [] {{\tiny\,$X_5$\,}};

		\draw[-latex] (L1) -- (X1) node[pos=0.5,sloped,above] {};
		\draw[-latex] (L1) -- (X2) node[pos=0.5,sloped,above] {};
        \draw[-latex] (L1) -- (X3) node[pos=0.5,sloped,above] {};
        \draw[-latex] (L1) -- (X4) node[pos=0.5,sloped,above] {};
        \draw[-latex] (L1) -- (X5) node[pos=0.5,sloped,above] {};
        \draw[-latex] (X5) -- (X4) node[pos=0.5,sloped,above] {};
    \end{tikzpicture}
    \caption{Example of the impure structure that can be identified by tensor rank condition.}
    \label{fig:impure}
  \end{center}
  \vspace{-15pt}
  \vspace{1pt}
\end{wrapfigure} 

\section{Discussion and Further Work}

The preceding sections presented how to use tensor rank conditions to locate the latent variables and identify their causal structure in the discrete LSM model. In this section, we examine whether the impure structure (e.g., an edge between observed variables) can be detected through tensor rank conditions. For instance, consider the structure shown in Fig. \ref{fig:impure}. One can observe that for any subset $\{X_i, X_j, X_k\} \subset \{X_1, X_2, X_3, X_4, X_5\}$ where $\{X_4, X_5\} \not\subset \{X_i, X_j, X_k\}$, one has $\mathrm{Rank}(\mathcal{T}_{(X_i, X_j, X_k)}) = |\mathrm{supp}(L)|$. This implies $\{X_1, X_2, X_3\}$ and $X_4$ (or $X_5$) share the common latent parent. Meanwhile, we have $\mathrm{Rank}(\mathcal{T}_{(X_4, X_5)}) = |\mathrm{supp}(X_5)|$. Moreover, we have $\mathrm{Rank}(\mathcal{T}_{(X_2, X_3, X_4, X_5)}) = |\mathrm{supp}(L)|\cdot|\mathrm{supp}(X_5)|$. By the graphical criteria of tensor rank condition, one can infer that there is an edge between $X_4$ and $X_5$, indicating the impure structure can be identified by the tensor rank condition. Developing an efficient algorithm to learn a more general discrete LSM structure that allows impure structure in a principled way is part of our future work.

\section{Conclusion}

We derive a nontrivial algebraic property of a particular type of discrete causal model under proper causal assumptions. We build the connection between tensor rank and the d-separation relations in the causal graph and propose the graphical criteria of tensor rank. By this, the identifiability of causal structure in discrete latent structure models is achieved based on which we proposed an identification to locate latent causal variables and identify their causal structure. We provide a practical test approach for testing the tensor rank and verifying the efficientness of the proposed algorithm via the simulated studies. The proposed theorems and the algorithms take a meaningful step in understanding the causal mechanism of discrete data. Future research along this line includes allowing casual edges between observed variables and allowing hierarchical latent structure.

\clearpage
\normalem
\bibliography{main}
\bibliographystyle{unsrtnat}

\clearpage
\appendix

{\large\bf Supplementary Material}

The supplementary material contains

\begin{itemize}
    \item Graphical Notations;
    \item Example of Tensor Representations of Joint Distribution;
    \item Discussion of Our Assumptions;
    \item Proofs of Main Results;
    \begin{itemize}
        \item Proof of Theorem \ref{the:graph};
        \item Proof of Proposition \ref{Pro_rank};
        \item Proof of Proposition \ref{prop_cluster};
        \item Proof of Proposition \ref{pro_merger};
        \item Proof of Theorem \ref{the:mm};
        \item Proof of Theorem \ref{the:d-separation};
        \item Proof of Theorem \ref{the_stru}.
    \end{itemize}
    \item Extension of Different Latent State Space;
    \item Discussion with the Hierarchical Structures;
    \item Practical Estimation of Tensor Rank;
    \item More Experimental Results;
    \item More Details of Real-world Dataset;
    
\end{itemize}

\section{Graphical Notations}


Below, we provide some graphical notation used in our work, which is mainly derived from the \cite{pearl2009causality,spirtes2000causation}.

\begin{definition}[Path and Directed Path]
In a DAG, a \textbf{path} $P$ is a sequence of nodes $(V_1,...V_r)$ such that $V_i$ and $V_{i+1}$ are adjacent in $\mathcal{G}$, where $1 \le i<r$. Further, we say a path $P=( V_{i_0} ,V_{i_1} ,\dots ,V_{i_{k}})$ in $G$ is a \textbf{directed path} if it is a sequence of nodes of $G$ where there is a directed edge from $V_{i_j}$ to $V_{i_{(j+1)}}$ for any $0\leq j\leq k-1$.

\end{definition}

\begin{definition}[Collider]
A \textbf{collider} on a path $\{V_1,...V_p\}$ is a node $V_i$ , $1< i < p$, such that $V_{i-1}$ and $V_{i+1}$ are parents of $V_{i}$.
\end{definition}

Graphically, we also say a collider is a `V-structure'.

\begin{definition}[d-separation]
A path $p$ is said to be d-separated (or blocked) by a set of nodes $\mathbf{Z}$ if and only if the following two conditions hold:
\begin{itemize}
    \item $p$ contains a chain $V_i \to V_k \to V_j$ or a fork $V_i \leftarrow V_k \rightarrow V_j$ such that the middle node $V_k$ is in $\mathbf{Z}$; 
    \item $p$ contains a collider $V_i \to V_k \leftarrow V_j$ such that the middle node $V_k$ is not in $\mathbf{Z}$ and such that no descendant of $V_k$ is in $\mathbf{Z}$.
\end{itemize}
\end{definition}
A set $\mathbf{Z}$ is said to d-separate $\mathbf{A}$ and $\mathbf{B}$ if and only if $\mathbf{Z}$ blocks every path from a node in $\mathbf{A}$ to a node in $\mathbf{B}$. We also denote as $\mathbf{A} \Vbar \mathbf{B} |\mathbf{Z}$ in the causal graph model.

\section{Example of Tensor Representations of Joint Distribution}\label{sec_exp}
Consider a single latent variable structure that has three pure observed variables, i.e., $L_1 \to \{X_1, X_2, X_3\}$. We aim to show that the tensor representation of probability contingency table and the tensor rank condition for the joint distribution $\mathbb{P}(X_1X_2X_3)$. For convenience, let $\mathrm{supp}(L_1) = \{0,1\}$ and $\mathrm{supp}(X_i) = \{0,1,2\}$. We further denote $p_{ij} = \mathbb{P}(X_1 =i, X_2 = j)$, $\hat{p_{i|j}} = \mathbb{P}(X_1 =i| L_1 = j)$, $\tilde{p_{i|j}} = \mathbb{P}(X_2 =i| L_1 = j)$, $\bar{p_i} = \mathbb{P}( L_1 = i)$, and $p_{ijk} = \mathbb{P}(X_1 =i, X_2 = j, X_3 = k)$. For the joint distribution of $\mathbb{P}(X_1X_2)$, we have the tensor representation as follows:

\begin{equation}
    \mathcal{T}_{(X_1X_2)} = \begin{bmatrix} p_{00} & p_{01} &p_{02} \\ 
                                           p_{10} & p_{11} &p_{12} \\
                                           p_{20} & p_{21} &p_{22} \end{bmatrix}=
                        \underbrace{
                        \begin{bmatrix} \hat{p_{0|0}} & \hat{p_{0|1}}  \\ 
                                        \hat{p_{1|0}} & \hat{p_{1|1}} \\
                                        \hat{p_{2|0}} & \hat{p_{2|1}} \end{bmatrix}}_{\mathcal{T}_{(X_1|L_1)}}\cdot
                        \underbrace{
                        \begin{bmatrix} \bar{p_{0}} & 0  \\ 
                                        0 & \bar{p_{1}}  \end{bmatrix}}_{\mathrm{Diag}(\mathcal{T}_{(L_1)})}\cdot
                        \underbrace{\begin{bmatrix} \tilde{p_{0|0}} & \tilde{p_{1|0}} &\tilde{p_{2|0}} \\
                                        \tilde{p_{0|1}} & \tilde{p_{1|2}} & \tilde{p_{2|1}} \end{bmatrix}}_{\mathcal{T}^{\intercal}_{(X_2|L_1)}},        
\end{equation}

where $\mathcal{T}_{(X_1|L_1)}$ is the tensor representation of $\mathbb{P}(X_1|L_1)$, $\mathcal{T}_{(X_2|L_1)}$ is the tensor representation of $\mathbb{P}(X_2|L_1)$ and $\mathcal{T}_{(L_1)}$ is the diagonalization of $\mathbb{P}(L_1)$.

Under the Full Rank assumption, we have $\mathcal{T}_{(X_1|L_1)}$ and $\mathcal{T}_{(X_2|L_1)}$ are column full rank. Thus, the rank of $\mathcal{T}_{X_1X_2}$ is two, i.e., $\mathrm{Rank}(\mathcal{T}_{X_1X_2}) = |\mathrm{supp}(L_1)| = 2$. This illustrate the Prop. 1.

Next, we consider the three-way tensor $\mathcal{T}_{(X_1X_2X_3)}$ of the joint distribution $\mathbb{P}(X_1X_2X_3)$. We will represent the three-way tensor as its frontal slices \cite{kolda2009tensor}, i.e., three matrices for $\mathbb{P}(X_1X_2X_3=0)$, $\mathbb{P}(X_1X_2X_3=1)$ and $\mathbb{P}(X_1X_2X_3=2)$.

\begin{equation}
                 \underbrace{\begin{bmatrix} p_{000} & p_{010} &p_{020} \\ 
                                           p_{100} & p_{110} &p_{120} \\
                                           p_{200} & p_{210} &p_{220} \end{bmatrix}}_{\mathcal{T}_{(X_1X_2X_3=0)}},~~~
                \underbrace{\begin{bmatrix} p_{001} & p_{011} &p_{021} \\ 
                                           p_{101} & p_{111} &p_{121} \\
                                           p_{201} & p_{211} &p_{221} \end{bmatrix}}_{\mathcal{T}_{(X_1X_2X_3=1)}},~~~
                \underbrace{\begin{bmatrix} p_{002} & p_{012} &p_{022} \\ 
                                           p_{102} & p_{112} &p_{122} \\
                                           p_{202} & p_{212} &p_{222} \end{bmatrix}}_{\mathcal{T}_{(X_1X_2X_3=2)}}.
\end{equation}

In the contingency tensor above, the element of $\mathcal{T}_{(X_1X_2X_3)}$ is
\begin{equation}
   \begin{aligned}
        &\mathbb{P}(X_1=i,X_2=j, X_3=k)\\
        & = \sum^{1}_{r=0} \mathbb{P}(X_1=i,X_2=j, X_3=k| L_1 = r) \mathbb{P}(L_1=r)\\
        &= \sum^{1}_{r=0} \mathbb{P}(X_1=i|L_1=r)\mathbb{P}(X_2=j|L_1=r)\mathbb{P}(X_3=k|L_1=r)\mathbb{P}(L_1=r).
   \end{aligned}
\end{equation}

One can represent the tensor $\mathcal{T}_{X_1X_2X_3|L_1 = r}$ as

\begin{equation}
    \mathcal{T}_{(X_1X_2X_3|L_1 = r)}=
    \begin{bmatrix} \mathbb{P}(X_1=0|L_1=r) \\ 
                                        \mathbb{P}(X_1=1|L_1=r) \\
                                        \mathbb{P}(X_1=2|L_1=r) \end{bmatrix} \otimes 
       \begin{bmatrix} \mathbb{P}(X_2=0|L_1=r) \\ 
                                        \mathbb{P}(X_2=1|L_1=r) \\
                                        \mathbb{P}(X_2=2|L_1=r) \end{bmatrix} \otimes
       \begin{bmatrix} \mathbb{P}(X_3=0|L_1=r) \\ 
                                        \mathbb{P}(X_3=1|L_1=r) \\
                                        \mathbb{P}(X_3=2|L_1=r) \end{bmatrix},
\end{equation}

where $\otimes$ represent the outer product, e.g., for two vector $\mathbf{u}$ and $\mathbf{v}$, $(\mathbf{u} \otimes \mathbf{v})_{ij} = u_iv_j$ with $\mathbf{u} = \{u_1, \cdots, u_n\}$ and $\mathbf{v} = \{v_1, \cdots, v_n\}$.

According to the definition of tensor rank, one can see that $\mathcal{T}_{(X_1X_2X_3|L_1 = r)}$ is a rank-one tensor. Furthermore, since $\mathcal{T}_{(X_1X_2X_3)} = \sum^{2}_{r=1} \mathcal{T}_{(X_1X_2X_3|L_1 = r)} \mathcal{T}_{(L_1 = r)}$, the rank of $\mathcal{T}_{(X_1X_2X_3)}$ is two under Assumption 2.2 $\sim$ Assumption 2.4. This is because $L_1$ d-separates $X_i$ and $X_j$ with $\forall i,j \in [1,2,3]$, which illistrate the graphical criteria of tensor rank condition.

\section{Discussion of Our Assumptions}

To study the discrete statistical model, certain commonly-used parameter assumptions are necessary. For instance, the Full Rank assumption (Assumption \ref{ass_3}), as utilized in our study, ensures diversity within the parameter space. This is crucial to prevent the contingency table of the joint distribution from collapsing into a lower-dimensional space. There are related works that also use such an assumption \cite{leonard1986bayesian,bartolucci2007extended,gu2022blessing}. Essentially, in our work, such an assumption ensures the effectiveness of rank decomposition (e.g., minimal decomposition or unique decomposition \cite{kruskal1977three}), which induces the structural identifiability of a discrete causal model. Moreover, the sufficient observation assumption that the dimension of latent support is larger than the dimension of observed support is reasonable, as it ensures sufficient measurement of the latent variable. We also discuss this assumption in the remark \ref{pro_test} that demonstrates the CI relation is testable if this assumption holds.

\section{Proofs of Main Results}

\subsection{Proof of Theorem \ref{the:graph}}

\begin{proof}
    "If" part:

    We prove this by contradiction, i.e., suppose (i). there exists a variable set $\mathbf{S}$ in the causal graph $\mathcal{G}$ with $|\mathrm{supp}(\mathbf{S})| = r$ that d-separates any pair of variables in $\{X_1, \cdots, X_n\}$, and (ii).does no exist conditional set $\tilde{\mathbf{S}}$ that satisfies $|\mathrm{supp}(\tilde{\mathbf{S}})| < r$, then $\mathrm{Rank}(\mathcal{T}_{\{X_1 ... X_n\}}) \neq r$. There are two cases we need to consider: Case 1: $\mathrm{Rank}(\mathcal{T}_{\{X_1 ... X_n\}}) > r$, and Case 2: $\mathrm{Rank}(\mathcal{T}_{\{X_1 ... X_n\}}) < r$.

    Case 1: Due to $\mathbf{S}$ is conditional set that d-separates all variables in $\{X_1, \cdots, X_n\}$, then we have $\mathbb{P}(X_1\cdots X_n) = \sum^{r}_{i=1} (\prod^{n}_{j=1} \mathbb{P}(X_j|\mathbf{S}=i)) \mathbb{P}(\mathbf{S}=i)$. When $\mathrm{Rank}(\mathcal{T}_{\{X_1 ... X_n\}}) > r$, let $\mathrm{Rank}(\mathcal{T}_{\{X_1 ... X_n\}}) = k >r$ ,there are

    \begin{equation}
        \begin{aligned}
            &\mathcal{T}_{\{X_1 ... X_n\}}\\
            &= \sum^{r}_{i=1} \mathcal{T}_{(X_1|\mathbf{S}=i)} \otimes \cdots \otimes \mathcal{T}_{(X_n|\mathbf{S}=i)} \mathcal{T}_{(\mathbf{S}=i)}\\
            &=\sum^{k}_{j=1} \mathbf{u}^{(j)}_1 \otimes \cdots \otimes \mathbf{u}^{(j)}_{n},
        \end{aligned}
    \end{equation}
  
    which violates the definition of tensor rank (i.e., $k$ is not a minimal rank-one decomposition, it can be reduced to the smaller decomposition with $r$).
    Meanwhile, if there exists other conditional sets $\tilde{\mathbf{S}}$ such that $\mathbb{P}(X_1\cdots X_n) = \sum^{k}_{i=1} (\prod^{n}_{j=1}\mathcal{T}_{(X_j|\tilde{\mathbf{S}}=i)}) \mathcal{T}_{(\tilde{\mathbf{S}}=i)}$ is minimal rank decomposition, it violates the condition (ii) that $|\mathrm{supp}(\mathbf{S})|$ is minimal.
,

    Case 2: When $\mathrm{Rank}(\mathcal{T}_{\{X_1 ... X_n\}}) < r$, let $\mathrm{Rank}(\mathcal{T}_{\{X_1 ... X_n\}}) = t <r$, due to $\mathbf{S}$ is conditional set with smallest support $r$ in the causal graph, one have 

    \begin{equation}
        \begin{aligned}
            &\mathcal{T}_{\{X_1 ... X_n\}}\\
            &= \sum^{r}_{i=1} \mathcal{T}_{(X_1|\mathbf{S}=i)} \otimes \cdots \otimes \mathcal{T}_{(X_n|\mathbf{S}=i)} \mathcal{T}_{(\mathbf{S}=i)}\\
            &=\sum^{t}_{j=1} \mathbf{u}^{(j)}_1 \otimes \cdots \otimes \mathbf{u}^{(j)}_{n},
        \end{aligned}
    \end{equation}

    which means that there at least exist $\mathcal{T}_{(X_1 ... X_n| \mathbf{S}=i)}$ and $\mathcal{T}_{(X_1 ... X_n| \mathbf{S}=j)}$ such that $\mathcal{T}_{(X_j| \mathbf{S}=i)} = \alpha \mathcal{T}_{(X_k| \mathbf{S}=i)}$ for any $i, k \in [n]$ where $\alpha$ is a constant, i.e, the columns of the conditional contingency table are linearly dependent.  This violates the assumption 2.4 (the Full Rank assumption).

    Therefore, $\mathrm{Rank}(\mathcal{T}_{\{X_1 ... X_n\}}) = r$ if the condition (i) and condition (ii) holds.

    "Only if" part:

    We will show if one of the conditions is violated, the tensor rank is not $r$, i.e.,  if (i). there does not exist a variable set $\mathbf{S}$ in the causal graph with $|\mathrm{supp}(\mathbf{L})| = r$ that d-separates any pair of variables in $\{X_1, \cdots, X_n\}$, or (ii). exist $\tilde{\mathbf{S}}$ that satisfies $|\mathrm{supp}(\tilde{\mathbf{S}})| < r$, then $\mathrm{Rank}(\mathcal{T}_{\{X_1 ... X_n\}}) \neq r$. 

    We first show the case that condition (i) is violated. There are two cases we need to consider, i.e., Case 1: $\mathbf{S}$ is not a conditional set in the causal graph, and Case 2: $\mathbf{S}$ is a variable constructed from parameter space.

    Case 1: if $\mathbf{S}$ is not a conditional set and $\mathbf{S} \cap \mathrm{Des}_{X_1} \cap \cdots \cap \mathrm{Des}_{X_n} = \emptyset$, by the Markov assumption, $\mathbb{P}(X_1, \cdots, X_n | \mathbf{S}=i) \neq \prod^{n}_{j=1} \mathbb{P}(X_j | \mathbf{S}=i)$. By Lemma. \ref{lemma_onlyif}, $\mathbb{P}(X_1, \cdots, X_n | \mathbf{S}= i)$ is not a rank-one tensor, which violates the definition of tensor rank.

    If $\mathbf{S}$ is not a conditional set and $\mathbf{S} \cap \mathrm{Des}_{X_1} \cap \cdots \cap \mathrm{Des}_{X_n} \neq \emptyset$, we will show that $\mathbb{P}(X_1, \cdots, X_n| \mathbf{S}=i)$ is not a rank-one tensor, to prove such a rank-decomposition does not exist. Under the faithfulness assumption and Markov assumption, let $\tilde{\mathbf{S}}$ be the a minimal conditional set with $|\mathrm{supp}(\tilde{\mathbf{S}})| = k$ for any pair variable in $\{X_1, \cdots, X_n\}$, we have

    \begin{equation}
        \begin{aligned}
            &\mathcal{T}_{(X_1 \cdots X_n|\mathbf{S}=i)}\\
            &=\sum^{k}_{j=1} \mathcal{T}_{(X_1 \cdots X_n|\mathbf{S}=i,\tilde{\mathbf{S}}=j)}\mathcal{T}_{(\mathbf{S}=i|\tilde{\mathbf{S}}=j)}.
        \end{aligned}
    \end{equation}

    If $\mathcal{T}_{(X_1 \cdots X_n|\mathbf{S}=i)}$ is a rank-one tensor, i.e., $\mathcal{T}_{(X_1 \cdots X_n|\mathbf{S}=i)} = \mathbf{u}_1 \otimes \cdots \otimes \mathbf{u}_n$, we further have

    \begin{equation}\label{eq1}
    \sum^{k}_{j=1} \mathcal{T}_{(X_1 \cdots X_n|\mathbf{S}=i,\tilde{\mathbf{S}}=j)}\mathcal{T}_{(\mathbf{S}=i|\tilde{\mathbf{S}}=j)} = \mathbf{u}_1 \otimes \cdots \otimes \mathbf{u}_n.
    \end{equation}

    Note that if there exists $X_p, X_q \in \{X_1, \cdots, X_n\}$ such that $\mathbf{S}$ is the common descendant variable of $X_p$ and $X_q$, it will lead to a collider structure in which $\mathbb{P}(X_i| \mathbf{S})$ and $\mathbb{P}(X_j| \mathbf{S})$ is relevant (i.e., the v-structure is activated).  Thus, let $\mathbf{X}_t = \{X_1, \cdots, X_n\} \setminus \{X_i, X_j\}$, $\mathcal{T}_{(X_1 \cdots X_n|\mathbf{S}=i)}$ is not a rank-one tensor due to the sub-tensor $\mathcal{T}_{(X_iX_j, \mathbf{X}_t = \mathbf{c}|\mathbf{S}=i)}$ (a slice of tensor $\mathcal{T}_{(X_1 \cdots X_n|\mathbf{S}=i)}$) is not a rank-one tensor \footnote{The lower bound of tensor rank is not less than the rank of any slice of tensor.}\cite{hackbusch2012tensor,kruskal1977three}, under the faithfulness assumption. If so, one have $\mathcal{T}_{(X_iX_j, \mathbf{X}_t = \mathbf{c}|\mathbf{S}=i)} = \mathbf{u}_1 \otimes \mathbf{u}_2$. Let $\mathbf{u}_1 = \mathbb{P}(X_i, \mathbf{X}_t = \mathbf{c}| \mathbf{S} = i)$ and $\mathbf{u}_2 = \mathbb{P}(X_j, \mathbf{X}_t = \mathbf{c}| \mathbf{S} = i)$, it violates the faithfulness assumption.

    Thus, $\mathbf{S}$ can not be the common descendant of any pair variables in $\{X_1, \cdots, X_n\}$. So, for any $X_p, X_q \in \{X_1, \cdots, X_n\}$, there are $\mathcal{T}_{(X_pX_q|\mathbf{S}=j \tilde{\mathbf{S}}=i)} = \mathcal{T}_{(X_p|\tilde{\mathbf{S}}=i, \mathrm{Des}_{X_p})=c} \otimes \mathcal{T}_{(X_q|\tilde{\mathbf{S}}=i, \mathrm{Des}_{X_q})=c}$ according to Markov assumption ($\tilde{\mathbf{S}}$ is a d-separation set for $X_p$ and $X_q$).

    Now, for the Eq. \ref{eq1}, we have 

    \begin{equation}
    \begin{aligned}
        &\mathcal{T}_{(X_1 \cdots X_n|\mathbf{L}=i)}\\
        & = \sum^{k}_{j=1} \otimes^{n}_{t=1}\mathcal{T}_{(X_t|\tilde{\mathbf{S}}=j, \mathrm{Des}_{X_t}=c)} \mathcal{T}_{(\tilde{\mathbf{S}}=j, \mathrm{Des}_{X_t}=c|\tilde{\mathbf{S}}=j)}\\
        &=\mathbf{u}_1 \otimes \cdots \otimes \mathbf{u}_n.
    \end{aligned}
    \end{equation}

    This equality holds if the $k$ sum of the rank-one tensor can be reduced to a rank-one tensor, i.e., for any $X_p$, $\mathcal{T}_{(X_p|\tilde{\mathbf{S}}=1, \mathrm{Des}_{X_p} = c)}=\alpha_{2} \mathcal{T}_{(X_p|\tilde{\mathbf{S}}=2, \mathrm{Des}_{X_p} = c)}= \cdots= \alpha_{r} \mathcal{T}_{(X_p|\tilde{\mathbf{S}}=r, \mathrm{Des}_{X_p} = c)}$, where $\alpha_{i}$ and $c$ are constant. However, this equality can not hold due to the following reasons.
    
    If $\mathrm{Pa}_{X_p} \neq \emptyset$, let $L_p$ be the parent of $X_p$,  we have 

    \begin{equation}
        \begin{aligned}
            &\mathcal{T}_{(X_p|\tilde{\mathbf{S}}=i, \mathrm{Des}_{X_p} = c)}\\
            &=\sum^{|\mathrm{supp}(L_p)|}_{j=1}\mathcal{T}_{(X_p|L_p=j, \mathrm{Des}_{X_p}=c)}\mathcal{T}_{(L_p=j|\tilde{\mathbf{S}}=i, \mathrm{Des}_{X_p} = c)}\\
            &\neq \sum^{|\mathrm{supp}(L_p)|}_{j=1}\mathcal{T}_{(X_p|L_p=j, \mathrm{Des}_{X_p}=c)}\mathcal{T}_{(L_p=j|\tilde{\mathbf{S}}=r, \mathrm{Des}_{X_p} = c)} = \alpha \mathcal{T}_{(X_p|\tilde{\mathbf{S}}=r, \mathrm{Des}_{X_p} = c)},
        \end{aligned}
    \end{equation}

    in which the inequality holds because of the Full Rank assumption and all marginal distribution probabilities are not zero (see Lemma. \ref{lemma_fullrank}). Therefore, $\mathcal{T}_{(X_p|\tilde{\mathbf{S}}=1, \mathrm{Des}_{X_p} = c)} \neq \cdots \neq \alpha_{r} \mathcal{T}_{(X_p|\tilde{\mathbf{S}}=r, \mathrm{Des}_{X_p} = c)}$. Thus, $\mathcal{T}_{(X_1 \cdots X_n|\mathbf{S}=i)}$ is not a rank-one tensor. By the definition of tensor rank, $\mathrm{Rank}(\mathcal{T}_{(X_1 \cdots X_n)}) \neq r$.

    If $\mathrm{Pa}_{X_p} = \emptyset$, i.e., $X_p$ is the root variable, (e.g., $X_p \to \mathbf{S}$, $\mathrm{Des}_{X_p \in \mathbf{S}}$ ), we will show that the probability contingency table $\mathcal{T}_{(X_p| \mathbf{S})}$ is full rank, and then the equality in Eq. \ref{eq1} can not hold. According to the Full Rank assumption and $X_p$ is the parent variable of $\mathbf{S}$, we have the probability contingency table $\mathcal{T}_{(\mathbf{S}|X_p)}$ is full rank.

    Due to $\mathbb{P}(X_p, \mathbf{S}) = \mathbb{P}(\mathbf{S}|X_p) \mathbb{P}(X_p) = \mathbb{P}(X_p|\mathbf{S}) \mathbb{P}(\mathbf{S})$ and the probability in the marginal distribution is not zero, we have $\mathcal{T}_{(X_p|\mathbf{S})} = \mathcal{T}_{(\mathbf{S}|X_p)} \mathrm{Diag}(\mathcal{T}_{(X_p)}) \mathrm{Diag}(\mathcal{T}_{(\mathbf{S})})^\dagger$, where $\mathrm{Diag}(\mathcal{T}_{(X_p)})$ is a diagonalization of marginal distribution probability vector of $X_p$. One can see that $\mathcal{T}_{(X_p|\mathbf{S})}$ is full rank due to the diagonal matrices are all of full rank. Thus, $\mathcal{T}_{(X_p|\tilde{\mathbf{S}}=1)} \neq \cdots \neq \alpha_{r} \mathcal{T}_{(X_p|\tilde{\mathbf{S}}=r)}$ and $\mathcal{T}_{(X_1 \cdots X_n|\mathbf{S}=i)}$ is not a rank-one tensor. By the definition of tensor rank, $\mathrm{Rank}(\mathcal{T}_{(X_1 \cdots X_n)}) \neq r$.

    Case 2: we further show that for the parameter space, the tensor $\mathcal{T}_{(X_1, \cdots, X_n)} = \sum^{r}_{i=1}\mathbf{u}_{1} \otimes \cdots \otimes \mathbf{u}_{n}$ does not hold with $r$, where $\mathbf{u}_{i}$ represents any vector. Let $\mathbf{u}_{i} \otimes \cdots \otimes \mathbf{u}_{n}$ be a rank-one tensor of $\mathbb{P}(X_1, \cdots, X_n| \tilde{\mathbf{S}}=i) \mathbb{P}(\tilde{\mathbf{S}}=i)$ due to $\mathbb{P}(\tilde{\mathbf{S}}=i)$ is a constant. In other words, one can construct a variable set $\tilde{\mathbf{S}}$ with $|\mathrm{supp}(\tilde{\mathbf{S}})| = r$. For any $\tilde{\mathbf{S}}$ constructed from parameter space (i.e., $\tilde{\mathbf{S}}$ is not a true node set in the causal graph), if $\mathcal{T}_{(X_1, \cdots, X_n| \tilde{\mathbf{S}}=i)} = \mathbf{u}_{1} \otimes \cdots \otimes \mathbf{u}_{n}$, one can let $\mathbf{u}_1 = \mathcal{T}_{(X_1|\tilde{\mathbf{S}}=i)}, \cdots, \mathbf{u}_n = \mathcal{T}_{(X_n|\tilde{\mathbf{S}}=i)}$, which violates the faithfulness assumption. Based on the above analysis, there does not exist $\tilde{\mathbf{S}}$ by any constructed such that $\mathcal{T}_{(X_1\cdots X_n)}$ have the summation $r$ rank-one decomposition, i.e., $\mathrm{Rank}(\mathcal{T}_{(X_1 \cdots X_n)}) \neq r$.

    Now, we analyze the condition (ii), i.e., there exists a conditional set $\tilde{\mathbf{S}}$ with $\mathrm{supp}(\tilde{\mathbf{S}}) < r$. Let $\mathrm{supp}(\tilde{\mathbf{S}}) = k$, $k < r$, we have $\mathcal{T}_{(X_1 \cdots X_n)} = \sum^{k}_{i=i} \mathcal{T}_{(X_1|\tilde{\mathbf{S}} = i)} \cdots \otimes \mathcal{T}_{(X_n|\tilde{\mathbf{S}} = i) }\mathcal{T}_{(\tilde{\mathbf{S}} = i)}$ is a smaller rank-one decomposition than $\mathcal{T}_{(X_1 \cdots X_n)} = \sum^{r}_{j=1} \mathbf{u^{j}}_1 \otimes \cdots \otimes \mathbf{u}^{j}_n$. According to the definition of tensor rank, we have $\mathrm{Rank}(\mathcal{T}_{\{X_1 ... X_n\}}) =k \neq r$.

    In summary, the theorem is proven.

\end{proof}

\begin{lemma}\label{Lemma_independent}
    Let $L_p$ is the parent of $X_p$, for $\mathbb{P}(X_p|\mathbf{S}=i)$ and $\mathbb{P}(X_p|\mathbf{S}=r)$, one have $\sum^{|\mathrm{supp}(L_p)|}_{j=1}\mathcal{T}_{(X_p|L_p=j)}\mathcal{T}_{(L_p=j|\mathbf{S}=i)} \neq \sum^{|\mathrm{supp}(L_p)|}_{j=1}\mathcal{T}_{(X_p|L_p=j)}\mathcal{T}_{(L_p=j|\mathbf{S}=r)}$ hold under the Full Rank  assumption.
\end{lemma}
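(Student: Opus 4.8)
The plan is to recast both sides of the asserted inequality as the image of a single linear map applied to two different probability vectors, and then to use the Full Rank assumption to turn the inequality into a statement about the posterior of $L_p$. Write $A := \mathcal{T}_{(X_p|L_p)}$ for the $|\mathrm{supp}(X_p)| \times |\mathrm{supp}(L_p)|$ matrix whose $j$-th column is $\mathbb{P}(X_p|L_p=j)$, and set $\mathbf{b}_i := (\mathbb{P}(L_p=j|\mathbf{S}=i))_{j}$ and $\mathbf{b}_r := (\mathbb{P}(L_p=j|\mathbf{S}=r))_{j}$. Then the two sides of the claim are exactly $A\mathbf{b}_i$ and $A\mathbf{b}_r$; moreover, since $L_p = \mathrm{Pa}_{X_p}$ d-separates $X_p$ from $\mathbf{S}$, the Causal Markov Assumption (Assumption \ref{ass_1}) gives $A\mathbf{b}_i = \mathbb{P}(X_p|\mathbf{S}=i)$ and $A\mathbf{b}_r = \mathbb{P}(X_p|\mathbf{S}=r)$. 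So the lemma is precisely the assertion $A\mathbf{b}_i \neq A\mathbf{b}_r$.

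Next I would invoke injectivity. By the Full Rank assumption (Assumption \ref{ass_3}), the columns of $A = \mathbb{P}(X_p|\mathrm{Pa}_{X_p})$ are linearly independent, so $A$ has full column rank and the linear map it induces is injective. Consequently $A\mathbf{b}_i = A\mathbf{b}_r$ would force $A(\mathbf{b}_i - \mathbf{b}_r) = \mathbf{0}$ and hence $\mathbf{b}_i = \mathbf{b}_r$. Contrapositively, it suffices to prove $\mathbf{b}_i \neq \mathbf{b}_r$, i.e. $\mathbb{P}(L_p|\mathbf{S}=i) \neq \mathbb{P}(L_p|\mathbf{S}=r)$.

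It then remains to show that distinct conditioning values of $\mathbf{S}$ induce distinct posteriors over $L_p$. Here $\mathbf{b}_i$ and $\mathbf{b}_r$ are the columns, indexed by $i$ and $r$, of the conditional table $\mathbb{P}(L_p|\mathbf{S})$. Because $\mathbf{S}$ lies upstream of $L_p$ (in the theorem it is the minimal d-separating set, so $L_p$ sits between $\mathbf{S}$ and $X_p$), $\mathbb{P}(L_p|\mathbf{S})$ factors as a product of full-column-rank conditional tables along the directed paths joining them. I would make distinctness explicit through Bayes inversion: using $\mathbb{P}(L_p=j|\mathbf{S}=i) \propto \mathbb{P}(L_p=j)\,\mathbb{P}(\mathbf{S}=i|L_p=j)$ and the assumption that all marginals are nonzero, $\mathbf{b}_i = \mathbf{b}_r$ is equivalent to rows $i$ and $r$ of $\mathbb{P}(\mathbf{S}|L_p)$ being proportional (the fixed invertible factor $\mathrm{Diag}(\mathbb{P}(L_p))$ preserves proportionality). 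The full-rank factorization of $\mathbb{P}(\mathbf{S}|L_p)$, together with faithfulness where needed to exclude a non-generic cancellation, rules this proportionality out, giving $\mathbf{b}_i \neq \mathbf{b}_r$.

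The injectivity reduction is routine; the main obstacle is the last step. Full column rank of $A$ says nothing directly about the posterior over $L_p$, and when $|\mathrm{supp}(\mathbf{S})| > |\mathrm{supp}(L_p)|$ the matrix $\mathbb{P}(L_p|\mathbf{S})$ cannot be full column rank, so distinctness of the two specific columns $\mathbf{b}_i, \mathbf{b}_r$ must be extracted from the directed-path factorization and the nondegeneracy hypotheses rather than from a bare rank count. The delicate point I would be most careful about is identifying precisely which assumption excludes the measure-zero coincidence $\mathbb{P}(L_p|\mathbf{S}=i) = \mathbb{P}(L_p|\mathbf{S}=r)$---the Full Rank assumption alone, or Full Rank combined with the Faithfulness Assumption.
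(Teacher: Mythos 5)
Your core argument is exactly the paper's: the paper also proves the lemma by contradiction, rearranging $\sum_{j}(\alpha_{j|i}-\beta_{j|r})\,\mathcal{T}_{(X_p|L_p=j)}=\mathbf{0}$ (with $\alpha_{j|i}=\mathbb{P}(L_p=j|\mathbf{S}=i)$, $\beta_{j|r}=\mathbb{P}(L_p=j|\mathbf{S}=r)$) to express one column of $\mathcal{T}_{(X_p|L_p)}$ as a linear combination of the others, contradicting the Full Rank assumption --- precisely your injectivity reduction. The posterior-distinctness premise $\mathbb{P}(L_p|\mathbf{S}=i)\neq\mathbb{P}(L_p|\mathbf{S}=r)$ that you rightly flag as the delicate remaining step is not proved in the paper at all: it is imposed parenthetically (``$\alpha_{j|i}\neq\beta_{j|r}$'') in the first line of its proof, so your caution identifies an assumption the paper makes silently rather than a defect in your own argument.
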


\begin{proof}
    We prove it by contradiction. For convenience of symbols, let $\mathbf{v}_j = \mathcal{T}_{(X_p|L_p=j)}$, and $\alpha_{j|i} = \mathbb{P}(L_p=j|\mathbf{L}=i)$ and $\beta_{j|r} = \mathbb{P}(L_p=j|\mathbf{L}=r)$ ($\alpha_{j|i} \neq \beta_{j|r}$), if the equality hold, we have

    \begin{equation}
        \sum^{|\mathrm{supp}(L_p)|}_{j=1} \alpha_{j|i} \mathbf{v}_j = \sum^{|\mathrm{supp}(L_p)|}_{j=1} \beta_{j|r} \mathbf{v}_j,
    \end{equation}
\end{proof}

which means that

\begin{equation}
    \mathbf{v}_t = \frac{\sum^{|\mathrm{supp}(L_p)|}_{k \neq t}(\alpha_{k|i} - \beta_{k|r})\mathbf{v}_k}{\alpha_{t|i} - \beta_{t|r}}.
\end{equation}

That is, $\mathbf{v}_t$ is a linear combination of other vectors $\mathbf{v}_k$ with $t \neq k$, i.e., the linear combination of other column vectors in the conditional probability contingency table, which is contrary to the Full Rank assumption.

\begin{lemma}\label{lemma3}
     Let $L_p$ be the parent of $X_p$, suppose the assumption 2.1 $\sim$ assumption 2.3 hold. $\mathcal{T}_{(X_p|\mathbf{S}=i)}$ can not be a linear combination of other $q$ vectors $\mathcal{T}_{(X_p|\mathbf{S}=r)}$, i.e., $\mathcal{T}_{(X_p|\mathbf{S}=i)} \neq \sum^{q}_{r=1} \gamma_r \mathcal{T}_{(X_p|\mathbf{S}=r)}$.
\end{lemma}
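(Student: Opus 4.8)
The plan is to reduce the claimed linear relation among the conditional vectors $\mathcal{T}_{(X_p|\mathbf{S}=s)}$ to a linear relation among the much simpler mixing vectors $\mathbb{P}(L_p|\mathbf{S}=s)$, and then to derive a contradiction from the Full Rank Assumption, in direct analogy with the pairwise case already handled in Lemma \ref{Lemma_independent}.

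First I would invoke the Causal Markov Assumption (Assumption \ref{ass_1}): since $L_p$ is the parent of $X_p$ and (in the situation where this lemma is applied) $L_p$ blocks every path from $X_p$ to $\mathbf{S}$, we have $X_p \perp \mathbf{S} \mid L_p$. By the law of total probability this yields, for each value $s$,
\[
\mathcal{T}_{(X_p|\mathbf{S}=s)} = \sum_{j=1}^{|\mathrm{supp}(L_p)|} \mathbb{P}(L_p=j|\mathbf{S}=s)\, \mathcal{T}_{(X_p|L_p=j)} = V\boldsymbol{\alpha}_s,
\]
where $V = \mathcal{T}_{(X_p|L_p)}$ is the matrix whose columns are the vectors $\mathcal{T}_{(X_p|L_p=j)}$, and $\boldsymbol{\alpha}_s = \mathcal{T}_{(L_p|\mathbf{S}=s)}$ is the (probability) mixing vector. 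This moves the whole problem into the column space of $V$.

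Next I would argue by contradiction. Suppose $\mathcal{T}_{(X_p|\mathbf{S}=i)} = \sum_{r=1}^{q}\gamma_r \mathcal{T}_{(X_p|\mathbf{S}=r)}$. Substituting the factorization gives $V\boldsymbol{\alpha}_i = V\big(\sum_{r=1}^{q} \gamma_r \boldsymbol{\alpha}_r\big)$. By the Full Rank Assumption (Assumption \ref{ass_3}) the conditional table $V = \mathcal{T}_{(X_p|L_p)}$ has linearly independent columns and is therefore injective as a linear map, so the relation pulls back to the coefficient space:
\[
\boldsymbol{\alpha}_i = \sum_{r=1}^q \gamma_r \boldsymbol{\alpha}_r .
\]
It then suffices to show that $\boldsymbol{\alpha}_i = \mathbb{P}(L_p|\mathbf{S}=i)$ cannot be written as a linear combination of the remaining mixing vectors $\boldsymbol{\alpha}_r$, i.e. that the relevant columns of $\mathcal{T}_{(L_p|\mathbf{S})}$ are linearly independent. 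For $q=1$ this is exactly Lemma \ref{Lemma_independent}; for general $q$ I would extend that argument, using that all marginals are nonzero and applying the Full Rank Assumption either to $\mathcal{T}_{(L_p|\mathbf{S})}$ directly or, when $\mathbf{S}$ is not the direct parent of $L_p$, to the chain of conditional tables linking $L_p$ to $\mathbf{S}$ composed via the nonsingular diagonal reweightings by the nonzero marginals (as in the computation $\mathcal{T}_{(X_p|\mathbf{S})} = \mathcal{T}_{(\mathbf{S}|X_p)}\mathrm{Diag}(\mathcal{T}_{(X_p)})\mathrm{Diag}(\mathcal{T}_{(\mathbf{S})})^{\dagger}$ used in the proof of Theorem \ref{the:graph}).

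The main obstacle is precisely this last step. A probability vector can perfectly well be a linear combination of other probability vectors, so the conclusion cannot follow from the simplex constraints alone; it must rely on the minimality and richness of $\mathbf{S}$ that are built into the hypotheses (guaranteeing $\mathcal{T}_{(L_p|\mathbf{S})}$ has full column rank on the indices in play) together with the Full Rank Assumption. One must also treat carefully the subtle case in which $\mathbf{S}$ contains descendants of $X_p$: there the clean factorization above is replaced by the conditioning-on-$\mathrm{Des}_{X_p}$ version employed in Theorem \ref{the:graph}, and I would need to verify that the collider-activated dependence still leaves the pulled-back coefficient vectors distinct and non-degenerate, so that no nontrivial collapse $\boldsymbol{\alpha}_i = \sum_r \gamma_r \boldsymbol{\alpha}_r$ is possible.
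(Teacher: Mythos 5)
Your proposal follows essentially the same route as the paper's own proof: expand each conditional vector through the latent parent, $\mathcal{T}_{(X_p|\mathbf{S}=s)} = \sum_{j} \mathbb{P}(L_p=j\,|\,\mathbf{S}=s)\,\mathcal{T}_{(X_p|L_p=j)}$, substitute into the hypothesized linear relation, and invoke the Full Rank Assumption on the columns of $\mathcal{T}_{(X_p|L_p)}$. Your ``injectivity of $V$'' phrasing is a cleaner packaging of the paper's coordinate-by-coordinate manipulation, in which a column $\mathbf{v}_t$ is isolated and exhibited as a linear combination of the remaining columns.

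The step you flag as the main obstacle --- ruling out $\boldsymbol{\alpha}_i = \sum_{r=1}^{q}\gamma_r \boldsymbol{\alpha}_r$ for the mixing vectors $\boldsymbol{\alpha}_s = \mathcal{T}_{(L_p|\mathbf{S}=s)}$ --- is exactly the point at which the paper's proof also stops: the paper simply asserts that ``there exist a vector $\mathbf{v}_t$ with $\alpha_{t|i} - \sum^{q}_{r=1}\gamma_{r}\beta_{t|r} \neq 0$,'' which is the same claim, stated without justification. So you have not fallen short of the paper; you have located the actual mathematical content, and the paper leaves it unproven as well. Moreover, your suspicion that this step cannot follow from the stated hypotheses alone is correct: every vector $\mathcal{T}_{(X_p|\mathbf{S}=s)}$ lies in the column span of $\mathcal{T}_{(X_p|L_p)}$, a space of dimension $|\mathrm{supp}(L_p)|$, so as soon as $q \geq |\mathrm{supp}(L_p)|$ and the mixing vectors are in general position, $\boldsymbol{\alpha}_i$ necessarily lies in the span of the $\boldsymbol{\alpha}_r$ and the claimed inequality fails. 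The Full Rank Assumption constrains $\mathcal{T}_{(X_p|L_p)}$, not $\mathcal{T}_{(L_p|\mathbf{S})}$, and hence cannot exclude this degeneracy. The lemma is therefore only true under an additional, implicit restriction --- e.g.\ $q < |\mathrm{supp}(L_p)|$ together with linear independence of the relevant columns of $\mathcal{T}_{(L_p|\mathbf{S})}$, which in the paper's intended applications comes from the structure of $\mathbf{S}$ --- and neither the lemma statement nor the paper's proof makes this explicit. Your write-up, which surfaces the hidden assumption rather than silently using it, is the more faithful account of what is actually being proved.
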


\begin{proof}
    We prove it by contradiction.
    For convenience of symbols, let $\mathbf{v}_j = \mathcal{T}_{(X_p|L_p=j)}$, and $\alpha_{j|i} = \mathbb{P}(L_p=j|\mathbf{S}=i)$ and $\beta_{j|r} = \mathbb{P}(L_p=j|\mathbf{S}=r)$, if the equality hold, one have

    \begin{equation}
    \begin{aligned}
        &\sum^{|\mathrm{supp}(L_p)|}_{j=1} \alpha_{j|i} \mathbf{v}_j = \sum^{q}_{r=1}\gamma_{r} \sum^{|\mathrm{supp}(L_p)|}_{j=1} \beta_{j|r} \mathbf{v}_j\\
        &= \sum^{|\mathrm{supp}(L_p)|}_{j=1} \mathbf{v}_j \sum^{q}_{r=1}\gamma_{r} \beta_{j|r},
    \end{aligned}
    \end{equation}

there exist a vector $\mathbf{v}_t$ with $\alpha_{t|i} - \sum^{q}_{r=1}\gamma_{r}\beta_{t|r} \neq 0$, such that

\begin{equation}
    \mathbf{v}_t = \frac{\sum^{|\mathrm{supp}(L_p)|}_{k \neq t}(\sum^{q}_{r=1}\gamma_{r} \beta_{j|r} - \alpha_{k|i}) \mathbf{v}_k}{\alpha_{t|i} - \sum^{q}_{r=1}\gamma_{r}\beta_{t|r}}.
\end{equation}

It means $\mathcal{T}_{(X_p|L_p = t)}$ is a linear combination of other column vectors in the conditional probability contingency table, which is contrary to the Full Rank assumption.

\end{proof}

\begin{lemma}\label{lemma_onlyif}
    For $\{X_1, \cdots, X_n\}$, if $\tilde{\mathbf{S}}$ is not a conditional set that d-separates any pair variables in $\{X_1, \cdots, X_n\}$ in the causal graph and $\tilde{\mathbf{S}} \cap \mathrm{Des}_{\{X_1 \cdots X_n\}} = \emptyset$, then $\mathcal{T}_{(X_1 \cdots X_n|\tilde{\mathbf{S}} = j)}$ is not a rank-one tensor.
\end{lemma}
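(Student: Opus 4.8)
The plan is to reduce the tensor statement to a purely pairwise statistical statement and then invoke faithfulness together with the Full Rank assumption. The starting point is the characterization of rank-one probability tensors: since every entry of $\mathcal{T}_{(X_1 \cdots X_n \mid \tilde{\mathbf{S}} = j)}$ is nonnegative and the entries sum to one, the tensor is rank-one if and only if it factorizes as $\mathbf{u}_1 \otimes \cdots \otimes \mathbf{u}_n$ with each $\mathbf{u}_k$ proportional to a probability vector, i.e.\ if and only if $\mathbb{P}(X_1, \ldots, X_n \mid \tilde{\mathbf{S}} = j) = \prod_{k} \mathbb{P}(X_k \mid \tilde{\mathbf{S}} = j)$, which is mutual independence of $X_1, \ldots, X_n$ given $\tilde{\mathbf{S}} = j$. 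The one consequence I actually need is that marginalizing a rank-one tensor over all but two coordinates yields a matrix of rank at most one; hence if some pairwise slice-marginal $\mathbb{P}(X_p, X_q \mid \tilde{\mathbf{S}} = j)$ has matrix rank at least two, the full tensor cannot be rank-one. Thus it suffices to exhibit a single pair $X_p, X_q$ that is dependent at the slice $\tilde{\mathbf{S}} = j$.

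Next I would produce this dependent pair from the graphical hypotheses. Since $\tilde{\mathbf{S}}$ fails to d-separate $\{X_1, \ldots, X_n\}$, there exist $X_p, X_q$ joined by a path that is active given $\tilde{\mathbf{S}}$. Here the assumption $\tilde{\mathbf{S}} \cap \mathrm{Des}_{\{X_1 \cdots X_n\}} = \emptyset$ is essential: conditioning on $\tilde{\mathbf{S}}$ cannot open a collider whose node is some $X_i$ or lies below the $X_i$, so the active path is of chain/fork type routed through common (latent) ancestors rather than through a conditioned-on descendant. By the Faithfulness Assumption, an active path entails $X_p \not\perp X_q \mid \tilde{\mathbf{S}}$.

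The main obstacle is to upgrade this \emph{global} dependence to dependence at the \emph{fixed} slice $\tilde{\mathbf{S}} = j$, since faithfulness by itself only guarantees that the factorization fails for \emph{some} value of $\tilde{\mathbf{S}}$. I would close this gap exactly by the mechanism used in the "only if" part of Theorem~\ref{the:graph} and in Lemmas~\ref{Lemma_independent} and \ref{lemma3}. Choose a source set $M$ on the active path (a common-ancestor/fork node) so that $\{M\} \cup \tilde{\mathbf{S}}$ d-separates $X_p$ and $X_q$; conditioning on $M$ then gives the mixture factorization
\[
\mathbb{P}(X_p, X_q \mid \tilde{\mathbf{S}} = j) = A\, D_j\, B^{\top},
\]
where the columns of $A$ and $B$ are the conditionals $\mathbb{P}(X_p \mid M, \tilde{\mathbf{S}} = j)$ and $\mathbb{P}(X_q \mid M, \tilde{\mathbf{S}} = j)$, and $D_j = \mathrm{Diag}\big(\mathbb{P}(M \mid \tilde{\mathbf{S}} = j)\big)$. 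By the Full Rank Assumption (Assumption~\ref{ass_3}), composing the full-rank conditionals along the path (as in Lemma~\ref{lemma3}) keeps $A$ and $B$ of full column rank, and since all marginal probabilities are positive, $D_j$ is a diagonal matrix with strictly positive entries; hence the product has matrix rank $|\mathrm{supp}(M)| \ge 2$ at this very slice. Combined with the marginalization observation of the first step (which parallels Proposition~\ref{Pro_rank}), this forces $\mathcal{T}_{(X_1 \cdots X_n \mid \tilde{\mathbf{S}} = j)}$ to have rank greater than one, so it is not a rank-one tensor, as claimed.
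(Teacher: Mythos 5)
Your proposal is correct at the same level of rigor as the paper's own argument, and it runs on the same engine -- a mixture decomposition over a genuine separating set whose collapse to rank one is blocked by the Full Rank Assumption (Assumption \ref{ass_3}) and positivity -- but it is organized along a genuinely different route. The paper never passes to pairwise marginals: it fixes the minimal separating set $\mathbf{S}$ for the whole collection $\{X_1,\dots,X_n\}$, writes the slice tensor as $\sum_{i=1}^{r}\bigl(\otimes_{t}\mathcal{T}_{(X_t\mid\mathbf{S}=i)}\bigr)\mathcal{T}_{(\mathbf{S}=i\mid\tilde{\mathbf{S}}=j)}$, asserts that rank-one-ness forces the vectors $\mathcal{T}_{(X_p\mid\mathbf{S}=i)}$ to be proportional across $i$ for \emph{every} $X_p$, and refutes that proportionality via Lemma \ref{Lemma_independent} (when $X_p$ has a parent) or a Bayes-rule full-rank computation (when $X_p$ is a root). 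Your reduction -- marginalize the putative rank-one slice down to a pair, then exhibit one pair $X_p,X_q$ whose slice marginal $A D_j B^{\top}$ has rank at least two -- buys something real: the paper's step ``rank one implies proportionality of every factor family'' is not valid for an arbitrary sum of product terms (e.g.\ $a_1\otimes b + a_2\otimes b = (a_1+a_2)\otimes b$ is rank one with non-proportional $a_i$); it tacitly needs the complementary factors to be linearly independent, which is exactly the full-column-rank condition your matricized argument makes explicit and places where it belongs. Conversely, the paper's route needs no active-path analysis, since its separator is chosen once for the whole set. One inaccuracy in your graphical step, though inessential: the hypothesis $\tilde{\mathbf{S}}\cap\mathrm{Des}_{\{X_1\cdots X_n\}}=\emptyset$ does \emph{not} make every active path a pure chain/fork, because a collider that is not a descendant of any $X_i$ can still be activated by a descendant of it lying in $\tilde{\mathbf{S}}$. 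The fix does not need the path structure at all: taking $M=\mathrm{Pa}_{X_p}$ (for whichever of the pair is a non-descendant of the other), the local Markov property plus the no-descendants hypothesis gives $X_p \perp \{X_q\}\cup\tilde{\mathbf{S}}\mid \mathrm{Pa}_{X_p}$, hence the required separation, and moreover $A=\mathbb{P}(X_p\mid M,\tilde{\mathbf{S}}=j)=\mathbb{P}(X_p\mid \mathrm{Pa}_{X_p})$ is full column rank directly by Assumption \ref{ass_3}; the remaining burden, full column rank of $B$ at the fixed slice $j$, rests on the same composition-of-conditionals arguments (Lemmas \ref{lemma3} and \ref{lemma_fullrank}) that the paper itself leans on.
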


\begin{proof}
    Let $\mathbf{S}$ be the minimal conditional set, (e.g., $\mathbf{S} = \{X_1, \cdots, X_{n-1}\}$), denote $|\mathrm{supp}(\mathbf{S})| = r$, under the faithfulness assumption and the Markov assumption, $\mathbb{P}(\mathbf{S}) \neq \mathbb{P}(\tilde{\mathbf{S}})$, if $\mathcal{T}_{(X_1\cdots X_n|\tilde{\mathbf{S}}=j)}$ is a rank-one tensor, we have

    \begin{equation}
        \begin{aligned}
            &\mathcal{T}_{(X_1\cdots X_n|\tilde{\mathbf{S}}=j)}\\
            &=\sum^{r}_{i=1} \otimes^{n}_{t=1} \mathcal{T}_{(X_t|\mathbf{S} = i)} \mathcal{T}_{(\mathbf{S} = i|\tilde{\mathbf{S}} = j)}\\
            &=\mathbf{u}_1 \otimes \cdots \otimes \mathbf{u}_n,
        \end{aligned}
    \end{equation}
which means that for any $X_p \in \{X_1, \cdots X_n\}$, $\mathcal{T}_{(X_p|\mathbf{S}=1)} = \alpha_{2} \mathcal{T}_{(X_p|\mathbf{S}=2)} = \cdots \mathcal{T}_{(X_p|\mathbf{S}=r)}$. 

If $X_p$ has a parent variable $L_p$ in the causal graph, by Lemma .\ref{Lemma_independent}, the equality does not hold. Thus, $\mathcal{T}_{(X_1\cdots X_n|\tilde{\mathbf{L}}=j)}$ is not a rank-one tensor.

If $X_p$ is root node in the causal graph, and $\mathbf{S}$ is conditional set that d-separates $X_p$ and $\{X_1, \cdots X_n\} \setminus \{X_p\}$, we have $\mathcal{T}_{(X_p|\mathbf{S})}$ is full rank. The reason is the following.

Since $\mathbb{P}(X_p|\mathbf{S}) \mathbb{P}(\mathbf{S}) = \mathbb{P}(\mathbf{S}|X_p) \mathbb{P}(X_p)$ by Bayes' theorem \cite{koch1990bayes}, and due to all marginal probabilities are not zero, then we have $\mathcal{T}_{(X_p|\mathbf{S})} = \mathcal{T}_{(\mathbf{S}|X_p)} \mathrm{Diag}(\mathcal{T}_{(X_p)}) \mathrm{Diag}(\mathcal{T}_{(\mathbf{S})})^\dagger$, where $\dagger$ is inverse of matrix, and $X_p$ is ancestor of $\mathbf{S}$. By Lemma .\ref{lemma3}, $\mathbf{S}$ has a parent variables and hence $\mathcal{T}_{(\mathbf{S}|X_p)}$ is full rank (one can vectorize the variable set $\mathbf{S}$ as a variable). Now, we have $\mathcal{T}-{(X_p|\mathbf{S})}$ is full rank due to the three matrices on the right are all full rank.

Thus, $\mathcal{T}_{(X_p|\mathbf{S}=1)} = \alpha_{2} \mathcal{T}_{(X_p|\mathbf{S}=2)} = \cdots \mathcal{T}_{(X_p|\mathbf{S}=r)}$ does not hold, i.e., $\mathcal{T}_{(X_1\cdots X_n|\tilde{\mathbf{S}}=j)}$ is not a rank-one tensor.

\end{proof}

\begin{lemma}\label{lemma_fullrank}
    In the discrete causal graph, for the variable $X_p$, let $L_p$ be the vectorization of parent set of $X_p$, and $\mathrm{Des}_{X_p}$ is descendant variable of $X_p$, then $\mathcal{T}_{(X_p|L_p, \mathrm{Des}_{X_p} = j)}$ is full rank.
\end{lemma}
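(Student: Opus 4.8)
The plan is to reduce the conditional table $\mathcal{T}_{(X_p \mid L_p, \mathrm{Des}_{X_p}=j)}$ to the parent-conditional table $\mathcal{T}_{(X_p \mid L_p)}$, which is full rank by the Full Rank assumption (Assumption \ref{ass_3}), through an invertible (diagonal) change of scale. Writing $D := \mathrm{Des}_{X_p}$ and applying Bayes' rule entrywise,
\begin{equation}
\mathbb{P}(X_p = x \mid L_p = \ell, D = j) = \frac{\mathbb{P}(D = j \mid X_p = x, L_p = \ell)\,\mathbb{P}(X_p = x \mid L_p = \ell)}{\mathbb{P}(D = j \mid L_p = \ell)} .
\end{equation}
First I would establish the conditional independence $D \perp L_p \mid X_p$, so that $\mathbb{P}(D = j \mid X_p = x, L_p = \ell) = \mathbb{P}(D = j \mid X_p = x) =: a_x$ no longer depends on $\ell$. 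Setting $b_\ell := \mathbb{P}(D = j \mid L_p = \ell)$ and $P_{x\ell} := \mathbb{P}(X_p = x \mid L_p = \ell)$, the display becomes $M_{x\ell} = a_x P_{x\ell} / b_\ell$, i.e. in matrix form
\begin{equation}
\mathcal{T}_{(X_p \mid L_p, D = j)} = \mathrm{Diag}(a)\,\mathcal{T}_{(X_p \mid L_p)}\,\mathrm{Diag}(b)^{-1}.
\end{equation}

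Next I would verify that both diagonal factors are invertible. Because every marginal (and hence, under the model, every relevant conditional) probability is strictly positive, we have $a_x = \mathbb{P}(D=j\mid X_p=x) > 0$ and $b_\ell = \mathbb{P}(D=j\mid L_p=\ell) > 0$ for all $x,\ell$, so $\mathrm{Diag}(a)$ and $\mathrm{Diag}(b)$ are full-rank diagonal matrices. Since $\mathcal{T}_{(X_p \mid L_p)} = \mathbb{P}(X_p \mid \mathrm{Pa}_{X_p})$ is full (column) rank by Assumption \ref{ass_3}, and left/right multiplication by invertible matrices preserves rank, I conclude $\mathrm{Rank}(\mathcal{T}_{(X_p \mid L_p, D=j)}) = \mathrm{Rank}(\mathcal{T}_{(X_p \mid L_p)})$, i.e. the table is full rank. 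This is exactly the diagonal-rescaling mechanism already used for the root-node subcase in the proof of Theorem \ref{the:graph}, and it parallels Lemmas \ref{Lemma_independent} and \ref{lemma3}, which guarantee that no column of a parent-conditional table is a linear combination of the others.

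The main obstacle is the conditional-independence step $D \perp L_p \mid X_p$ that makes $a_x$ independent of $\ell$ and thereby turns the entrywise (Hadamard) reweighting into a clean left-diagonal scaling; without it the factor $\mathbb{P}(D=j\mid X_p=x,L_p=\ell)$ depends on $\ell$ and the product need not preserve rank. I would discharge this via d-separation: since $L_p = \mathrm{Pa}_{X_p}$ and $D \in \mathrm{Des}_{X_p}$, every active path from $L_p$ to $D$ enters $X_p$ through an edge $L \to X_p$ and continues along a directed chain $X_p \to \cdots \to D$, so conditioning on $X_p$ blocks all such chains and $X_p$ d-separates $L_p$ from $D$; the Causal Markov assumption (Assumption \ref{ass_1}) then yields the required factorization. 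The one case needing care is when some $L \in L_p$ reaches $D$ along a path avoiding $X_p$; there one restricts attention to the descendants actually separated by $X_p$, which is precisely the situation in which the lemma is invoked inside the proof of Theorem \ref{the:graph}.
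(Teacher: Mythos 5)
Your factorization mechanism (Bayes' rule plus invertible diagonal rescaling) is sound, but the step everything rests on --- the conditional independence $\mathrm{Des}_{X_p} \perp L_p \mid X_p$ --- is not implied by the hypotheses of the lemma, and this is a genuine gap rather than a loose end. $X_p$ d-separates its parent set from a descendant $D$ only if \emph{every} path from $L_p$ to $D$ passes through $X_p$; in a general discrete causal model (which is the setting of the lemma and of Theorem~\ref{the:graph}, where it is invoked) nothing prevents a parent $L \in L_p$ from reaching $D$ by a directed path that avoids $X_p$, e.g.\ $L \to X_p \to D$ together with $L \to D$ or $L \to Z \to D$. In that case $\mathbb{P}(D = j \mid X_p = x, L_p = \ell)$ genuinely depends on $\ell$, your identity degrades to the entrywise (Hadamard) reweighting $M_{x\ell} = c_{x\ell}\,P_{x\ell}/b_\ell$ instead of $\mathrm{Diag}(a)\,\mathcal{T}_{(X_p\mid L_p)}\,\mathrm{Diag}(b)^{-1}$, and entrywise reweighting does not preserve rank: one can exhibit strictly positive tables with $\mathbb{P}(X_p \mid L_p)$ full rank and $\mathbb{P}(D \mid X_p, L_p)$ of maximal rank for which the slice $\mathcal{T}_{(X_p \mid L_p, D = j)}$ is rank-deficient. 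Your closing remark concedes this but does not repair it: ``restricting attention to the descendants actually separated by $X_p$'' proves a strictly weaker statement than the lemma, and it does not cover the way the lemma is used inside the proof of Theorem~\ref{the:graph}, where the conditioned descendants arise from an arbitrary conditioning set $\tilde{\mathbf{S}}$ (possibly a common descendant of several observed variables), with no guarantee of the separation you need.

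For contrast, the paper's proof never needs this independence. It writes the joint $\mathcal{T}_{(X_p, L_p \mid \mathrm{Des}_{X_p} = j)}$ in the two Bayes orders, $\mathcal{T}_{(X_p\mid L_p, \mathrm{Des}_{X_p}=j)}\,\mathrm{Diag}(\mathcal{T}_{(L_p\mid \mathrm{Des}_{X_p}=j)})$ and $\mathcal{T}_{(L_p\mid X_p, \mathrm{Des}_{X_p}=j)}\,\mathrm{Diag}(\mathcal{T}_{(X_p\mid \mathrm{Des}_{X_p}=j)})$, thereby reducing the claim to full-rankness of $\mathcal{T}_{(L_p \mid X_p, \mathrm{Des}_{X_p} = j)}$; that full-rankness is then argued from the full-rankness of $\mathcal{T}_{(\mathrm{Des}_{L_p} \mid L_p)}$ --- using only that $X_p$ and $\mathrm{Des}_{X_p}$ are both descendants of $L_p$ --- together with positivity of all marginal probabilities. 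So the two diagonal factors play the same role as in your argument, but the third factor is a conditional table whose full rank follows from Assumption~\ref{ass_3} and the descendant relation alone, with no d-separation claim. To salvage your route you would have to either establish the required d-separation from additional structure (it is simply false in general), or swap your reduction target $\mathcal{T}_{(X_p \mid L_p)}$ for a quantity, like the paper's $\mathcal{T}_{(L_p \mid X_p, \mathrm{Des}_{X_p} = j)}$, whose rank can be controlled without conditioning $X_p$ between $L_p$ and its descendants.
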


\begin{proof}
    By Bayes' theorem \cite{koch1990bayes}, we have 

    \begin{equation}
        \begin{aligned}
            &\mathcal{T}_{(X_p, L_p |\mathrm{Des}_{X_p} = j)}\\
            &= \mathcal{T}_{(X_p|L_p, \mathrm{Des}_{X_p} = j)} \mathrm{Diag}(\mathcal{T}_{(L_p| Des_{X_p} = j)}) \\
            &= \mathcal{T}_{(L_p|X_p, \mathrm{Des}_{X_p} = j)} \mathrm{Diag}(\mathcal{T}_{(X_p| \mathrm{Des}_{X_p} = j)}). 
        \end{aligned}
    \end{equation}

    Since $X_p$ and $\mathrm{Des}_{X_p}$ both are the descendant set of $L_p$, we have 

    \begin{equation}
        \mathcal{T}_{(L_p|\mathrm{Des}_{L_p})} = \mathcal{T}_{(\mathrm{Des}_{L_p}|L_p)}\mathrm{Diag(\mathcal{T}_{(L_p)}}\mathrm{Diag(\mathcal{T}_{(\mathrm{Des}_{L_p})}}^\dagger,
    \end{equation}
    due to all marginal distribution probabilities are not zero. Then $\mathcal{T}_{(L_p|\mathrm{Des}_{L_p})}$ is full rank, i.e., $\mathcal{T}_{(L_p|X_p, \mathrm{Des}_{X_p} = j)}$ also full rank.

    Moreover, for any $\mathcal{T}_{(L_p=i | \mathrm{Des} = j)}$, we have

    \begin{equation}
        \mathcal{T}_{(L_p = i| \mathrm{Des}_{X_p} = j)} = \frac{\mathcal{T}_{(L_p = i, \mathrm{Des}_{X_p} = j)}}{\mathcal{T}_{(\mathrm{Des}_{X_p} = j)}} \neq 0,
    \end{equation}
    because all marginal distribution probabilities are not zero. Thus, we have 

        \begin{equation}
        \begin{aligned}
            &\mathcal{T}_{(X_p, L_p |\mathrm{Des}_{X_p} = j)}\\
            &= \mathcal{T}_{(L_p|X_p, \mathrm{Des}_{X_p} = j)} \mathrm{Diag}(\mathcal{T}_{(X_p| \mathrm{Des}_{X_p} = j)}) \mathrm{Diag}(\mathcal{T}_{(L_p| \mathrm{Des}_{X_p} = j)})^\dagger. 
        \end{aligned}
    \end{equation}

    One can see that $\mathcal{T}_{(X_p, L_p |\mathrm{Des}_{X_p} = j)}$ is full rank due to three matrices on the right side being full rank.

\end{proof}

\subsection{Proof of Proposition \ref{Pro_rank}}

\begin{proof}
The proof is straightforward. In the discrete LSM model, suppose all latent variable has the same state space. Any two observed are d-separated by any one of their latent parents. According to the graphical criteria, the rank of tensor $\mathcal{T}_{(X_iX_j)}$ is the dimension of latent support.

\end{proof}

\subsection{Proof of Proposition \ref{prop_cluster}}

\begin{proof}

\textbf{Proof of $\mathcal{R}$ule 1:}
In the discrete latent variable model and suppose the assumption 2.1 $\sim$ assumption 2.3 holds, if there does not exists a latent variable $L_1$ that d-separates any pair variables in $\{X_i, X_j, X_k\}$, i.e., the rank of tensor $\mathcal{T}_{(X_iX_jX_k)}$ is not $r$ (by Theorem 1), it must be the full-connection structure among latent variables, as shown in Fig. \ref{fig:c2n3} (d). Otherwise, one can find one latent variable $L_1$ that can d-separates $\{X_i, X_j, X_k\}$, as shwon in Fig. \ref{fig:c2n3} (a) $\sim$ (c). We will show that, if only consider one of the latent variables of $\{L_1, L_2, L_3\}$ in Fig. \ref{fig:c2n3} (d), the tensor of $\mathcal{T}_{(X_iX_jX_k)}$ can not have rank-one decomposition.

\begin{figure}[htp]
    \centering
    \includegraphics[width=0.95\textwidth]{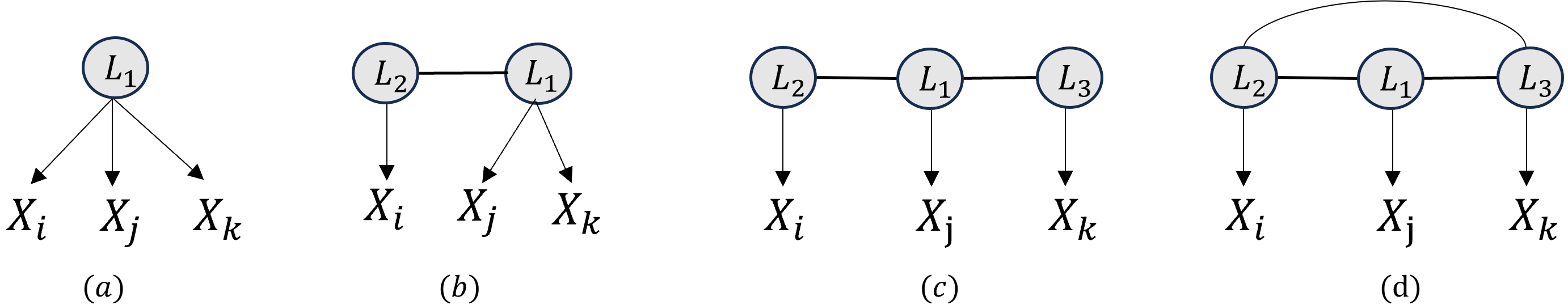}
    \caption{Illustrative example for $\mathcal{R}$ule 1.}
    \label{fig:c2n3}
\end{figure}

According to the graphical criteria of tensor rank and $r < d_i$, and suppose all latent variables have the same state space, for $X_i$, $X_j$ and $X_k$, there is not only one latent variable is conditional set, i.e., but there also is not one latent variable $L$ that d-separates any pair variables in $\{X_i, X_j, X_k\}$. Thus, $\mathrm{Rank}(\mathcal{T}_{(X_iX_jX_k)}) \neq r$.

\textbf{Proof of $\mathcal{R}$ule 2:}

We aim to show if Rank$(\mathcal{T}_{(X_iX_jX_kX_s)})=r$ for any $X_s \in \mathbf{X} \setminus \{X_i, X_j, X_k\}$ then there exists a latent variable $L_p$ that d-separates $\{X_i, ..., X_s\}$ and $L_p$ is the parent variable of $\{X_i, X_j, X_k\}$ in the discrete LVM. We first prove it by the contradiction. If $\{X_i, X_j, X_k\}$ does not share one common latent parent, e.g., $L_1 \to \{X_i, X_j\}$ and $L_2 \to \{X_k\}$, due to the structure assumption in discrete LSM, there exist $X_s \in Ch_{L_2}$, as shown in Fig. \ref{fig:c2nb3}.

\begin{figure}[htp]
    \centering
    \includegraphics[width=0.35\textwidth]{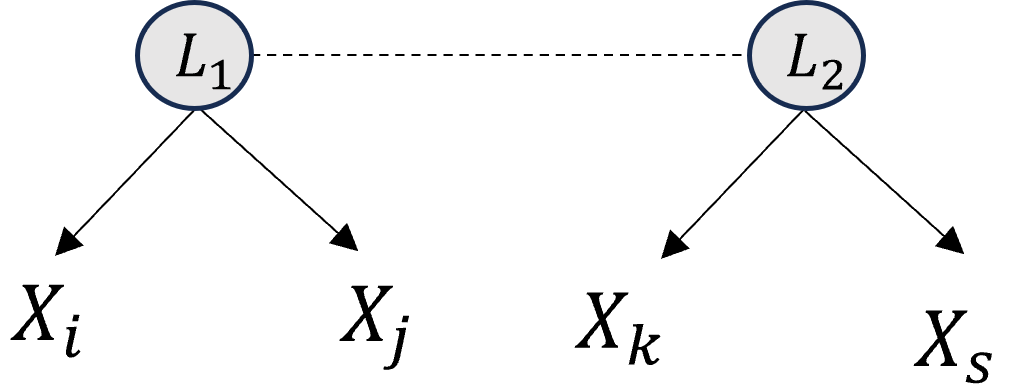}
    \caption{Illustrative example for $\mathcal{R}$ule 2.}
    \label{fig:c2nb3}
\end{figure}

By the graphical criteria of tensor rank condition and $r < d_i$, one has $\mathrm{Rank}(\mathcal{T}_{(X_iX_jX_kX_s)}) \neq r$ due to $L_1$ or $L_2$ is not the conditional set that d-separates any pair variable of $\{X_i, X_j, X_k, X_s\}$ (e.g., $X_k, X_s$ can not be d-separates given $L_1$), which is contrary to the condition $\mathrm{Rank}(\mathcal{T}_{(X_iX_jX_kX_s)}) = r$.

\end{proof}

\subsection{Proof of Proposition \ref{pro_merger}}

\begin{proof}
    Since $C_1$ and $C_2$ are two causal clusters, then the elements in $C_1$ have only one common latent variable. Without loss of generality, we let $L_1$ denote the parental latent variable of $C_1$. Similarly, $L_2$ denotes the parental latent variable of $C_2$. Since $C_1$ and $C_2$ are overlapping, then they have at least one shared element. Let $X_i$ denote the shared element of $C_1$. then $X_i$ has two latent parents $L_1$ and $L_2$, which contradicts with the pure child assumption in the discrete LSM model. This finishes the proof.
\end{proof}

\subsection{Proof of Theorem \ref{the:mm}}

\begin{proof}
   Based on Prop. 2, one can identify the causal cluster by testing the tensor rank condition (Line 5 $\sim$ 12).  Besides, the Prop. 3 ensure that there are no redundant latent variables introduced (Line 15). Thus, the causal cluster can be identified by Algorithm 1, under the discrete latent variable model, with assumption 2.2 $\sim$ assumption 2.4.
\end{proof}

\subsection{Proof of Theorem \ref{the:d-separation}}

\begin{proof}

We first prove this result by a specific case and then extend it to a general case result.

\textbf{Proof by Specific case}

\begin{figure}[htp]
    \centering
    \includegraphics[width=0.20\textwidth]{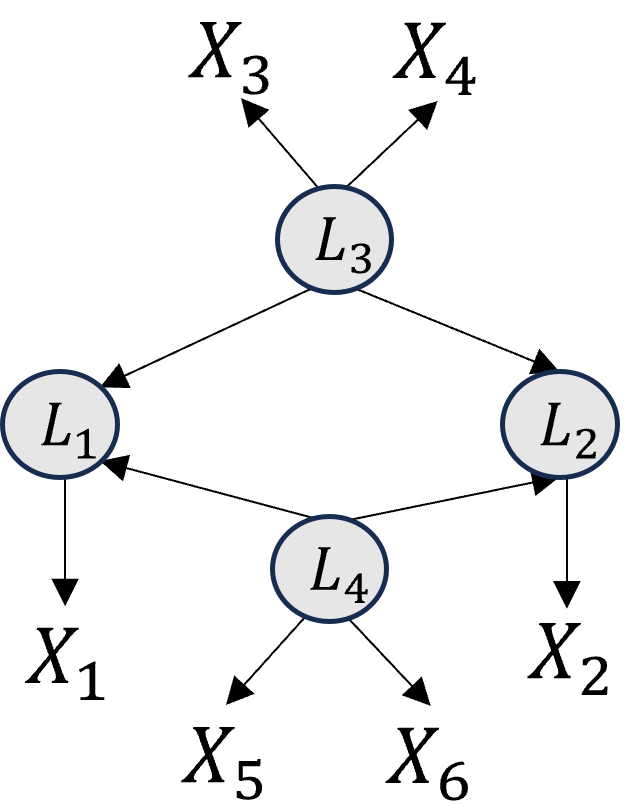}~~~~~~~~~~
    \includegraphics[width=0.20\textwidth]{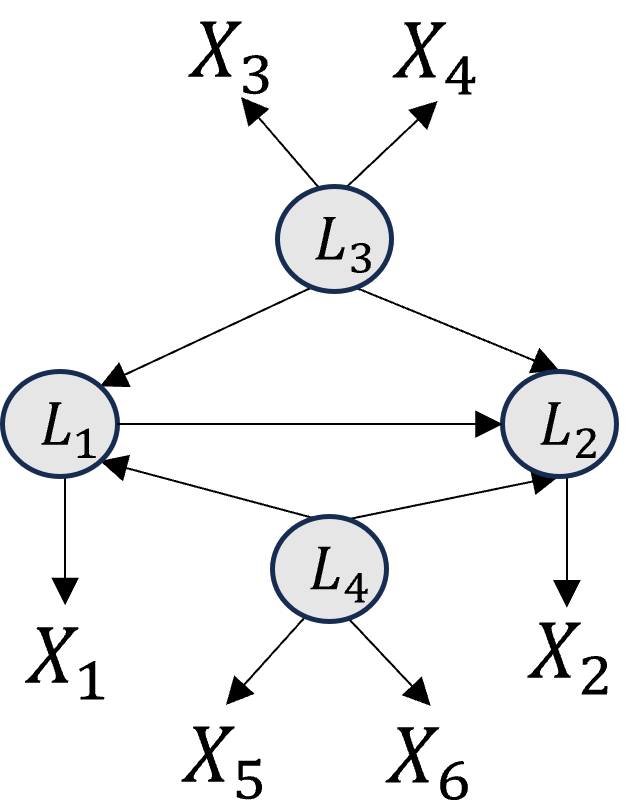}
    \caption{d-separation and d-connection example.}
    \label{fig:dseparation}
\end{figure}

'If' part: as shown in Fig.\ref{fig:dseparation}, suppose all support of latent space is $r$ and $r < d_i$, $d_i$ is the dimension of any support of observed variables. We will show that, for $\{X_1,..., X_6\}$, the rank of $\mathcal{T}_{\{X_1,..., X_6\}}$ are $r^2$. Let $\mathbf{L}$ be the vectorization of joint distribution $\mathbb{P}(L_3, L_4)$ with $|\mathrm{supp}(\mathbf{L})| = r^2$. 

Since $L_2, L_3$ is a conditional set that d-separates all variables in $\{X_1, \cdots, X_6\}$ and there is no other conditional set with smaller support, according to the graphical criteria of tensor rank, $\mathrm{Rank}(\mathcal{T}_{(X_1\cdots X_6)}) = |\mathrm{supp}(\mathbf{L})| = r^2$.

'Only if' part: now, we consider the case that if $L_1$ and $L_2$ are d-connection (Fig.\ref{fig:dseparation}). In this case, given $\mathbb{P}(L_3L_4)$ (represented by $\mathbb{P}(\mathbf{L})$), $\mathcal{T}_{(X_1X_2|\mathbf{L} = i)}$ cannot be decomposition as the outer product of two vectors due to the contingency table of $\mathcal{T}_{(X_1X_2|\mathbf{L} = i)}$ is not a rank-one tensor by Markov assumption. That is, $\mathbf{L}$ is not a conditional set that d-separates $X_1$ and $X_2$.

According to the graphical criteria of tensor rank condition, one can infer that $\mathrm{Rank}(\mathcal{T}_{(X_1 \cdots X_6)}) \neq |\mathrm{supp}(\mathbf{L})| = r^2$. Based on the above analysis, one can see that the conditional independent relations hold if and only if the rank of the tensor equals to support dimension of the conditional set.

\textbf{Proof by general case}

'if' part:
in the discrete LSM model, assume that all latent variables have the same support. For any pair of variables $X_i$ and $X_j$ that are direct children of $L_k$, it is evident that $L_k$ is the only minimal conditional set that d-separates $X_i$ from $X_j$. Consider a set of latent variables $\mathbf{L}_p$ and their corresponding child sets $\mathbf{X}_{p1}$ and $\mathbf{X}_{p2}$, where each latent variable has at least two child variables included in $\mathbf{X}_{p1}$ and $\mathbf{X}_{p2}$. The minimal conditional set that d-separates all variables in $\mathbf{X}_{p1}$ from those in $\mathbf{X}_{p2}$ is their common latent parent set $\mathbf{L}_p$. Moreover, for two variables $X_i$ and $X_j$ that do not share a common latent parent, if $\mathbf{L}_p$ d-separates $X_i$ and $X_j$, then $\mathbf{L}_p$ also d-separates all variables in $\mathbf{X}_{p1} \cup \mathbf{X}_{p2} \cup \{X_i, X_j\}$. Since all latent variables have the same support, the minimal conditional set in the graph corresponds to the set with the minimal support. Therefore, by the graphical criteria of tensor rank, $\mathrm{Rank}(\mathcal{T}{(X_i, X_j, \mathbf{X}_{p1}, \mathbf{X}_{p2})}) = |\mathrm{supp}(\mathbf{L}_p)|$. As all latent variables share the same state space, we deduce that $|\mathrm{supp}(\mathbf{L}_p)| = r^{|\mathbf{L}_p|}$.

'Only if' part:
if $X_i$ and $X_j$ cannot be d-separated by $\mathbf{L}_p$, then $\mathbf{L}_p$ does not constitute a conditional set for $\mathbf{X}_{p1} \cup \mathbf{X}_{p2} \cup \{X_i, X_j\}$. According to the graphical criteria of tensor rank, $\mathrm{Rank}(\mathbb{P}(X_i, X_j, \mathbf{X}_{p1}, \mathbf{X}_{p2})) \neq |\mathrm{supp}(\mathbf{L}_p)|$.

\end{proof}

\subsection{Proof of Theorem \ref{the_stru}}

\begin{proof}
    Such an identification is derived from the original PC algorithm. By Theorem 2, given the measurement model, one can test the CI relations among latent variables, when $r < d_i$ (remark 1). Thus, the causal structure among latent variables can be identified up to a Markov equivalent class by Algorithm 2 \cite{spirtes2000causation}. 
\end{proof}

\section{Extension of Different Latent State Space}\label{sec_dif}

To extend our theoretical result to the case in which the state space of the latent variable may be different (i.e., $r_i \neq r_j$). We present the minimal state space criteria by which the state space of the latent variable in the conditional set is identifiable.

\begin{theorem}[Minimal state space criteria]
In the discrete LSM model, suppose Assumption 2.2 $\sim$ 2.4 holds. For any two observed variables $X_i$ and $X_j$, let $L_i$ and $L_j$ be their latent parent respectively, and let $r_1$ be the dimension of $\mathrm{supp}(L_i)$ and $r_2$ be the dimension of $\mathrm{supp}(L_j)$, we have $\mathrm{Rank}(\mathcal{T}_{(X_1,X_2)}) = min(r_1, r_2)$.

\end{theorem}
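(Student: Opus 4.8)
The plan is to collapse the rank of the two–way observed table onto the rank of the latent joint table $\mathbb{P}(L_i,L_j)$, and then to evaluate the latter through the Full Rank Assumption. First I would factorize the joint law through the two pure parents. Because $X_i$ and $X_j$ are pure leaf children whose only parents are $L_i$ and $L_j$ respectively, the same block argument used in Appendix~\ref{sec_exp} gives
\[
\mathcal{T}_{(X_i,X_j)} \;=\; \mathcal{T}_{(X_i\mid L_i)}\,\mathbb{P}(L_i,L_j)\,\mathcal{T}_{(X_j\mid L_j)}^{\intercal},
\]
where $\mathcal{T}_{(X_i\mid L_i)}$ is $d_i\times r_1$, $\mathbb{P}(L_i,L_j)$ is $r_1\times r_2$, and $\mathcal{T}_{(X_j\mid L_j)}$ is $d_j\times r_2$.

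By the Sufficient Observation Assumption we have $d_i\ge r_1$ and $d_j\ge r_2$, and by the Full Rank Assumption each conditional table has linearly independent columns, hence full column rank. Since left multiplication by a full–column–rank matrix and right multiplication by a full–row–rank matrix both preserve rank, I would conclude $\mathrm{Rank}(\mathcal{T}_{(X_i,X_j)})=\mathrm{Rank}(\mathbb{P}(L_i,L_j))$. It therefore remains to show $\mathrm{Rank}(\mathbb{P}(L_i,L_j))=\min(r_1,r_2)$.

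For the upper bound I would invoke Theorem~\ref{the:graph}: conditioning on $\{L_i\}$ (support $r_1$) or on $\{L_j\}$ (support $r_2$) d-separates the leaf $X_i$ from the leaf $X_j$, so the minimal d-separating support, and hence the rank, is at most $\min(r_1,r_2)$. For the matching lower bound, assume without loss of generality $r_1\le r_2$ and write $\mathbb{P}(L_i,L_j)=\mathrm{Diag}(\mathbb{P}(L_i))\,\mathbb{P}(L_j\mid L_i)$; since every marginal is positive, $\mathrm{Diag}(\mathbb{P}(L_i))$ is invertible and the claim reduces to showing that the $r_1$ conditional rows $\mathbb{P}(L_j\mid L_i=a)$ are linearly independent. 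This is a full-rank statement about a conditional table joining two latents, which I would establish by the same Bayes-inversion argument as Lemma~\ref{lemma_fullrank}, propagating the Full Rank Assumption along the structural edge(s) between $L_i$ and $L_j$.

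The hard part is this last lower bound, and it is also where a genuine caveat lives: the equality $\mathrm{Rank}(\mathbb{P}(L_i,L_j))=\min(r_1,r_2)$ uses that $L_i$ and $L_j$ are \emph{directly dependent}. If the two latents were connected only through a collider they would be marginally independent, $\mathbb{P}(L_i,L_j)$ would be rank one, and the conclusion would fail; similarly, a lower-support latent sitting on the path between them would bottleneck the rank below $\min(r_1,r_2)$. I would therefore carry out the propagation under the adjacency (direct-edge) reading of ``$L_i$ and $L_j$ are their latent parents'', which is the regime in which the criterion is actually used to read off support dimensions, and note that the equal-support case recovers Proposition~\ref{Pro_rank}.
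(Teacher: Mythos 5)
Your proposal is correct in substance but takes a genuinely different route from the paper. The paper's own proof is a two-sentence appeal to Theorem \ref{the:graph}: because $X_i$ and $X_j$ are pure leaves, each of $L_i$ and $L_j$ by itself d-separates $X_i$ from $X_j$, so the tensor rank equals the smallest support among d-separating sets, namely $\min(r_1,r_2)$. You instead sandwich the latent joint table, $\mathcal{T}_{(X_i,X_j)} = \mathcal{T}_{(X_i\mid L_i)}\,\mathbb{P}(L_i,L_j)\,\mathcal{T}_{(X_j\mid L_j)}^{\intercal}$, cancel the full-column-rank conditional factors using Assumption \ref{ass_3} and the sufficient-observation condition, and reduce the claim to $\mathrm{Rank}(\mathbb{P}(L_i,L_j))=\min(r_1,r_2)$. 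This is more elementary and self-contained than the paper's argument; one simplification is available to you: the upper bound never needs Theorem \ref{the:graph}, since $\mathbb{P}(L_i,L_j)$ is an $r_1\times r_2$ matrix and its rank is trivially at most $\min(r_1,r_2)$, so the only real content is the lower bound, which you prove by propagating the Full Rank Assumption along the edge between the two latents in the spirit of Lemma \ref{lemma_fullrank}.

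Your caveat is genuine and exposes something the paper's one-line proof glosses over. If $L_i$ and $L_j$ are marginally independent (e.g.\ connected only through a collider, as in the paper's own ``Collider'' simulation setting), then $\mathbb{P}(L_i,L_j)$ has rank one and the stated equality fails; the paper itself exploits exactly this fact by using rank one of a contingency table as an independence test in Section \ref{sec42}. Likewise, a latent with smaller support lying on every path between $L_i$ and $L_j$ caps the rank strictly below $\min(r_1,r_2)$. In the language of Theorem \ref{the:graph}, the paper's proof silently assumes condition (ii) --- that no separating set of smaller support exists --- which is precisely what fails in these configurations. So the theorem is literally true only under your adjacency/dependence reading (the regime in which it is used downstream, where it generalizes Proposition \ref{Pro_rank}), and your proof, unlike the paper's, makes that hypothesis explicit.
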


\begin{proof}
 In the discrete LSM model, for two observed variables, we have $r < x_i$ and any one of the latent parents can d-separate $X_i$ from $X_i$. Based on the graphical criteria of the tensor rank condition, one can see that the rank is the latent parent with the minimal dimension of support.
\end{proof}

Based on the minimal state space criteria, one can directly extend the identification of the discrete LSM model to the setting where the latent state space can be different.

\subsection{Identification of causal cluster}

\begin{proposition}[Identification of causal cluster in different state space]

In the discrete LSM mode, suppose Assumption 2.2 $\sim$ Assumption 2.4 holds. For three disjoint observed variables $X_i, X_j, X_k \in \mathbf{X}$, then $\{X_i, X_j, X_k\}$ share the same latent parent  if $\mathrm{Rank}(\mathcal{T}_{(X_i, X_j, X_k, X_s)}) = r$ for any $X_s \in \mathbf{X} \setminus \{X_i, X_j, X_k\}$, where $r = \mathrm{Rank}(\mathcal{T}_{(X_i, X_j)}) = \mathrm{Rank}(\mathcal{T}_{(X_i, X_k)}) = \mathrm{Rank}(\mathcal{T}_{(X_k, X_j)})$.

\end{proposition}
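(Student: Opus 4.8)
The plan is to argue the contrapositive: assuming $\{X_i,X_j,X_k\}$ do \emph{not} share a common latent parent, I will exhibit a single witness $X_s\in\mathbf{X}\setminus\{X_i,X_j,X_k\}$ with $\mathrm{Rank}(\mathcal{T}_{(X_i,X_j,X_k,X_s)})\neq r$, contradicting the hypothesis. By the Purity Assumption every observed variable is a pure child of exactly one latent (the same reading used in the proof of Proposition \ref{pro_merger}); write $L_a,L_b,L_c$ for the parents of $X_i,X_j,X_k$ and $r_a,r_b,r_c$ for the sizes of their supports. The first step is to translate the three pairwise rank hypotheses through the Minimal State Space Criteria: since $\mathrm{Rank}(\mathcal{T}_{(X_i,X_j)})=\min(r_a,r_b)=r$ and similarly for the other two pairs, each parent occurring here has support at least $r$, and any parent shared by two of the three variables has support exactly $r$.

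I would then split according to how many distinct parents occur. If exactly two coincide, say $L_a=L_b\neq L_c$, then $r_a=\mathrm{Rank}(\mathcal{T}_{(X_i,X_j)})=r$ and $r_c\geq r\geq 2$. By the Three-Pure-Child Assumption $L_c$ has a further child $X_s\in\mathrm{Ch}_{L_c}\setminus\{X_k\}$, automatically disjoint from $X_i,X_j$ (whose parent is $L_a$). Now $X_i,X_j$ can be d-separated only by conditioning on $L_a$, and $X_k,X_s$ only by conditioning on $L_c$, so the unique minimal d-separating set of $\{X_i,X_j,X_k,X_s\}$ is $\{L_a,L_c\}$. By the graphical criteria of Theorem \ref{the:graph} together with the product-support fact $|\mathrm{supp}(\{L_a,L_c\})|=r_a r_c$ (the same fact invoked in Theorem \ref{the:d-separation}), the rank equals $r_a r_c = r\cdot r_c\geq 2r>r$, giving the contradiction.

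The remaining case, $L_a,L_b,L_c$ pairwise distinct, is the main obstacle, because a naively chosen sibling can fail: if the latent structure routes all three variables through one ``bottleneck'' latent, conditioning on that latent alone d-separates the four, leaving the rank at $r$. The fix is to choose the witness with care. I would first record a combinatorial fact: not all three parents can each d-separate the other pair by itself, since the conditions ``every $L_b$--$L_c$ path passes through $L_a$'', ``every $L_a$--$L_c$ path through $L_b$'', and ``every $L_a$--$L_b$ path through $L_c$'' cannot hold simultaneously in a DAG (they force a non-simple path). Pick $L_m$ (parent of some $X_m$) that does \emph{not} d-separate the other pair $(X_p,X_q)$, and take $X_s\in\mathrm{Ch}_{L_m}\setminus\{X_i,X_j,X_k\}$. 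Any d-separating set of the four must contain $L_m$ (the only node blocking the fork $X_m\leftarrow L_m\to X_s$) and, since $L_m$ alone does not separate $X_p$ from $X_q$, must also contain a separator of $(X_p,X_q)$ disjoint from $L_m$ whose support is at least $\mathrm{Rank}(\mathcal{T}_{(X_p,X_q)})=r$; multiplicativity of the joint support then yields rank $\geq r_m\cdot r\geq r^2>r$.

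The delicate points I expect to spend the most effort on lie entirely in this last case: verifying that conditioning on $L_m$ neither activates a collider path nor can be reused inside the $(X_p,X_q)$-separator, so that the support genuinely factorizes and the strict inequality $r^2>r$ (valid because $r\geq 2$) is justified. Everything else reduces to Theorem \ref{the:graph}, the Minimal State Space Criteria, and the Three-Pure-Child Assumption supplying the required sibling witnesses; the argument then mirrors, mutatis mutandis, the proof of $\mathcal{R}ule\,2$ in Proposition \ref{prop_cluster} for the equal-support setting.
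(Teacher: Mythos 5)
Your proposal follows the same contrapositive-plus-witness strategy as the paper's proof, but it is strictly more complete. The paper's argument covers only your first case: it declares ``without loss of generality, let $L_1$ be the parent of $X_i, X_j$ and $L_2$ be the parent of $X_k$'', takes a witness $X_s \in \mathrm{Ch}_{L_2}$, and invokes the graphical criterion (Theorem \ref{the:graph}) to conclude the four-way rank cannot be $r$ --- exactly your $L_a = L_b \neq L_c$ argument, with the same forced separator $\{L_a, L_c\}$ of support $r_a r_c > r$. That ``without loss of generality'' is not actually without loss: the case of three pairwise distinct parents, which you correctly isolate as the main obstacle, is never treated in the paper (nor in the analogous proof of $\mathcal{R}ule\,2$ of Proposition \ref{prop_cluster}), and it is precisely where a carelessly chosen witness fails --- in a chain $L_a \to L_c \to L_b$, a sibling of $X_k$ yields rank exactly $r$, so the witness must be chosen on a non-bottleneck parent. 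Your Case 2 supplies the missing argument, and in that sense your proof is sounder than the one in the paper.

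Two points in your Case 2 need repair, though both are fixable inside your own framework. First, your combinatorial fact is justified purely in terms of paths (``every $L_b$--$L_c$ path passes through $L_a$'', etc.), but d-separation is weaker than this reading: a path can also be blocked by a collider lying outside the conditioning set, and if the three latents were mutually disconnected all three conditions would hold vacuously. A clean fix: if $L_b \perp L_c \mid L_a$, $L_a \perp L_c \mid L_b$ and $L_a \perp L_b \mid L_c$ all held, the intersection property of d-separation would yield $L_c \perp \{L_a, L_b\}$, hence marginal independence of a pair, hence a pairwise tensor rank of $1$, contradicting $r \geq 2$; and in the degenerate situation $r = 1$ the witness argument already contradicts the four-way hypothesis, since any separating set contains a latent of support at least $2$. (Alternatively, restrict your path argument to collider-free paths, which exist between each pair by faithfulness because the pairwise ranks exceed $1$; sub-paths of collider-free paths are collider-free, so your non-simplicity contradiction goes through.) Second, your bound $r_m \cdot r \geq r^2$ over-claims: the extra nodes that a separating set must contain beyond $L_m$ need not themselves have support at least $r$, and the product-support factorization for an arbitrary mixed set is not something you have established. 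You do not need it: every set that d-separates all pairs of $\{X_i, X_j, X_k, X_s\}$ must properly contain $\{L_m\}$, so under the product-support convention its support is at least $2 r_m > r$, which already rules out rank $r$ by Theorem \ref{the:graph}.
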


\begin{proof}
    If $\mathrm{Rank}(\mathcal{T}_{(X_i, X_j, X_k, X_s)}) = r$, there exist a variable set $\mathbf{S}$ with $|\mathrm{supp}(\mathbf{S})| = r$ that d-separates any pair variables in $\{X_i, X_j, X_k, X_s\}$, according to the graphical criteria of tensor rank condition. In the discret LSM model and $r < d_i$ (any support dimension of latent variable is less than the support dimension of observed variable), $r = \mathrm{Rank}(\mathcal{T}_{(X_i, X_j)}) = \mathrm{Rank}(\mathcal{T}_{(X_i, X_k)}) = \mathrm{Rank}(\mathcal{T}_{(X_k, X_j)})$, we have for any $X_i, X_j \in \{X_i, X_j, X_k, X_s\}$, the conditional set is one of the latent parent of $X_i$ and $X_j$. If $X_i, X_j$ and $X_k$ do not share the common latent parent, without loss of generality, let $L_1$ be the parent of $X_i, X_j$ and $L_2$ be the parent of $X_k$, there exist the $X_s \in Ch_{L_2}$ such that $X_k$ and $X_s$ cannot be d-separated given $L_1$ or $X_i$ and $X_j$ cannot be d-separated given $L_2$. By the graphical criteria of tensor rank condition, $\mathrm{Rank}(\mathcal{T}_{(X_i, X_j, X_k, X_s)}) \neq |\mathrm{supp}(L_1)|$, or  $\mathrm{Rank}(\mathcal{T}_{(X_i, X_j, X_k, X_s)}) \neq |\mathrm{supp}(L_2)|$. Let $r = \mathrm{min}(|\mathrm{supp}(L_1)|, |\mathrm{supp}(L_2)|)$, we have $\mathrm{Rank}(\mathcal{T}_{(X_i, X_j, X_k, X_s)}) \neq r$ if $X_i, X_j$ and $X_j$ are not a causal cluster.
\end{proof}

One can properly adjust the search algorithm such that the causal cluster can be identified, by the minimal state space criteria. The algorithm is presented as follows (Algorithm \ref{alg:algorithm_clu2}).

\begin{algorithm}[htp]
    \caption{Identifying the causal cluster (different latent space)}
    \label{alg:algorithm_clu2}
    \textbf{Input}: Data from a set of measured variables $\mathbf{X}_{\mathcal{G}}$\\
    \textbf{Output}: Causal cluster $\mathcal{C}$
    \begin{algorithmic}[1] 
        \STATE Initialize the causal cluster set $\mathcal{C} \coloneqq \emptyset$, and $\mathcal{G}^\prime= \emptyset$;
        \STATE \textbf{Begin} the recursive procedure 
        \REPEAT{
        \FOR{each $X_i, X_j$ and $X_k$ $\in \mathbf{X}$}
        \STATE \Alcomment{\textit{// Apply the minimal state space criteria}}
        \STATE r = min($\mathrm{Rank}(\mathcal{T}_{\{X_i, X_j\}})$,$\mathrm{Rank}(\mathcal{T}_{\{X_i, X_k\}})$,$\mathrm{Rank}(\mathcal{T}_{\{X_k, X_j\}})$ );
        
        \IF{$\mathrm{Rank}(\mathcal{T}_{\{X_i, X_j, X_k, X_s\}})=r$, for all $X_s \in \mathbf{X} \setminus \{X_i, X_j, X_k\}$}
        \STATE $\mathbf{C} = \mathbf{C} \cup \{\{X_i, X_j, X_k\}\}$;
        \ENDIF
        \ENDFOR
        }
        \UNTIL{no causal cluster is found.}
        \STATE \Alcomment{\textit{// Merging cluster and introducing latent variables}}
        \STATE Merge all the overlapping sets in $\mathbf{C}$ by Prop. \ref{pro_merger}.
        \FOR{each $C_i \in \mathbf{C}$}
        \STATE Introduce a latent variable$L_i$ for $C_i$;
        \STATE $\mathcal{G} = \mathcal{G} \cup \{L_i \to X_j|X_j \in C_i \}$.
        \ENDFOR
        \STATE \textbf{return} Graph $\mathcal{G}$ and causal cluster $\mathcal{C}$.
    \end{algorithmic}
\end{algorithm}

\subsection{Conditional independence test among latent variables}

\begin{proposition}[conditional independence among latent variables in different state space]

In the discrete LSM model, suppose Assumption 2.2 $\sim$
Assumption 2.4 holds. Let $X_i$ and $X_j$ be the pure child of $L_i$ and $L_j$ respectively, $\mathbf{X}_{p1}$ and $\mathbf{X}_{p2}$ be two disjoint child set of the latent set $\mathbf{L}_p$ with $|\mathbf{X}_{p1}| = |\mathbf{X}_{p2}| = |\mathbf{L}_p|$, then $L_i \bot L_j | \mathbf{L}_p$ if and only if  $\mathrm{Rank}(\mathcal{T}_{(X_i, X_j, \mathbf{X}_{p1}, \mathbf{X}_{p2})}) = r$, where $r = \prod_{L_i \in \mathbf{L}_p} |\mathrm{supp}(L_i)|$.

\end{proposition}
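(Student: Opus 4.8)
The plan is to follow the structure of the proof of Theorem~\ref{the:d-separation}, replacing every appeal to the common-support identity $|\mathrm{supp}(\mathbf{L}_p)| = r^{|\mathbf{L}_p|}$ by the minimal state space criteria so as to accommodate heterogeneous latent supports. Throughout, the work is carried by the graphical criteria of the tensor rank condition (Theorem~\ref{the:graph}), which reduces $\mathrm{Rank}(\mathcal{T}_{(X_i, X_j, \mathbf{X}_{p1}, \mathbf{X}_{p2})})$ to the minimal support of a conditional set that d-separates every pair among the four groups of observed variables.

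For the \emph{if} direction, I would first use the structural hypotheses to pin down $\mathbf{L}_p$ as the relevant conditional set. Since $|\mathbf{X}_{p1}| = |\mathbf{X}_{p2}| = |\mathbf{L}_p|$ and $\mathbf{X}_{p1}, \mathbf{X}_{p2}$ are disjoint child sets, each $L_k \in \mathbf{L}_p$ contributes exactly one pure child to $\mathbf{X}_{p1}$ and one to $\mathbf{X}_{p2}$. Because these children are pure (Purity Assumption), the only way to block the fork between the two children of a single $L_k$ is to condition on $L_k$ itself, so any d-separating set must contain all of $\mathbf{L}_p$. Conversely, when $L_i \bot L_j \mid \mathbf{L}_p$ holds, conditioning on $\mathbf{L}_p$ blocks every latent path, and since $X_i, X_j$ are pure children of $L_i, L_j$, the set $\mathbf{L}_p$ also d-separates $X_i$ and $X_j$ from the remaining observed variables. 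Thus $\mathbf{L}_p$ is exactly the minimal conditional set in the graph d-separating all pairs in $\{X_i, X_j\} \cup \mathbf{X}_{p1} \cup \mathbf{X}_{p2}$. I would then invoke the faithfulness assumption to rule out any lower-dimensional set $\tilde{\mathbf{S}}$ constructed in parameter space (exactly as in Case~2 of the only-if part of Theorem~\ref{the:graph}), and the minimal state space criteria to confirm that no alternative latent conditional set yields a smaller product of supports. Applying Theorem~\ref{the:graph} gives $\mathrm{Rank}(\mathcal{T}_{(X_i, X_j, \mathbf{X}_{p1}, \mathbf{X}_{p2})}) = |\mathrm{supp}(\mathbf{L}_p)|$, and by faithfulness (part (ii)) the joint support of $\mathbf{L}_p$ does not collapse, so $|\mathrm{supp}(\mathbf{L}_p)| = \prod_{L_k \in \mathbf{L}_p} |\mathrm{supp}(L_k)| = r$.

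For the \emph{only if} direction, I would argue by contraposition: if $L_i \not\bot L_j \mid \mathbf{L}_p$, then conditioning on $\mathbf{L}_p$ leaves an active path between $L_i$ and $L_j$, so $\mathbf{L}_p$ fails to d-separate $X_i$ and $X_j$ and is no longer a valid conditional set for $\{X_i, X_j\} \cup \mathbf{X}_{p1} \cup \mathbf{X}_{p2}$. By the graphical criteria the rank can then no longer equal $|\mathrm{supp}(\mathbf{L}_p)| = r$, which completes the equivalence.

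The main obstacle I anticipate lies in the minimality-of-support argument inside the \emph{if} direction. In the equal-support setting of Theorem~\ref{the:d-separation}, a graph-minimal conditional set automatically carries minimal support; with heterogeneous supports this implication breaks, and I must rule out that some structurally different latent set d-separating the same observed variables has a strictly smaller product of supports. The minimal state space criteria, together with the fact that each $L_k \in \mathbf{L}_p$ is forced into any d-separating set by its pure children in $\mathbf{X}_{p1}$ and $\mathbf{X}_{p2}$, is precisely the tool I would use to close this gap.
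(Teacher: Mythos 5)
Your proposal is correct and takes essentially the same route as the paper's own proof: both reduce the claim to the graphical criteria of the tensor rank condition (Theorem \ref{the:graph}), argue that each $L_k \in \mathbf{L}_p$ is forced into any d-separating set by its pure children appearing in both $\mathbf{X}_{p1}$ and $\mathbf{X}_{p2}$ so that $\mathbf{L}_p$ is the minimal conditional set for $\{X_i, X_j\} \cup \mathbf{X}_{p1} \cup \mathbf{X}_{p2}$, and then conclude $\mathrm{Rank}(\mathcal{T}_{(X_i, X_j, \mathbf{X}_{p1}, \mathbf{X}_{p2})}) = |\mathrm{supp}(\mathbf{L}_p)| = \prod_{L_k \in \mathbf{L}_p}|\mathrm{supp}(L_k)|$ in the \emph{if} direction and its failure in the \emph{only if} direction. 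If anything, you are more explicit than the paper about the two delicate points (ruling out parameter-space conditional sets via faithfulness, and the fact that graph-minimality need not imply support-minimality under heterogeneous supports), which the paper's proof compresses into a single sentence.
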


\begin{proof}
'If' part: in the discrete LSM, we have $r_i < d_j$ for any $i\in [k], j\in [p]$, where $k$ is the number of latent variables while $p$ is the number of observed variables. In the causal graph, for $X_{q1} \in \mathbf{X}_{p1}$ and $X_{q2} \in \mathbf{X}_{p2}$, $X_{q1}, X_{q2} \in Ch(L_t)$ for $\forall L_t \in \mathbf{L}_p$, we have $L_t$ is the only conditional set that d-separates $X_{q1}$ and $X_{q2}$ with minimal support dimension. Thus, $\mathbf{L}_p$ also be the minimal conditional set that d-separates any pair variables in $\mathbf{X}_{p1} \cup \mathbf{X}_{p2}$. Now, if $X_i$ and $X_j$ are d-separated by $\mathbf{L}_p$, according to the graphical criteria of tensor rank condition, one have $\mathrm{Rank}(\mathcal{T}_{(X_i, X_j, \mathbf{X}_{p1}, \mathbf{X}_{p2})}) = |\mathrm{supp}(\mathbf{L}_p)|$. Since $\mathbf{L}_p$ is the joint distribution of latent variable set, we have $r = \prod_{L_i \in \mathbf{L}_p} |\mathrm{supp}(L_i)|$. 

'Only if' part: on the other hand, if $X_i$ and $X_j$ are not d-separated by $\mathbf{L}_p$, for example, $L_i \to L_j$ in the causal graph, then $\mathbf{L}_p$ is not a conditional set that d-separates all pair variables in $\{X_i, X_j, \mathbf{X}_{p1}, \mathbf{X}_{p2}\}$. According to the graphical criteria of tensor rank condition, we have $\mathrm{Rank}(\mathcal{T}_{(X_i, X_j, \mathbf{X}_{p1}, \mathbf{X}_{p2})}) \neq |\mathrm{supp}(\mathbf{L}_p)|$, i.e., $\mathrm{Rank}(\mathcal{T}_{(X_i, X_j, \mathbf{X}_{p1}, \mathbf{X}_{p2})}) \neq \prod_{L_i \in \mathbf{L}_p} |\mathrm{supp}(L_i)|$. This completes the proof.
\end{proof}

In particular, $|\mathrm{supp}(L_i)|$ can be identified by their pure child variable, according to minimal state space criteria. An intuition illustration is by mapping the conditional set variable $\mathbf{L}_p$ to one new latent variable $\tilde{L}$ (i.e., vectorization), the graphical criteria of causal cluster still hold, e.g., $\{\mathbf{X}_{p1}, \mathbf{X}_{p2}, X_i\}$ is a causal cluster that shares a common parent $\tilde{L}$. However, such a map will exponentially increase the dimension of the latent variable support. One issue will be raised: the observed variable may have a smaller support dimension than the latent variables such that the d-separation relations among latent variables cannot be examined. Thus, it is necessary to study when and how the testability of d-separation holds. The result is provided in Prop. \ref{pro_test}.

\begin{remark}[Testability of d-separation]\label{pro_test}
     For an n-way tensor $\mathcal{T}_{\{X_1,..., X_n\}}$ that is used to test the d-separation relations among latent variables, such a CI relation is testable if $\prod^{|\mathbf{L}_p|}_{j=1}r_j^{|\mathbf{L}_p|} < \prod^{n}_{i=1}d_i - \text{max}(d_1,...,d_n)$.

\end{remark}

\begin{proof}
    We prove it by contradiction. If $\prod^{|\mathbf{L}_p|}_{j=1}r_j^{|\mathbf{L}_p|} > \prod^{n}_{i=1}d_i - \text{max}(d_1,...,d_n)$, assume that $d_n = max (d_1, \cdots, d_n)$ where $d_i$ is the support of observed $X_i$, there are 
    \begin{equation}
        \begin{aligned}
            & \mathcal{T}_{(X_1 \cdots X_n)}
            = \sum_{\mathbb{P}(X_1 \cdots X_{n-1})}  \mathbb{P}(X_1|X_1  \cdots X_{n-1}) \cdots \mathbb{P}(X_n|X_1  \cdots X_{n-1}) \mathbb{P}(X_1  \cdots X_{n-1})\\
            & = \sum^{\prod^{n-1}_{j=1}d_j}_{i = 1} \mathbf{u}^{(i)}_1 \otimes \cdots \otimes \mathbf{u}^{(i)}_1,
        \end{aligned}
    \end{equation}

    which is a smaller rank-one decomposition than $\mathbf{L}_p$ with support $r^{|\mathbf{L}_p|}$. According to the definition of tensor rank and the graphical criteria of tensor rank, we have $\mathrm{Rank}(\mathcal{T}_{(X_1 \cdots X_n)}) = \prod^{n}_{i=1}d_i - \text{max}(d_1,...,d_n)$. That is, no matter whether the conditional independent relations hold given $\mathbf{L}_p$, the rank of tensor still be the dimension of the support of $\mathbb{P}(X_1 \cdots X_{n-1})$. It means that the CI relations can not be detected.

\end{proof}

In other words, if the support of the conditional set is more than the dimension of observed variables, then the minimal rank-one decomposition of the joint distribution will lead to $\mathbb{P}(X_1, \cdots, X_n) = \sum_{\{X_1, \cdots, X_{n-1}\}} \mathbb{P}(X_1, \cdots, X_{n}|X_1, ..., X_{n-1})P(X_1, \cdots, X_{n-1})$. Thus, if the increasing of latent state space is less than the sum of tensor dimensions, the CI relations among latent variable are testable. Due to assuming that $d > r$, the CI relations among latent variables are generally testable when the causal structure is sparse.

\section{Discussion with the Hierarchical Structures}

Actually, our result can be extended to a specific hierarchical structure, by constraining the structure of hidden variables. For instance, consider a hierarchical structure in which each latent variable is required to have at least three pure children (whether latent or observed) and one additional neighboring variable. An illustration of this type of structure is provided in Fig. \ref{fig:hierarchical}. Assume that all latent variables have the same dimension of support, and that this dimension is smaller than that of the observed variables. Under these conditions, causal clusters at the bottom level can still be identified, as demonstrated by Proposition 2. For instance, the sets $\{X_1, X_2, X_3\}$, $\{X_4, X_5, X_6\}$, $\{X_7, X_8, X_9\}$, and $\{X_{10}, X_{11}, X_{12}\}$ are recognized as four distinct causal clusters. The pure measured variables from each cluster can act as surrogates for their corresponding latent parents, allowing the causal cluster learning procedure to be repeated. For example, if $X_1$ serves as the surrogate for $L_2$, and ${X_4, X_7, X_{10}}$ as surrogates for ${L_3, L_4, L_5}$, then $\{L_2, L_3, L_4, L_5\}$ can be identified as a cluster according to the graphical criteria of tensor rank. Thus, the specific hierarchical structure is identifiable by designing the proper search algorithm making use of the tensor rank condition. We will explore these results in future works.

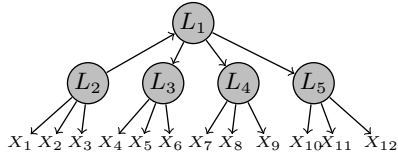
\begin{figure}[htp]
    \centering
    		\begin{tikzpicture}[scale=1.0, line width=0.5pt, inner sep=0.2mm, shorten >=.1pt, shorten <=.1pt]
		\draw (2.4, 2.4) node(L0) [circle,inner sep=0pt, fill=gray!50,draw] {{\footnotesize\,$L_1$\,}};

		\draw (1.0, 1.6) node(L1) [circle, inner sep=0pt, fill=gray!50,draw] {{\footnotesize\,$L_2$\,}};
		\draw (2.0, 1.6) node(L2) [circle,inner sep=0pt,  fill=gray!50,draw] {{\footnotesize\,$L_3$\,}};
		\draw (3.0, 1.6) node(L3) [circle,inner sep=0pt,fill=gray!50,draw] {{\footnotesize\,$L_4$\,}};
		\draw (4.0, 1.6) node(L4) [circle,inner sep=0pt,fill=gray!50,draw] {{\footnotesize\,$L_5$\,}};

		%
		\draw[->] (L1) -- (L0) node[pos=0.5,sloped,above] {};
		\draw[->] (L0) -- (L2) node[pos=0.5,sloped,above] {};
		\draw[->] (L0) -- (L3) node[pos=0.5,sloped,above] {};
		\draw[->] (L0) -- (L4) node[pos=0.5,sloped,above] {};

		\draw (0.1, 0.8) node(X1) [] {{\tiny\,$X_{1}$\,}};
		\draw (0.5, 0.8) node(X2) [] {{\tiny\,$X_2$\,}};
		\draw (0.9, 0.8) node(X3) [] {{\tiny\,$X_3$\,}};
		\draw (1.3, 0.8) node(X4) [] {{\tiny\,$X_4$\,}};
		\draw (1.7, 0.8) node(X5) [] {{\tiny\,$X_5$\,}};
		\draw (2.1,0.8) node(X6) [] {{\tiny\,$X_{6}$\,}};
		\draw (2.5,0.8) node(X7) [] {{\tiny\,$X_{7}$\,}};
		\draw (2.9,0.8) node(X8) [] {{\tiny\,$X_{8}$\,}};
		\draw (3.4,0.8) node(X9) [] {{\tiny\,$X_{9}$\,}};
		\draw (3.9,0.8) node(X10) [] {{\tiny\,$X_{10}$\,}};
		\draw (4.3,0.8) node(X11) [] {{\tiny\,$X_{11}$\,}};
		\draw (4.9,0.8) node(X12) [] {{\tiny\,$X_{12}$\,}};
		\draw[->] (L1) -- (X1) node[pos=0.5,sloped,above] {};
		\draw[->] (L1) -- (X2) node[pos=0.5,sloped,above] {};
		\draw[->] (L1) -- (X3) node[pos=0.5,sloped,above] {};
		\draw[->] (L2) -- (X4) node[pos=0.5,sloped,above] {};
		\draw[->] (L2) -- (X5) node[pos=0.5,sloped,above] {};
		\draw[->] (L2) -- (X6) node[pos=0.5,sloped,above] {};
		\draw[->] (L3) -- (X7) node[pos=0.5,sloped,above] {};
		\draw[->] (L3) -- (X8) node[pos=0.5,sloped,above] {};
		\draw[->] (L3) -- (X9) node[pos=0.5,sloped,above] {};
		\draw[->] (L4) -- (X10) node[pos=0.5,sloped,above] {};
		\draw[->] (L4) -- (X11) node[pos=0.5,sloped,above] {};
		\draw[->] (L4) -- (X12) node[pos=0.5,sloped,above] {};
	\end{tikzpicture}~~~
    \caption{Example of hierarchical structure.}
    \label{fig:hierarchical}
\end{figure}

\section{Practical Estimation of Tensor Rank}

Here, we describe the practical implementation of tensor rank estimation. To alleviate the problem of local optima during tensor decomposition, we initiate the process from multiple random starting points. We then perform the tensor decomposition from each of these points and subsequently select the decomposition that yields the smallest reconstruction error as our final result. The procedure is summarised in the Algorithm \ref{alg:algorithm_pra}.

 \begin{algorithm}[h]
    \caption{Practial tensor rank estimation}
    \label{alg:algorithm_pra}
    \textbf{Input}: $ \mathbf{X}_p = \{X_i, X_j,  ..., X_n\}$, iteration number $n$, threshold $\varepsilon_r$ and tested rank $r$\\
    \textbf{Output}: Boolean of rank test
    \begin{algorithmic}[1] 
    	\STATE Initialize the minimal reconstructed error $E_{min} = +\infty$;
        \FOR{$i < n$ or $E_{min} \leq \varepsilon$}
        \STATE $i\leftarrow i+1$;
        \STATE $\tilde{\mathcal{T}} \leftarrow$ \text{non-negative-parafac}$(\mathcal{T}_{(\mathbf{X}_p)}, r)$;
        \STATE E $\leftarrow  \Vert \mathcal{T}_{(\mathbf{X}_p)}-$  $\tilde{\mathcal{T}}$ $\Vert$;
        \IF{$E \leq E_{min}$}
        \STATE $E_{min} = E$;
        \ENDIF
        \ENDFOR
        \STATE p-val $\leftarrow$ Chi-Square($\mathcal{X}^2$)-Test(vec($\mathcal{T}$), vec($\tilde{\mathcal{T}}$));
        \IF{p-val < $\varepsilon$}
        \STATE \textbf{return True};
        \ENDIF
        \STATE \textbf{return False}.
    \end{algorithmic}
\end{algorithm}

Beside, in the PC-TENSKR-RANK algorithm, to further identify the V-structure among latent variables, the statistic independent test among latent variables is required, which can be tested by following.

\begin{remark}[Statistic independent between latent variables]
Give the measured variable $X_i$ and $X_j$ of latent variable $L_i$ and $L_j$, then $L_i \Vbar L_j$ if $\text{Rank}(\mathbb{P}(X_i, X_j)) = 1$.
\end{remark}

\begin{proof}
    Since $L_i \Vbar L_j$, we have $X_i \Vbar X_j$ also hold in the causal graph. We have $\mathbb{P}(X_i X_j) = \mathbb{P}(X_i) \mathbb{P}(X_j) = \mathcal{T}_{(X_i)} \otimes \mathcal{T}_{(X_j)}$. According to the definition of tensor rank, $\text{Rank}(\mathcal{T}_{(X_i, X_j)}) = 1$.
\end{proof}

\subsection{Goodness of fit test for CI test among latent variables}

Although the proposed tool is theoretically testable, it still is an approximate estimation of tensor rank by heuristic-based CP decomposition in practice. How to consider a more robust approach to examine the tensor rank still be an open problem in the related literature. It significantly restricts the application scope and performance of our structure learning algorithm. However, we want to emphasize that the main contribution of our work is building the graphical criteria of tensor rank and using it to answer the identification of causal structure in a discrete LVM model. To the best of our knowledge, this is the first algorithm that can identify the causal structure of discrete latent variables without structural constraints, including the measurement model and structure model. 

Next, we will show that how the CI relations among latent variables can be distinguished by testing the goodness of fit test. Consider a four latent variables structure as shown in Fig. \ref{fig:CItest}, a chain structure among four latent variables in which each latent variable has two pure observed variables. The data generation process follows the discret LSM model (see the description in the simulation studies section) and the sample size is $50k$. We check the CI relations between any $L_i, L_j \in \{L_1, L_2, L_3, L_4\}$ given $L_p \in \{L_1, L_2, L_3, L_4\} \setminus \{L_i, L_j\}$. The results are reported in the right side of \ref{fig:CItest}, in which each red point represents a CI test result, e.g., the second point in the graph represent to test $L_2 \Vbar L_4|L_1$ by examining $\mathrm{Rank}(\mathcal{T}_{(X_3, X_7, X_1, X_2)}) =2$. One can see that the p-value returned by the goodness of fit test is lower than 0.05, which means that we will tend to reject the null hypothesis, i.e., $\mathrm{Rank}(\mathcal{T}_{(X_3, X_7, X_1, X_2)}) \neq 2$. By sorting all CI test results, one can see that the true CI relations can be identified by setting the significant level to be 0.05.

\begin{figure}[htp]
    \centering
    \includegraphics[width=0.35\textwidth]{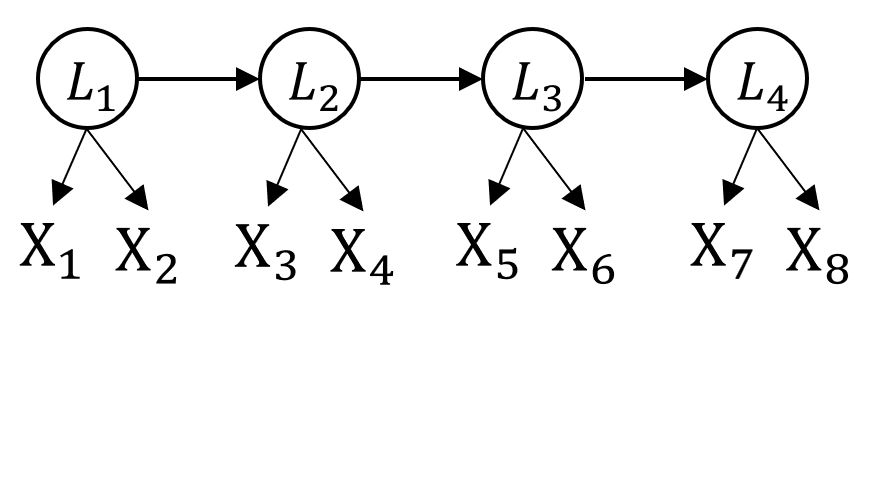}~~~~~~~~
    \includegraphics[width=0.45\textwidth]{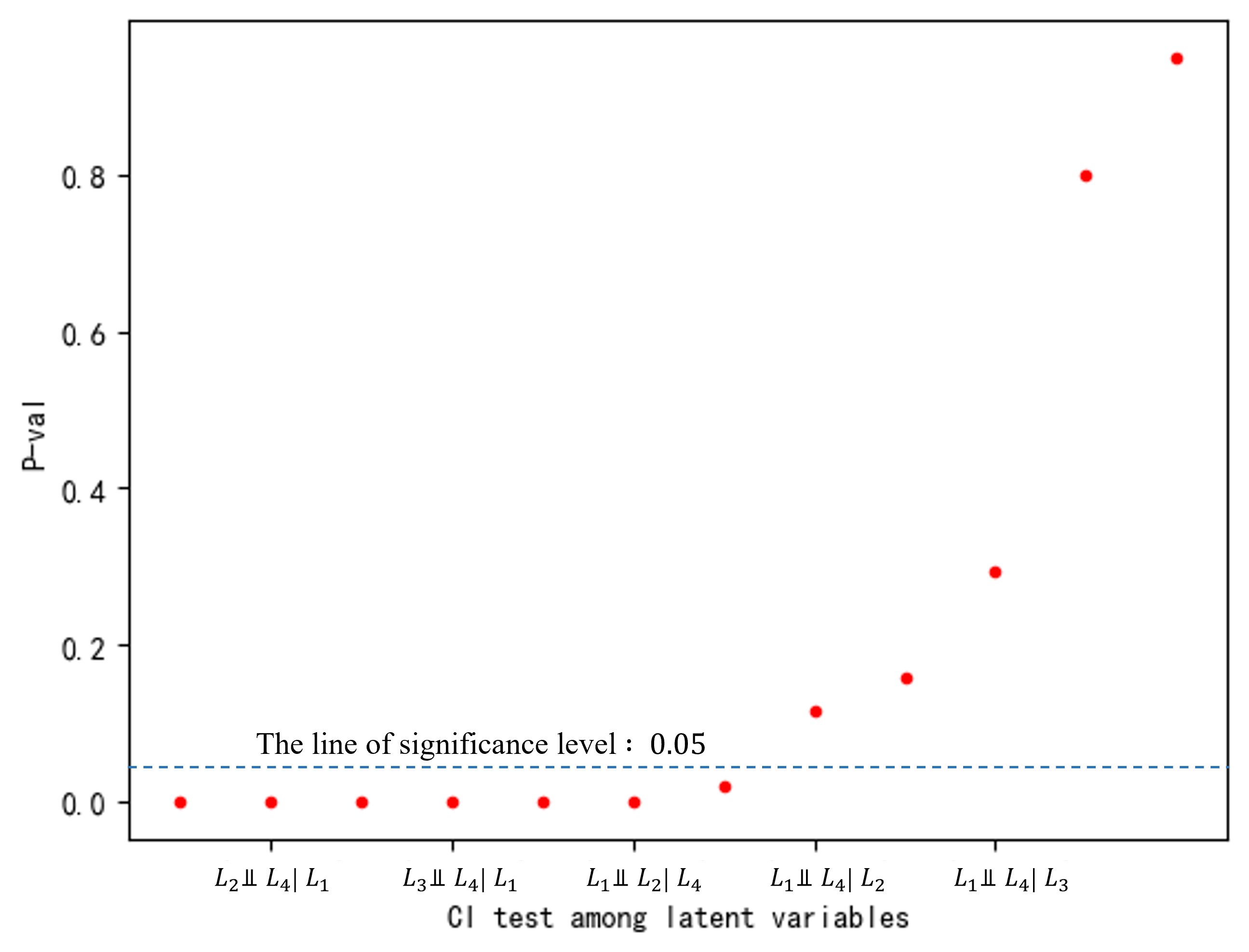}
    \caption{Goodness of fit test for conditional independent test among latent variables}
    \label{fig:CItest}
\end{figure}


\section{More Experimental Results}

In this section, we provide the information required to reproduce our results reported in the main text. We further conduct additional simulation experiments to validate the efficiency of the proposed algorithm (Appendix \ref{sec_dif}).

We first give a details definition of evaluation metrics. Specifically, the performance of causal cluster is evaluated by following scores for the output model $G_{out}$ from each algorithm, where the true graph is labelled $G$:

\begin{itemize}
    \item \textbf{latent omission}, the number of latents in $G$ that do not appear in $G_{out}$ divided by the total number of true latents in $G$;
    \item \textbf{latent commission}, the number of latents in $G_{out}$ that could not be mapped to a latent in $G$ divided by the total number of true latents in $G$;
    \item \textbf{mismeasurement}, the number of observed variables in $G_{out}$ that are measuring at least one wrong latent divided by the number of observed variables in $G$;
\end{itemize}

Moreover, we use the following metric to evaluate the performance of causal structure among latent variables:

\begin{itemize}
    \item \textbf{edge omission (EO)}, the number of edges in the structural model of $G$ that do not appear in $G_{out}$ divided by the possible number of edge omissions;
    \item \textbf{edge commission (EC)}, the number of edges in the structural model of $G_{out}$ that do not exist in $G$ divided by the possible number of edge commissions;
    \item \textbf{orientation omission (OO)}, the number of arrows in the structural model of $G$ that do not appear in $G_{out}$ divided by the possible number of orientation omissions in $G$;
\end{itemize}

These evaluation indicators are derived from \cite{Silva-linearlvModel}.

Next, we give a concrete implementation of baseline methods.

\textbf{BayPy}: The Bayesian Pyramid Mode (BayPy) is a discrete latent variable structure learning method that assumes the latent structure is a pyramid structure and the latent variable is binary. We use the implementation of \cite{gu2023bayesian}. We set the iteration parameter to 1500 and set the search upper bound of the number of latent variables to 5.

\textbf{LTM}: The latent tree model, is a classic method for learning gaussian or binary latent tree structure. We use the implementation from \cite{choi2011latenttree}. Specifically, we use the Recursive Grouping (RG) Algorithm in \cite{choi2011latenttree} (since it has better performance), and use the discrete information distance to learn the structure of the discrete LSM model.

\textbf{BPC}: The Building Pure Cluster (BPC) algorithm \cite{Silva-linearlvModel} is a classic causal discovery method for the linear latent variable model. We use the implementation from the Tetrad Project package \footnote{https://github.com/cmu-phil/tetrad}.

\textbf{Non-negative CP decomposition}: To perform non-negative CP decomposition in our algorithm, we use the implementation from the python package, \textit{tensorly} \footnote{https://github.com/tensorly/tensorly}, and set the maximum iteration parameter to 1000, the cvg criterion parameter to "rec\_error".

\textbf{Data Generation}: To generate the probability contingency table or conditional probability contingency table for latent variables and observed variables in our simulation studies, we use the implementation from the python package, \textit{pgmpy} \footnote{https://github.com/pgmpy/pgmpy}. The package provides the function to generate observed data according to the probability contingency table.

Finally, we aim to demonstrate the correctness of our methods in handling cases involving latent variables with varying state spaces. Specifically, we consider the structure model with star structure and the $SM_3$ structure, and the measurement model with $MM_1$. The data generation process follows the description in the main context and we let the support of latent variable $L_1, L_2$ to be $\{0, 1\}$ and the support of latent variable $L_3, L_4$ to be $\{0, 1, 2\}$, for simulating the case that latent variable has different latent support. Besides, the support of the observed variable is $\{0,1,2,3\}$. In our implementation, the significant levels for testing the rank of the matrix and tensor are set to 0.005 and 0.05, respectively. The coefficient of probability contingency tables is generated randomly, ranging from $[0.1,0.8]$, constraining the sum of them to be one.

The results are presented in Table \ref{Tab_3} and Table \ref{tab_4}. The performance of our method appears superior in scenarios where latent variables have differing state spaces. This improvement is attributed to the reduction in the frequency of higher-order tensor rank testing, facilitated by evaluating the consistency of ranks such as $\mathrm{Rank}(\mathcal{T}{(X_i, X_j)}) = \mathrm{Rank}(\mathcal{T}{(X_i, X_k)}) = \mathrm{Rank}(\mathcal{T}_{(X_k, X_j)})$. In contrast, the BayPy approach underperforms in latent structure learning, likely due to its assumption of a pyramid structure with a top-down directionality and no peer-level connections among latent variables. Additionally, the Latent Tree Model (LTM) shows weaker performance in cluster learning, possibly because it was originally designed to handle only binary discrete variables.

\begin{center}
\begin{table*}[htp!]
	\small
	\center \caption{Results on learning pure measurement models in the case that latent variables have different state spaces. Lower value means higher accuracy.}
	\label{tab:mixed distribution}
	\resizebox{1.0\textwidth}{!}{
	\begin{tabular}{cccccccccccccc}
		\hline  \multicolumn{2}{c}{} &\multicolumn{4}{c}{\textbf{Latent omission}} & \multicolumn{4}{c}{\textbf{Latent commission}} & \multicolumn{4}{c}{\textbf{Mismeasurements}}\\
		\multicolumn{2}{c}{Algorithm} & \textbf{Our} & BayPy  & LTM & BPC & \textbf{Our} & BayPy & LTM & BPC & \textbf{Our} & BayPy & LTM & BPC \\
		\hline 
		 & 5k & 0.09(3) & 0.20(4) & 0.23(5) & 0.96(10) 
        		 & 0.00(0) & 0.20(4)& 0.00(0) & 0.00(0) 
        		 & 0.03(1) & 0.18(4) & 0.23(5) & 0.00(0) \\
		{\emph{$Star + MM_1$}} &10k &  0.06(2) &0.17(3) & 0.13(4) & 0.96(10) 
                		& 0.00(0)&0.15(3) & 0.00(0) & 0.00(0) 
                		& 0.00(0) & 0.15(3)& 0.13(4) & 0.00(0) \\
		&50k & 0.00(0) & 0.13(3) & 0.10(3) & 0.93(10) 
        		& 0.00(0) & 0.15(3)& 0.00(0) & 0.00(0) 
        		& 0.00(0) & 0.13(3) & 0.10(3) & 0.00(0) \\
		\hline 
		 & 5k 
		 & 0.12(3) & 0.33(7) & 0.55(10) & 0.96(10) 
		 & 0.00(0) & 0.30(6) & 0.00(0) & 0.00(0)
		 & 0.06(2) & 0.30(7) & 0.55(10) & 0.00(0)  \\
		{\emph{$SM_3 + MM_1$}} &10k 
		& 0.06(2) & 0.26(5) & 0.50(10) & 0.93(10) 
		& 0.00(0) & 0.20(5) & 0.00(0) & 0.00(0) 
		& 0.00(0) & 0.19(5) & 0.50(10) & 0.00(0)\\
            &50k
		& 0.03(1) & 0.20(4) & 0.50(10) & 0.93(10)
		& 0.00(0) & 0.15(4) & 0.00(0) & 0.00(0)
		& 0.00(0) & 0.11(4) & 0.50(10)  & 0.00(0)\\
		\hline 
	\end{tabular}}
\label{Tab_3}
\end{table*}
\end{center}

\begin{center}
\begin{table*}[htp!]
	\small
	\center \caption{Results on learning the structure model in the case that latent variables have different state spaces. The symbol '-' indicates that the current method does not output this information. Lower value means higher accuracy.}
	\label{tab:mixed distribution}
	\resizebox{1.0\textwidth}{!}{
	\begin{tabular}{cccccccccccccc}
		\hline  \multicolumn{2}{c}{} &\multicolumn{4}{c}{\textbf{Edge omission}} & \multicolumn{4}{c}{\textbf{Edge commission}} & \multicolumn{4}{c}{\textbf{Orientation omission}}\\
		\multicolumn{2}{c}{Algorithm} & \textbf{Our} & BayPy  & LTM & BPC & \textbf{Our} & BayPy & LTM & BPC & \textbf{Our} & BayPy & LTM & BPC \\
		\hline 
		 & 5k & 0.00(0) & 1.00(10) & 0.26(8) & 1.00(10) 
        		 & 0.10(1) & 0.00(0)& 0.00(0) & 0.00(0) 
        		 & 0.10(1) & 1.00(10) & -- & 1.00(0) \\
		{\emph{Star+$MM_1$}} &10k &  0.00(0) &1.00(10) & 0.23(6) & 1.00(10) 
                		& 0.00(0) &0.02(1) & 0.0(0) & 0.00(0) 
                		& 0.00(0) & 1.00(10) & -- & 1.00(0) \\
		&50k & 0.00(0) & 1.00(10) & 0.10(3) & 1.00(10) 
        		& 0.00(0) & 0.00(0)& 0.00(0) & 0.00(0)
        		& 0.00(0) & 1.00(10) & -- & 1.00(0) \\
		\hline 
		 & 5k 
		 & 0.15(3) & 1.00(10) & 0.16(6) & 1.00(10) 
		 & 0.10(1) & 0.00(0)& 0.00(0) & 0.00(0)
		 & 0.00(0) & 0.00(0) & -- & 0.00(0)  \\
		{\emph{$SM_3 + MM_1$}} &10k 
		& 0.05(1) & 1.00(10) & 0.13(4) & 1.00(10) 
		& 0.01(1) & 0.00(0)& 0.00(0) & 0.00(0) 
		& 0.00(0) & 0.00(0) & -- & 0.00(0) \\
            &50k
		& 0.00(0) & 1.00(10) & 0.10(3) & 1.00(10)
		& 0.00(0) & 0.00(0)& 0.00(0) & 0.00(0)
		& 0.00(0) & 0.00(0) & -- & 0.00(0) \\
		\hline 
	\end{tabular}}
\label{tab_4}
\end{table*}
\end{center}

\section{More Details of Real-world Dataset}

For the political efficacy data \cite{aish1990panel}, by identifying the support of latent variable to be two, one can identify the causal cluster $\{\text{'NOCARE'},\text{'TOUCH'}, \text{'INTEREST'}\}$, $\{\text{'NOSAY'},\text{'VOTING'}, \text{'COMPLEX'}\}$. These clusters correspond to the latent variables 'EFFICACY' and 'RESPONSE', respectively. In our implementation, we set the significance level to 0.0015. The result is aligned with the ground truth provided in \cite{joreskog1996lisrel}.



For the depress dataset, the ground truth structure \cite{joreskog1996lisrel} includes three latent factors: Self-esteem, Depression, and  Impulsiveness, with the corresponding observed clusters:
\begin{itemize}
    \item $\{\text{'SELF1'},\text{'SELF2'}, \text{'SELF3'}, \text{'SELF4'}, \text{'SELF5'}\}$;
    \item  $\{\text{'DEPRES1'}, \text{'DEPRES2'},\text{'DEPRES3'}, \text{'DEPRES4'}\}$;
    \item $\{\text{'IMPULS1'}, \text{'IMPULS2'}, \text{'IMPULS3'}\}$.
\end{itemize}

In our implementation, we utilize a bootstrapping resampling approach to enhance the statistical properties of the data. Following the extended results outlined in Appendix \ref{sec_dif}, we first identify the dimension of support for the factors Self-esteem and Depression as four, and set the dimension of support for Impulsiveness at three. The significance level is set to 0.025. The results of our algorithm are presented as follows.

\begin{itemize}
    \item $\{\text{'SELF1'},\text{'SELF2'}, \text{'SELF3'}, \text{'SELF5'}\}$;
    \item  $\{\text{'DEPRES1'},\text{'DEPRES3'}, \text{'DEPRES4'}\}$;
    \item $\{\text{'IMPULS1'}, \text{'IMPULS2'}, \text{'IMPULS3'}\}$.
\end{itemize}

One can see that our algorithm can learn three causal clusters corresponding to three latent factors. Such a result shows that our method finds all latent variables from the depress data. In the latent structure learning process, the PC-TENSOR-RANK algorithm outputs the fully connected graph of the three latent factors, indicating the absence of conditional independence (CI) relations between them. One possible reason is there may be more potential factor interaction structure \cite{salles2024indirect}.

\end{document}